\newcommand{\corr}{(\Letter)}
\pgfplotsset{compat=1.18}
\DeclareMathOperator{\Fnn}{\mathcal{F}}
\DeclareMathOperator{\Expec}{\mathbb{E}}
\DeclareMathOperator{\Gaussian}{\mathcal{N}}
\DeclareMathOperator*{\Maximize}{Maximize}
\DeclareMathOperator{\Proba}{\mathbb{P}}
\DeclareMathOperator{\Trace}{Tr}
\DeclareMathOperator{\Var}{Var}
\DeclareMathOperator{\Gap}{GAP}
\DeclareMathOperator{\Convo}{\mathcal{C}}
\DeclareMathOperator{\Maxp}{\mathcal{M}}
\DeclareMathOperator{\Activ}{\sigma}
\DeclareMathOperator{\Model}{[\mathbf{CNN}]}
\newcommand{\gap}[1]{\Gap\left(#1\right)}
\newcommand{\fnn}[1]{\Fnn\left(#1\right)}
\newcommand{\condexpec}[2]{\mathbb{E}\left[#1\middle|#2\right]}
\newcommand{\defeq}{\vcentcolon =}
\newcommand{\expec}[1]{\Expec\left[#1\right]}
\newcommand{\gaussian}[2]{\Gaussian\left(#1,#2\right)}
\newcommand{\Indic}{\mathds{1}}
\newcommand{\indic}[1]{\Indic_{#1}}
\newcommand{\norm}[1]{\left\lVert#1\right\rVert}
\newcommand{\Posint}{\mathbb{N}^{\star}}
\newcommand{\proba}[1]{\Proba\left (#1\right )}
\newcommand{\Reals}{\mathbb{R}}
\newcommand{\trace}[1]{\Trace\left(#1\right)}
\newcommand{\var}[1]{\Var\left(#1\right)}
\newcommand{\convo}[1]{\Convo\left(#1\right)}
\newcommand{\maxp}[1]{\Maxp\left(#1\right)}
\newcommand{\activ}[1]{\Activ\left(#1\right)}
\newcommand{\model}[1]{\Model\left(#1\right)}
\newcommand{\preactiv}[2]{h_{#2}\left(#1\right)}
\newcommand{\abrev}[1]{[\mathbf{#1}]}
\newcommand{\Img}{\bm{\xi}} 
\newcommand{\Idt}{\mathrm{Id}} 
\newcommand{\abf}{\mathbf{a}}
\newcommand{\rbf}{\mathbf{r}}
\newcommand{\xbf}{\mathbf{x}}
\newcommand{\Abf}{\mathbf{A}}
\newcommand{\Bbf}{\mathbf{B}}
\newcommand{\Cbf}{\mathbf{C}}
\newcommand{\Dbf}{\mathbf{D}}
\newcommand{\Fbf}{\mathbf{F}}
\newcommand{\Wbf}{\mathbf{W}}
\newcommand{\mbf}{\mathbf{m}}
\newcommand{\ybf}{\mathbf{y}}
\newcommand{\Cam}{\mathbf{\Lambda}} 
\newcommand{\Ishp}{H} 
\newcommand{\Ishpp}{W}
\newcommand{\Ashp}{h}
\newcommand{\Ashpp}{w}
\newcommand{\Mshp}{h'}
\newcommand{\Mshpp}{w'}
\newcommand{\Pshp}{k'}
\newcommand{\Fshp}{k}
\DeclareRobustCommand\onedot{\futurelet\@let@token\@onedot}
\def\@onedot{\ifx\@let@token.\else.\null\fi\xspace}
\def\wrt{w.r.t\onedot}
\begin{document}
\title{CAM-Based Methods Can See through Walls}

\author{Magamed Taimeskhanov\inst{1} \corr \and
Ronan Sicre\inst{2} \and
Damien Garreau\inst{3}}
\institute{
Université Côte d'Azur, Laboratoire J.A. Dieudonn\'e, CNRS, Nice, France \email{magamed.taimeskhanov@etu.univ-cotedazur.fr}
\and
Centrale M\'editerran\'ee, Aix-Marseille Univ., CNRS, LIS, Marseille, France \email{ronan.sicre@lis-lab.fr}
\and
Julius-Maximilians Universit\"at W\"urzburg, Germany \email{damien.garreau@uni-wuerzburg.de}
}

\authorrunning{M. Taimeskhanov et al.}

\tocauthor{Magamed Taimeskhanov, Ronan Sicre, Damien Garreau}
\toctitle{CAM-Based Methods Can See through Walls}

\maketitle              
\begin{abstract}
CAM-based methods are widely-used post-hoc interpretability method that produce a saliency map to explain the decision of an image classification model. 
The saliency map highlights the important areas of the image relevant to the prediction. 
In this paper, we show that most of these methods can incorrectly attribute an important score to parts of the image that the model cannot see. 
We show that this phenomenon occurs both theoretically and experimentally.
On the theory side, we analyze the behavior of GradCAM on a simple masked CNN model at initialization. 
Experimentally, we train a VGG-like model constrained to not use the lower part of the image and  nevertheless observe positive scores in the unseen part of the image. 
This behavior is evaluated quantitatively on two new datasets. 
We believe that this is problematic, potentially leading to mis-interpretation of the model's behavior. 
\keywords{Interpretability \and Computer Vision \and Convolutional Neural Networks \and Class Activation Maps.}
\end{abstract}

\section{Introduction}
\label{sec:introduction}

The recent advances of machine learning pervade all applications, including the most critical.  
However, deep learning models intrinsically possess many parameters, have complicated architectures, and rely on many non-linear operations, preventing the users to get a good grasp of the rationale behind particular decisions. 
These models are often called ``black boxes'' for these reasons \citep{benitez_et_al_1997}.  
In this respect, there is a growing need for interpretability of the models that are used, which gave birth to the field of eXplainable AI (XAI).
When the model to explain is already trained, our main topic of interest, this is often called post-hoc interpretability~\citep{lipton18,zhang2021survey,linardatos_et_al_2021}. 

In the specific case of image classification, the explanations provided to the user often take the form of a saliency map superimposed to the original image, for instance simply looking at the gradient with respect to the input of the network \citep{simonyan_et_al_2013}. 
The message is simple: the areas highlighted by the saliency maps are used by the network for the prediction. 
When the first layers of the network are convolutional layers \citep{fukushima_1980}, one can take advantage of this and look at the activations of the filters corresponding to the class prediction that we are trying to explain. 
Indeed, these first layers act like a bank of filters on the input image, and the degree to which they are activated gives us information on the behavior of the network. 
Thus the first layers possess a certain degree of interpretability, even though it can be challenging to aggregate the information coming from different filters.
In any case, the next layers generally consist in a fully-connected neural network, thus suffering from the same caveats as other models. 
In addition, this second part of the network is equally important for the prediction, but is not taken into account in the explanations we provide if we simply look at activation values.

To solve this problem, a natural idea is to weight each activation map depending on how the second part of the network uses it. 
In the case of a single additional layer, this is called \emph{class activation maps} \citep[CAM,][]{zhou_et_al_2016}. 
The methodology was quickly generalized by \cite{selvaraju_et_al_2017}, using the \emph{average gradient} values of the subsequent layers instead, giving rise to GradCAM, arguably one of the most popular posthoc interpretability method for CNNs. 
Many extensions are proposed in the following years, we list them in Appendix~\ref{app:defs} and refer to \cite{zhang_et_al_2023} for a recent survey. 
Without being too technical, for all these methods, the explanations provided consist in a weighted average of the activation maps.

A close inspection of each of these methods reveals that the coefficient associated to each individual map is global, in the sense that the same coefficient is applied to the whole map. 
The main message of this paper is that this can be problematic, since different parts of the activation map may be used differently by the subsequent layers. 
Worse, \textbf{some parts may even be unused by the subsequent network and still highlighted in the final explanation} (see Figure~\ref{fig:gradcam-fail-panel}). 
Thus we believe that, while giving apparently more-than-satisfying results in practice, CAM-based methods should be used with caution, keeping in mind that some parts of the image may be highlighted whereas they are not even seen by the network. 


\begin{figure}[t]
  \floatsetup{heightadjust=all, valign=c}
  \begin{floatrow}
    \ffigbox[\FBwidth]{
      \begin{adjustbox}{valign=c}
        \includegraphics[width=0.34\textwidth]{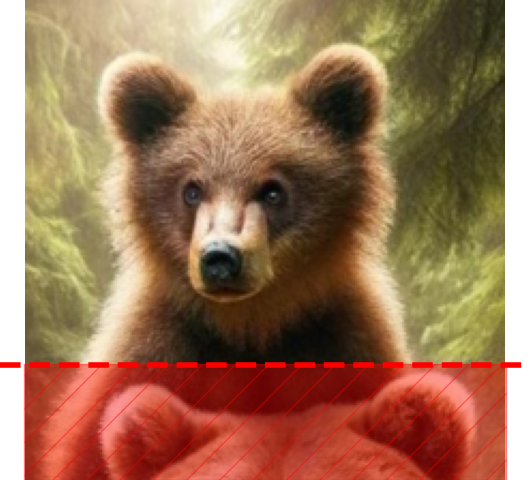}
      \end{adjustbox}
    }{
   
      \label{fig:gradcam-fail-img}
    }
 
    \ffigbox[\FBwidth]{
      \begin{adjustbox}{valign=c}
        \includegraphics[width=0.14\textwidth]{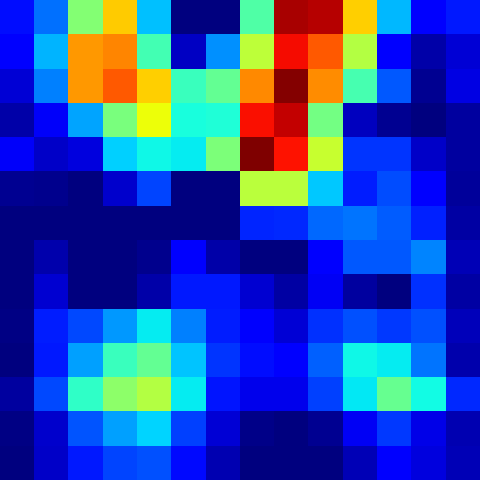}
      \end{adjustbox}
    }{
     
      \label{fig:gradcam-fail}
    }
    \hspace*{2.4mm} 
    \ffigbox[\FBwidth]{
      \begin{adjustbox}{valign=c}
        \includegraphics[width=0.34\textwidth]{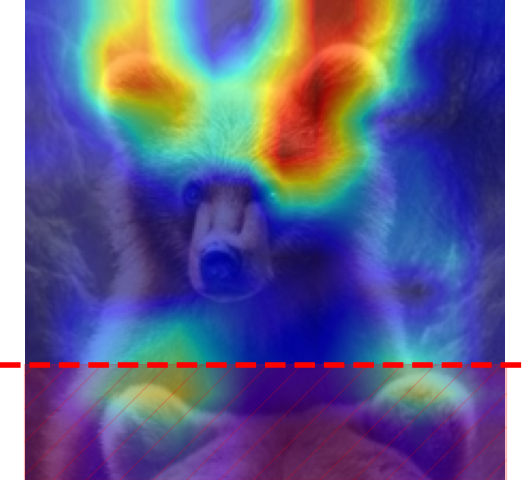}
      \end{adjustbox}
    }{
      
      \label{fig:gradcam-upscale-fail}
    }
  \end{floatrow}
\caption{\label{fig:gradcam-fail-panel}Example of GradCAM failure on a VGG-like model trained on the ImageNet dataset (masked $\abrev{VGG}$, see Figure~\ref{fig:vgg}). 
\emph{Left:} original image; 
\emph{Middle:} GradCAM explanation before up-sampling; \emph{Right:} original image with GradCAM explanation overlayed as a heatmap. The network does not have access to the red part of the image, \textbf{but GradCAM does highlight some pixels in this area.}
}
\end{figure}


\subsection{Related work}

This paper is inspired by a line of recent works concerned with the reliability of saliency maps claiming that solely relying on the visual explanation provided by a saliency map can be misleading \citep{kindermans2019reliability, ghorbani2019interpretation}. 
\cite{ghorbani2019interpretation} introduces a method for altering the input data with imperceptible perturbations which do not change the predicted label, yet generating different saliency maps.
On the other hand, \cite{kindermans2019reliability} shows that numerous saliency methods generate incorrect scores for the input features when the model prediction is invariant to translation of the input data by a constant. 
It is important to note that neither of these studies specifically challenges the reliability of CAM-based methods.

This perspective on saliency maps is supported by the work of \cite{adebayo2018}, which introduces a randomization-based sanity check indicating that some existing saliency methods are independent of both the model and the data. 
We note that GradCAM passes the sanity checks proposed by \cite{adebayo2018}. 
\cite{draelos_carin_2021}, proposing HiResCAM, are less positive regarding GradCAM pointing out, as we do, that the use of a global coefficient can produce positive explanations where there should not be. 
Compared to our work, they provide few theoretical explanations and perform experiments on model which are not using parts of the input image. 
Posthoc interpretability methods in the image realm (not specific to CNN architectures) have been investigated by other works such as \cite{garreau_mardaoui_2021} which looked into LIME for images \citep{ribeiro_et_al_2016}.

Taking another angle, \cite{heo2019} directly attacks the reliability of GradCAM saliency maps by adversarial model manipulation, \emph{i.e.}, fine-tuning a model with the purpose of making GradCAM saliency maps unreliable. 
This is achieved by using a specific loss function tailored to this effect. 
Our approach is different, as we simply force a strong form of sparsity in the model's parameters, not targeting a specific interpretability method.


\subsection{Organization of the paper}

We start by looking at GradCAM in Section~\ref{sec:maths-description}. 
For a given simple CNN architecture described in Section~\ref{sec:setting}, we derive closed-form expressions for its explanations in Section~\ref{sec:computation}. 
Leveraging these expressions, we prove in Section~\ref{sec:theory} that GradCAM explanations are positive at initialization, even though a large part of the weights are set to zero. 

In Section~\ref{sec:experiments}, we demonstrate experimentally that this phenomenon remains true after training. 
To this extent, we proceed in two steps. 
First, we train to a reasonable accuracy a VGG-like model on ImageNet~\citep{dengimagenet2009} \textbf{which does not see the lower part of input images}, described in Section~\ref{sec:model}.
Then, we create two datasets (Section~\ref{sec:datasets}) consisting in superposition of images of the same class. 
We show experimentally in Section~\ref{sec:results} that \textbf{CAM-based methods applied to this model wrongly highlights a large portion of the lower part of the images}, misleading the user by showing that the lower part is used for the prediction whereas, by construction it is not. 
The code for training our model as well as the datasets are provided as supplementary material. Additionally, the code for all experiments is available
online.\footnote{\url{https://github.com/MagamedT/cam-can-see-through-walls}}
We conclude in Section~\ref{sec:conclusion}. 


\begin{figure}[t]
\centering
\hspace*{-\dimexpr\oddsidemargin+1in\relax}\makebox[\paperwidth]{%
\tikzstyle{node}=[thick,draw=blue,fill=blue!20,circle,minimum size=22]
\tikzstyle{filter}=[thick, draw=blue, fill=blue!20, rectangle, minimum size=22pt, inner sep=0pt]
\begin{tikzpicture}[x=1.4cm,y=1.4cm]
    \def\gap{1.2} 

    \def\Nfilter{4, 4} 
    
      \foreach \N [count=\lay,remember={\N as \Nprev (initially 0);}]
                   in \Nfilter{ 
        \foreach \i [evaluate={\y=(\i/10); \x=\lay*\gap + \i*0.05; \prev=int(\lay-1);}]
                     in {1,...,\N}{ 
                     
        \pgfmathsetmacro\halfN{int(\N/2)}
        \pgfmathtruncatemacro{\iInt}{\i} 
        
        \ifnum \iInt = \halfN 
          \node[filter, dotted, fill=gray!20] (F\lay-\i) at (\x,\y) {};
         \else
            \node[filter] (F\lay-\i) at (\x,\y) {};
        \fi
        
           \ifnum \Nprev>0 
           \ifnum \i = \N
             \draw[thick, dotted, red] (F\prev-\Nprev.north east) -- (F\lay-\N);
             \draw[thick, dotted, red] (F\prev-\Nprev.south east) -- (F\lay-\N);
           \fi
           \fi
        }
      }
      
    \node[above = 0.3 cm of F1-4] {Conv $\mathcal{C}$};
    \node[below = 0.3 cm of F1-1] {
    \begin{tabular}{c}
    $\Abf \in$ \\
    $\Reals^{V \times h \times w}$
    \end{tabular}};
    \node[above = 0.3 cm of F2-4](texte-relu) {ReLU};
    \node[below = 0.3 cm of F2-1] {\begin{tabular}{c}
    $\Bbf \in$ \\
    $\Reals^{V \times h \times w}$
    \end{tabular}};

    \tikzstyle{filter}=[thick, draw=blue, fill=blue!20, rectangle, minimum size=18pt, inner sep=0pt] 
    
    \def\Nmax{4} 
    \def\layer{3}
    \def\PrevSize{4}
          \foreach \N [count=\lay,remember={\N as \Nprev (initially 0);}]
                   in \Nmax{ 
        \foreach \i [evaluate={\y=(\i/10); \x=\lay + \i*0.05 + (\layer-1 + 0.3)*\gap ; \prev=int(\lay-1);}]
                     in {1,...,\N}{ 
                     
        \pgfmathsetmacro\halfN{int(\N/2)}
        \pgfmathtruncatemacro{\iInt}{\i} 
        \ifnum \iInt = \halfN 
          \node[filter, dotted, fill=gray!20] (M\lay-\i) at (\x,\y) {};
         \else
            \node[filter] (M\lay-\i) at (\x,\y) {};
        \fi
        
        \ifnum \i = \N
            \pgfmathsetmacro\PrevLayer{int(\layer-1)}
            \draw[thick, dotted, red] (F\PrevLayer-\PrevSize.north east) -- (M\lay-4);
            \draw[thick, dotted, red] (F\PrevLayer-\PrevSize.south east) -- (M\lay-4);
        \fi
        }
      }
    
    \node[above = 0.34 cm of M1-4](text-maxpool) {max pool $\mathcal{M}$};
    \node[below = 0.34 cm of M1-1] {\begin{tabular}{c}
    $\Cbf  \in$ \\
    $\Reals^{V \times h' \times w'}$
    \end{tabular}};
    
    \node[thick, draw=blue, fill=blue!20, rectangle, minimum width=1pt, minimum height=50pt, right = \gap-0.5 of M1-4] (flat1) {};
    \draw[thick, dotted, red] (M1-4.north east) -- (flat1.north west);
    \draw[thick, dotted, red] (M1-4.south east) -- (flat1.south west);
    
    \node[above = 0.19 cm of flat1] {flatten};
    \node[below = 0.19 cm of flat1] {\begin{tabular}{c}
    $\Cbf' \in$ \\
    $\Reals^{Vh'w'}$
    \end{tabular}};

    \node[thick, draw=blue, fill=blue!20, rectangle, minimum width=1pt, minimum height=20pt, right = of flat1] (fnn1) {$\Fnn$};
    \draw[thick, red, ->] (flat1) -- (fnn1);
    
    \node[right = of fnn1] (score) {$y_c \in \Reals$};
    \draw[thick, red, ->] (fnn1) -- (score);

    \node[thick, draw=black, fill=black!20, rectangle, minimum width=40pt, minimum height=40pt, left = \gap-0.5 of F1-2] (img) {};
    \draw[thick, red, ->] (img) -- (F1-2);
    
    \node[below = 0.1 cm of img] {\begin{tabular}{c}
    $\Img \in$ \\
    $[0,1]^{H \times W}$
    \end{tabular}};
    \node[above = 0.2 cm of img](texte-img) {Input};


    \draw[decorate,decoration={brace, raise=0.2cm}, thick] (texte-img.north west) -- (texte-relu.north east) node[midway, above, yshift=0.4cm] {$g \colon \Reals^{H \times W} \to \Reals^{V \times h \times w}$};
    \draw[decorate,decoration={brace, raise = 0.8cm}, thick, above=5cm] (texte-relu.north east) -- ([yshift=0.9cm]score.north east) node[midway, yshift=1.cm] {$f \colon \Reals^{V \times h \times w} \to \Reals$};

\end{tikzpicture}} \hspace*{-\paperwidth}    
\caption{\label{fig:model}The model used for the derivation of feature importance scores, $\Model$. The number of filters in the convolutional layer~$\mathcal{C}$ is $V \in \Posint$. The size of the max pooling filters $\Pshp \in \Posint$ is implicitly defined such that $(\Mshp, \Mshpp) = \frac{1}{\Pshp} (\Ashp,\Ashpp)$ in $\Posint$. 
The fully-connected neural network $\fnn{\cdot}$ takes $\Cbf'$ as input and processes it through $L$ layers with ReLU activation functions to produce a raw score $y_c$, without converting this score into a ``probability.'' 
}
\end{figure}


\section{Mathematical description}
\label{sec:maths-description}

The model used for the theoretical analysis done in Section~\ref{sec:theory} is described in Section~\ref{sec:setting}, the derivation of GradCAM coefficients in Section~\ref{sec:computation}. 


\subsection{A simple CNN}
\label{sec:setting}

Let us describe mathematically the model we consider, denoted by $\Model$ and depicted in  Figure~\ref{fig:model}.
On a high-level, $\Model$ is a $(L+1)$-layers network, consisting in a single convolution / max pooling layer, followed by a $L$-layers fully-connected neural network with ReLU activations. 
Thus the case $L=1$ corresponds to a single convolutional / max pooling layer followed by a linear transformation.

More precisely, we consider a grayscale image $\Img \in [0,1]^{\Ishp \times \Ishpp}$ as input. For instance, if we consider the MNIST dataset~\cite{lecun_et_al_1998}, $(\Ishp,\Ishpp)=(28,28)$.
We note that our analysis can be easily extended to RBG images. 
The convolutional layer $\convo{\cdot}$ consists of $V$ filters $\Fbf = (\Fbf_1, \ldots, \Fbf_V)$, represented as a collection of $V$ matrices of shape $\Fshp \times \Fshp$.

Formally, the output of the convolution step, $\Abf \defeq \convo{\Img} \in\Reals^{V \times \Ashp \times \Ashpp}$, is given by:
\begin{equation}
\label{eq:def-activation-single-conv}
\forall v \in [V] , \, \forall (i,j)\in [\Ashp]\times [\Ashpp], \qquad 
\Abf_{i,j}^{(v)} = \sum_{p,q=1}^\Fshp \Img_{i+p-1,j+q-1} \Fbf_{p,q}^{(v)}
\, ,
\end{equation}
where $(h,w) = (H - \Fshp +1, W - \Fshp +1)$.

In practice, the filter weights are initialized randomly, typically i.i.d. uniform or Gaussian with proper scaling. 
There are two main trends on how to scale the variance, either Glorot~\citep{glorot_bengio_2010} (also called Xavier), or He~\citep{he_et_al_2015}. 
The later with uniform distribution is default for the CNN layer used in PyTorch. 
However, we assume from now on i.i.d. Gaussian $\gaussian{0}{\tau^2}$ initialization in our analysis for mathematical convenience.

After the convolution step, we apply a ReLU non-linearity, denoted by $\Activ\defeq \max(0,\cdot)$. 
We define the \emph{rectified activation maps} $\Bbf \defeq \activ{\Abf} \in \Reals^{V \times \Ashp \times \Ashpp}$, where $\Activ$ is applied coordinate-wise. 
Next, we consider a down-sampling layer, here a ($k' \times k'$) max pooling $\maxp{\cdot}$. 
One can see that the output of the max pooling, $\Cbf \defeq \maxp{\Bbf} \in\Reals^{V \times \Mshp \times \Mshpp}$, is given by:
\begin{equation}
\label{eq:def-max pool}
\forall v \in [V] , \, \forall (i',j') \in [\Mshp]\times [\Mshpp], \, 
\Cbf_{i',j'}^{(v)} = \max\left(\Bbf_{\Pshp (i'-1)+1:\Pshp i',\Pshp (j'-1)+1:\Pshp j'}^{(v)}\right) 
\, ,
\end{equation}
where $(h',w') = \frac{1}{k'}(h, w)$.
Note that we assume $k'$ to divide $h$ and $w$ for simplicity.

Finally, let us describe recursively the fully-connected part of $\Model$, denoted by $\fnn{\cdot}$: 
\begin{equation}
\begin{aligned}
\Fnn \colon &\Reals^{V \Mshp \Mshpp} \to \Reals \\
            &\Cbf' \xmapsto{} \preactiv{\Cbf'}{L}  
\end{aligned}
\quad \text{with}
\quad
\begin{cases}
\preactiv{\xbf}{0} = \xbf, \\
\preactiv{\xbf}{\ell} = \Wbf^{(\ell)} \activ{\preactiv{\xbf}{\ell-1}} & \text{for } \ell \in [L],
\end{cases}
\end{equation}

where $\Wbf^{(\ell)} \in \Reals^{d_\ell \times d_{\ell-1}}$ is a weight matrix connecting layer $(\ell-1)$ and $\ell$ with $\ell \in [L]$ and $d_\ell$ the size of layer $\ell$. 
Note that we set $d_0 = V \Mshp \Mshpp$, and $d_L = 1$, since we see the output of our model as the un-normalized logit associated to a given class of a prediction problem. 
We also, denote by $\abf^{(\ell)} \defeq \preactiv{\Cbf'}{\ell}$ the non-rectified activation of layer $\ell$ and $\rbf^{(\ell)} \defeq \activ{\abf^{(\ell)}}$ its rectified counterpart. 

\paragraph{Summary.}
The model we consider can be described concisely as $\fnn{\maxp{\activ{\convo{\cdot}}}}$.
As explained in introduction, given the nature of CAM-based explanations, \textbf{it is convenient to split $\Model$ in two functions $f$ and $g$} for the computations of the next Section~\ref{sec:computation}. 
More precisely, we write
\begin{equation}
\begin{aligned}
    \Model \colon \, &[0,1]^{H \times W} \to \Reals \\
    &\Img \xmapsto{} f \circ g(\Img) 
\end{aligned}
\quad \text{with}
\quad
\begin{cases}
    g(\Img) \defeq \activ{\convo{\Img}} \\
    f(\Cbf) \defeq \fnn{\maxp{\Cbf}} \, .
\end{cases}
\end{equation}
Recall that we refer to Figure~\ref{fig:model} for an illustration. 


\subsection{Closed-form expression}
\label{sec:computation}

The original idea of CAM \cite{zhou_et_al_2016} was limited to computing the saliency map as a linear combination of the feature maps in the last convolutional layer when $L=1$. 
Later, GradCAM~\citep{selvaraju_et_al_2017} removed the architecture constraints by computing the average gradient of each feature map with respect to $\Bbf$. 
In our notation, we have:

\begin{definition}[GradCAM]
\label{def:gradcam}
For an input $\Img$ and model $\Model$, the GradCAM feature scores are given by
\begin{align*}
\abrev{GC} \defeq \activ{\sum_{v=1}^{V} \alpha_v \Bbf^{(v)}} \in \Reals_+^{h \times w}
\, ,
\end{align*}
where each $\alpha_v \defeq \gap{\nabla_{\Bbf^{(v)}} f(\Bbf)}\in\Reals$.
Here, $\text{GAP}$ denotes the \emph{global average pooling}, that is, the average of all values, and $\Activ$ the ReLU as before. 
\end{definition}

Definition~\ref{def:gradcam} is of course to be taken coordinate-wise. 
We note that, in practice, $\abrev{GC}$ is up-sampled and normalized to produce a saliency map with the \emph{same shape} as the input image. 
To be more precise, what we define as $\abrev{GC}$ is the middle panel of Figure~\ref{fig:gradcam-fail-panel}, whereas the final user will nearly always visualize the right panel.
The most important thing to notice in Definition~\ref{def:gradcam} is that $\alpha$ is a \textbf{global} coefficient. 

We now show why this can be an issue. 
Looking at Definition~\ref{def:gradcam}, whenever the underlying model is not too complicated, one can actually hope to derive a closed-form expression for the feature importance scores of $\abrev{GC}$ as a function of the model's parameters. 
This is achieved by: 

\begin{proposition}[$\alpha$ coefficients for GradCAM, $V=1$]
\label{prop:l-hidden-layer}
Recall that the $\abf$ vectors denote the non-rectified activation and $\Wbf$ the weights of the linear part of $\Model$. 
Then, for input~$\Img$, the $\abrev{GC}$ coefficient $\alpha$ is given by
\begin{align*} 
\alpha = \frac{1}{\Ashp  \Ashpp} \sum_{\substack{i,j=1}}^{h',w'} \sum_{i_1, \ldots, i_{L-1} = 1}^{d_1, \ldots, d_{L-1}} \indic{\abf_{i_1}^{(1)}, \ldots, \abf_{i_{L-1}}^{(L-1)} >0} \prod_{p=1}^{L} (\Wbf_{i_p, i_{p-1}}^{(p)})^\top \, ,
\end{align*}
where we set $i_0 \defeq (i,j)$ and $i_L = 1$. 
\end{proposition}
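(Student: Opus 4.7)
The plan is to apply the chain rule in two stages: first through the max-pooling to reduce the global average pooling appearing in Definition~\ref{def:gradcam} to a sum of partial derivatives of $\Fnn$ with respect to $\Cbf$, and then through the $L$ fully-connected layers to expose the product of weights and ReLU-derivative indicators.

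For the first stage, starting from Definition~\ref{def:gradcam} with $V=1$, one has
\begin{equation*}
\alpha = \frac{1}{\Ashp \Ashpp} \sum_{i=1}^{\Ashp} \sum_{j=1}^{\Ashpp} \frac{\partial f(\Bbf)}{\partial \Bbf_{i,j}}
= \frac{1}{\Ashp \Ashpp} \sum_{i,j} \sum_{i',j'} \frac{\partial \Fnn(\Cbf')}{\partial \Cbf_{i',j'}} \cdot \frac{\partial \Cbf_{i',j'}}{\partial \Bbf_{i,j}}\,.
\end{equation*}
The key observation is that, by~\eqref{eq:def-max pool}, the max-pool derivative $\partial \Cbf_{i',j'}/\partial \Bbf_{i,j}$ is the indicator that $(i,j)$ is the argmax of the pooling block indexed by $(i',j')$. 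Since these blocks partition $[\Ashp] \times [\Ashpp]$ and each admits (almost surely at initialization) a unique argmax, summing over $(i,j)$ contributes exactly $1$ per $(i',j')$, leaving $\alpha = (\Ashp \Ashpp)^{-1} \sum_{i',j'} \partial \Fnn(\Cbf')/\partial \Cbf_{i',j'}$.

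For the second stage, I would identify the 2D index $(i',j')$ with its flat counterpart $i_0$ in $\Cbf'$ and iterate the chain rule on the recursion $\preactiv{\xbf}{\ell} = \Wbf^{(\ell)} \activ{\preactiv{\xbf}{\ell-1}}$. Since $\Activ'(x) = \indic{x>0}$ almost everywhere, a straightforward induction on $\ell$ gives
\begin{equation*}
\frac{\partial \preactiv{\Cbf'}{L}}{\partial \Cbf'_{i_0}} = \sum_{i_1, \ldots, i_{L-1} = 1}^{d_1, \ldots, d_{L-1}} \indic{\abf^{(1)}_{i_1}, \ldots, \abf^{(L-1)}_{i_{L-1}} > 0} \prod_{p=1}^{L} \Wbf^{(p)}_{i_p, i_{p-1}}\,,
\end{equation*}
with $i_L = 1$. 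The absence of an indicator at layer $\ell = 0$ is consistent with $\Cbf' \geq 0$ (being the output of ReLU composed with max-pool), so that $\Activ$ acts as the identity on the range of $\Cbf'$ and the derivative at $\Cbf'$ need carry no indicator. Plugging this into the conclusion of the first stage gives the claim.

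The proof carries no deep content; the only mildly delicate step is the collapse of the max-pool contribution, which relies on a standard subgradient convention and on almost-sure uniqueness of each block argmax at initialization. The remaining difficulty is purely notational, namely keeping the indices of the $L$ nested weight products aligned with the conditions inside the indicator.
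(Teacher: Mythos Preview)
Your proposal is correct and follows essentially the same route as the paper's proof: split the chain rule into a max-pooling stage and a fully-connected stage, observe that the pooling Jacobian is block-diagonal with argmax indicators summing to one per block (the paper phrases this as $\sum_{k} \Dbf_{j,k} = 1$, invoking the same uniqueness/autodiff convention you mention), and then unroll $\nabla_{\Cbf}\Fnn$ layer by layer to obtain the product of weights and ReLU indicators. Your explicit remark that the missing layer-$0$ indicator is absorbed by $\Cbf'\geq 0$ (so $\Activ$ acts as the identity there) is exactly the identification the paper uses implicitly when writing $\nabla_{\Cbf} f_1$ with $\Cbf$ in place of $\rbf^{(0)}$.
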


From Proposition~\ref{prop:l-hidden-layer}, we immediately deduce a closed-form expression for GradCAM explanations. 
We note that Proposition~\ref{prop:l-hidden-layer} can be readily extended to an arbitrary number of filters $V>1$, in which case the $\abf$ and $\Wbf$ should be interpreted as corresponding to the relevant $v\in [V]$. 

The proof of Proposition~\ref{prop:l-hidden-layer} can be found in Appendix~\ref{sec:proof-coefs}. 
In Appendix~\ref{app:defs}, we describe mathematically several other CAM-based methods in the setting of $\Model$: XGradCAM~\citep{fuXcam}, GradCAM++~\citep{ChattopadhayCam}, HiResCAM~\citep{draelos_carin_2021}, ScoreCAM~\citep{wangScorecam} and AblationCAM~\citep{desaiAblationcam}. 
A close inspection of these definitions reveals that they also use global weighting coefficients  applied to the corresponding activation maps, with the notable exception of HiResCAM.


\begin{figure}[t]
    \centering

    \begin{subfigure}[t]{0.24\textwidth} 
        \includegraphics[width=\textwidth]{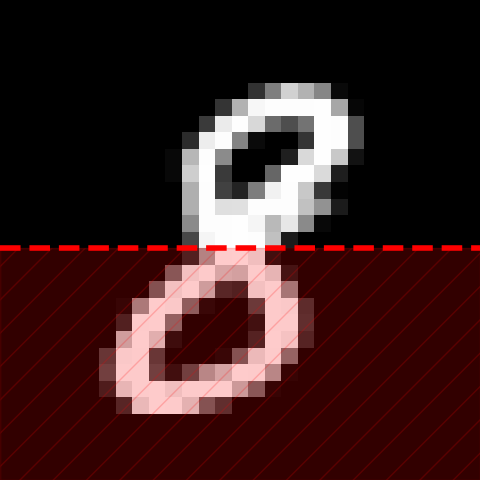} 
        \label{fig:mnist}
    \end{subfigure}
    \hspace{10pt}
    \begin{subfigure}[t]{0.24\textwidth} 
        \includegraphics[width=\textwidth]{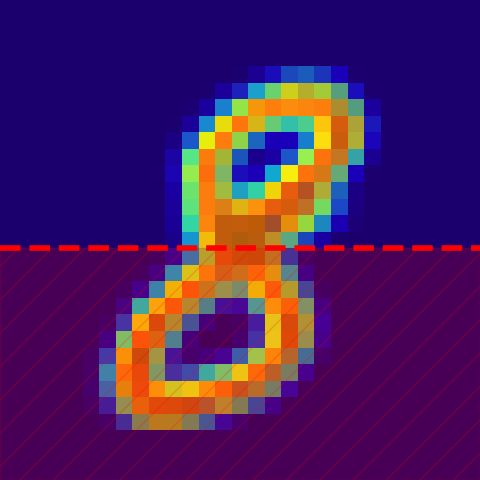}    
        \label{fig:cam_mnist}
    \end{subfigure}
\caption{\label{fig:experiments-theory}
Illustration of Theorem~\ref{th:expected-gradcam} on an MNIST~\cite{lecun_et_al_1998} digit (\emph{left panel}). 
We set to zero the lower part of $\Wbf$ for $\abrev{CNN}$, initialize the filter values and remaining weights to i.i.d. $\gaussian{0}{1}$, and run GradCAM to get a saliency map (\emph{right panel}).  
Even though our network does not see the red part of the image, \textbf{GradCAM does highlight some pixels in this area}, as predicted by Theorem~\ref{th:expected-gradcam}.}
\end{figure}


\subsection{Theoretical analysis}
\label{sec:theory}

Leveraging the results of Section~\ref{sec:computation}, we are able to describe precisely the behavior of GradCAM at initialization for $\Model$, specifically when the classifier part of our model comprises a single layer ($L=1$).
This analysis is justified by existing works~\citep{lee_et_al_2019,du_et_al_2019,allen_li_song_2019_b,allen_li_song_2019_a,zou_et_al_2020} showing that, in certain regimes, neural networks stay ``near initialization'' during training. 
As announced, we conduct this analysis when the network does not have access to the lower part of the image. 
Our main result is:

\begin{theorem}[Expected GradCAM scores, $L=1$, masked $\abrev{CNN}$] 
\label{th:expected-gradcam}
Let $\Img \in [0,1]^{H \times W}$ be an input image.  
Let $\mbf \defeq \Img_{i:i+k-1, j:j+k-1}$ be the patch of $\Img$ corresponding to index $(i,j)\in [h] \times [w]$. 
Assume that $h'$ is even, and $\Wbf_{:,-\frac{h'}{2}:,:} = 0$. 
Assume that the filter values and the non-zero weights are initialized i.i.d. $\gaussian{0}{\tau^2}$. 
Then, if the number of filters $V$ is greater than $20$, we have
the following expected lower bound on the GradCAM explanation for pixel $(i,j)$:
\begin{equation}
\label{eq:gradcam-lower-bound}
\expec{\abrev{GC}_{i,j}} = 
\expec{\activ{\sum_{v=1}^{V} \alpha_v \Bbf_{i,j}^{(v)}}} \geq \frac{V-20}{\sqrt{V}} \sqrt{\frac{h' w'}{16 \pi}} \frac{\tau^2}{h w} \norm{\mbf}_2
\, ,
\end{equation}
where the expectation in the previous inequality is taken with respect to initialization of the filters and the remaining weights of the linear layer. 
\end{theorem}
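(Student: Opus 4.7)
The plan is to leverage the closed-form of Proposition~\ref{prop:l-hidden-layer} and then, after conditioning on the convolutional filters, exploit the resulting Gaussianity of $S := \sum_v \alpha_v \Bbf^{(v)}_{i,j}$ to reduce the theorem to a lower bound on the expected $\ell^2$ norm of the activation vector at pixel $(i,j)$.

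\emph{Step 1: closed form and independence.} For $L=1$ the sum in Proposition~\ref{prop:l-hidden-layer} collapses, giving $\alpha_v = \frac{1}{hw}\sum_{i,j} \Wbf^{(v)}_{i,j}$. The masking hypothesis zeroes out the last $h'/2$ rows, leaving $h'w'/2$ i.i.d.\ $\gaussian{0}{\tau^2}$ summands, hence $\alpha_v \sim \gaussian{0}{s^2}$ with $s^2 = \frac{h'w'\tau^2}{2(hw)^2}$. Crucially, the $\alpha_v$ depend only on the linear weights $\Wbf$, so they are independent of the filters $\Fbf$ and therefore of the activations $\Bbf^{(v)}_{i,j}$.

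\emph{Step 2: conditioning on the filters.} Set $T \defeq \sum_{v=1}^{V} (\Bbf^{(v)}_{i,j})^2$. Given $\Fbf$, the $\alpha_v$'s are independent centred Gaussians, so $S \mid \Fbf \sim \gaussian{0}{s^2 T}$. Using the elementary fact that $\expec{\activ{Y}} = \sigma/\sqrt{2\pi}$ for $Y \sim \gaussian{0}{\sigma^2}$, the tower property gives
\begin{equation*}
\expec{\abrev{GC}_{i,j}} \;=\; \expec{\activ{S}} \;=\; \frac{s}{\sqrt{2\pi}}\,\expec{\sqrt{T}}.
\end{equation*}

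\emph{Step 3: Chebyshev on $T$.} This is the main obstacle and the source of the $V-20$ factor. Write $\Bbf^{(v)}_{i,j} = \activ{Z_v}$ with $Z_v \defeq \scalar{\mbf}{\Fbf^{(v)}} \sim \gaussian{0}{\tau^2\norm{\mbf}_2^2}$ i.i.d.\ across $v$. Direct Gaussian moment computations (using $\expec{Z_v^2\indic{Z_v>0}} = \tfrac12\tau^2\norm{\mbf}_2^2$ and $\expec{Z_v^4\indic{Z_v>0}} = \tfrac32\tau^4\norm{\mbf}_2^4$) yield $\expec{T} = \tfrac{V}{2}\tau^2\norm{\mbf}_2^2$ and $\var{T} = \tfrac{5V}{4}\tau^4\norm{\mbf}_2^4$, so Chebyshev gives $\proba{T < \expec{T}/2} \leq 4\var{T}/(\expec{T})^2 = 20/V$. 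Therefore
\begin{equation*}
\expec{\sqrt{T}} \;\geq\; \sqrt{\tfrac{\expec{T}}{2}}\left(1-\tfrac{20}{V}\right) \;=\; \frac{V-20}{2\sqrt{V}}\,\tau\,\norm{\mbf}_2,
\end{equation*}
which is also where the hypothesis $V \geq 20$ enters (for smaller $V$, the bound is trivially vacuous).

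\emph{Step 4: assembly.} Substituting this and $s^2$ into the display from Step~2 and simplifying the constant $\frac{\sqrt{h'w'}}{2hw\sqrt{2}\,\sqrt{2\pi}} = \frac{1}{hw}\sqrt{h'w'/(16\pi)}$ yields exactly the right-hand side of \eqref{eq:gradcam-lower-bound}. Everything outside of Step~3 is a careful bookkeeping of variances of Gaussian quantities passed through a ReLU; the only genuinely non-routine ingredient is the second-moment control of $T$ leading to the $(V-20)/\sqrt{V}$ factor.
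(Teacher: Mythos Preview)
Your proof is correct and follows essentially the same route as the paper's: condition on the filters so that $S$ is Gaussian given $\Fbf$, use $\expec{\activ{Y}}=\sigma/\sqrt{2\pi}$ to reduce to $\expec{\sqrt{T}}$, then apply Chebyshev with $t=\expec{T}/2$ using the same second-moment computations $\expec{T}=\tfrac{V}{2}\tau^2\norm{\mbf}_2^2$ and $\var{T}=\tfrac{5V}{4}\tau^4\norm{\mbf}_2^4$. The only cosmetic difference is that the paper isolates the moment identities as separate lemmas and explicitly disposes of the degenerate case $\norm{\mbf}_2=0$ upfront, whereas you compute the moments inline.
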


Setting $\Wbf_{:,-\frac{h'}{2}:,:}$ to $0$ disables the weights within $\Wbf$ that are connected to the lower half part of the activation map $\Cbf_{:,-\frac{h'}{2}:,:}$, effectively preventing $\Model$ from accessing the lower half of $\Cbf$. 
In turn, $\Model$ does not see the lower half of $\xi$, up to side effects. 
The main consequence of Theorem~\ref{th:expected-gradcam} is that, when the number of filters associated to the class to explain is large enough, $\abrev{GC}_{i,j}$ is positive in expectation if some pixels are activated in the receptive field associated to $(i,j)$. 
Thus \textbf{GradCAM highlights all parts of the image where there is some ``activity,'' even though this information is not used by the network in the end.} 
We illustrate Theorem~\ref{th:expected-gradcam} in Figure~\ref{fig:experiments-theory}.
The main limitation of this analysis is its focus on the behavior at initialization: we investigate in the following whether this behavior also happens after training.  
Another limitation is the restriction to a single linear layer, but we note that taking $L=1$ in the fully connected part of $\Model$ is a dominant approach since ResNet~\citep{he_et_al_2016}.

The proof of Theorem~\ref{th:expected-gradcam} can be found in Appendix~\ref{sec:proof-main}. 
The key ingredient of the proof is obtaining a probabilistic control of $\sum_q\alpha_q\Bbf^{(q)}$. 
We note that a similar analysis is possible for other expressions of $\alpha$, thus other CAM-based methods. 


\section{Experiments}
\label{sec:experiments}

We know ask the following question: are the consequences of Theorem~\ref{th:expected-gradcam} true after training, and for a more realistic model? 
To this extent, we train a CNN-based model which by construction cannot access some specified part of the input which we call the \emph{dead zone} (see Figure~\ref{fig:vgg}, details in Section~\ref{sec:model}).
Clearly, since the dead zone does not influence the output, it should not contain positive model explanations. 
To test whether this is true, we create two datasets (Section~\ref{sec:datasets}). 
Each item of the first one is composed of two images from ImageNet with the same label in both the seen and the unseen part of the image. 
The second dataset is built using generative models on the same categories with two objects in each image located in the seen and unseen part as well. 
We then check whether CAM-based methods wrongly highlight areas in the dead zone in Section~\ref{sec:results}.


\begin{figure}[t!]
\centering
\hspace*{-\dimexpr\oddsidemargin+1in\relax}\makebox[\paperwidth]{%
\scalefont{0.6}\input{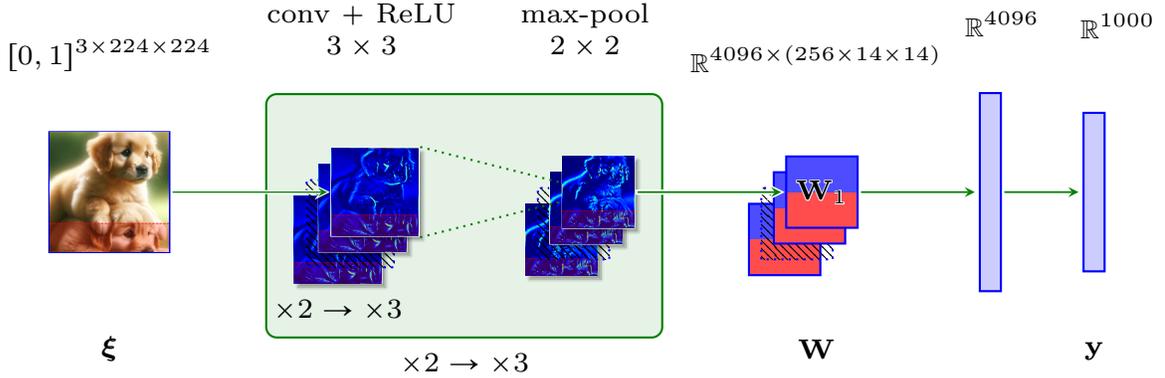}} \hspace*{-\paperwidth}    
\caption{\label{fig:vgg}Our masked VGG16-based model trained on ImageNet with $87.0 \%$ top-5 accuracy. The \textcolor{red}{down weights $\Wbf_{:,\, :,\, -9:, \, :}$} are set to $0$ and not updated during training. Only the \textcolor{blue}{up weights $\Wbf_{:,\, :,\, :5, \, :}$} and the other parameters undergo training. This setting implies that every \textcolor{red}{red part} in the channels does not impact the prediction scores, meaning that they are not used. Symbol $\times 2 \rightarrow \times 3$ means the model first uses the green block twice, with each time having $2$ consecutive convolutions. Then, it uses the green block three times, with each time having~$3$ consecutive convolutions. There is no max pooling after the last convolution.}
\end{figure}


\subsection{Model}
\label{sec:model}

\paragraph{Model definition. }
The CNN used in our experiments is a modification of a classical VGG16 architecture~\citep{SimonyanZisserman2015} which we call $\abrev{VGG}$. 
Whereas the original VGG16 model is composed of $5$ convolutional blocks including either $2$ or $3$ convolutional layers with ReLU and max pooling, followed by $3$ dense layers.
In $\abrev{VGG}$ we remove the last max pooling (in the fifth convolutional block) and we further apply a mask on selected neurons of the first dense layer so the layer can not see the lower part of the activation maps, see Figure~\ref{fig:vgg} for more details.

\paragraph{Masking.}
We forbid the network from seeing the dead zone in a very simple way: in the first dense layer $\Wbf$, which has size $4096 \times (256 \times 14 \times 14)$, we permanently set to $0$ a band of height $9$ corresponding to the lower weights.
Formally, this means setting $\Wbf_{:,\, :,\, -9:, \, :} = 0$, which is denoted in red above $\Wbf$ in Figure~\ref{fig:vgg}. 
Effectively, we are building a wall that stops all information flowing from the last convolutional layer to the remainder of the network. 
Since the weights $\Wbf$ are directly connected to the final activation map $\Bbf \in \Reals_+^{256 \times 14 \times 14}$, this masking effectively zeroes out the lower sections in each channel denoted by $\Bbf_{:,\, -9:, \, :}$. 
We can trace back the zeroed activations in $\Bbf$ to the preceding activation map $\Cbf$, pinpointing the exact patches in $\Cbf$ that correspond (after convolution) to the features observed in the zeroed activation of $\Bbf$. 
Because of the side effects in the computation of convolutions, this area of $\Cbf$ is slightly smaller: some pixel activation will still play a role in the model's prediction. 
Repeating this process until we reach the original image yields a dead zone of height $54$ pixels, highlighted in red above $\Img$ in Figure~\ref{fig:vgg}, which covers $24 \%$ of the image area.
As we mentioned earlier, the other main difference with VGG16 is the removing of the final max pooling layer. 
This leads to a larger activation layer, allowing us to set weights to zero without hindering too much the network's ability, see Figure~\ref{fig:plot}. 
We note that $\abrev{VGG}$ bears a strong resemblance to $\Model$. 
The main difference is that the convolutional layer of $\Model$ is replaced by several convolutional blocks in $\abrev{VGG}$, see Figure~\ref{fig:vgg}.

\paragraph{Training. }
We train $\abrev{VGG}$ on Imagenet-1k~\citep{dengimagenet2009} using classical data augmentation recipe, \emph{i.e.}, random flip and random crop. 
As optimization algorithm, we use stochastic gradient descent with momentum, weight decay, and a learning rate scheduler.
To observe the slight accuracy drop induced by masking a significant part of images, we train a baseline model without masking.
We report the train loss and the validation accuracy across training in Figure~\ref{fig:plot}.


\begin{figure}[t]
    \begin{center}
        \includegraphics[scale = 0.9]{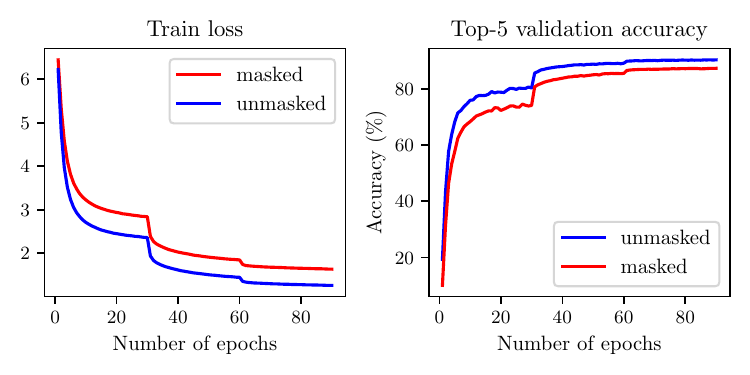}
    \end{center}
    \caption{\label{fig:plot}Plots of the training loss and the top-5 validation accuracy of our unmasked and masked VGG16-like models on Imagenet-1k (2012)~\citep{dengimagenet2009}. The unmasked $\abrev{VGG}$ (baseline) and our masked $\abrev{VGG}$ yield $87.0 \%$, resp. $90.4 \%$ top-5 accuracy and $66.5 \%$ resp. $71.5 \%$ top-1 accuracy on the validation set.}
\end{figure}


\paragraph{Comparison to SOTA. }
We also compare the validation top-1 and top-5 accuracy of the VGG16 model found in the PyTorch repository. 
Our $\abrev{VGG}$ without max pooling and no masking offers the same performance: $71.5 \%$ top-1 and $90.4 \%$ top-5 accuracy on the validation set.
As we mention in Figure~\ref{fig:plot}, our model $\abrev{VGG}$ with masking has lower performance, which is expected as a fourth of the input image, $\Img_{:, \, 171:224, \, :}$, is unseen by the model.
We obtain $66.5 \%$, resp. $71.5 \%$, top-1 and  $87.0 \%$, resp. $90.4 \%$, top-5 accuracy on the validation set for our masked $\abrev{VGG}$, resp. unmasked $\abrev{VGG}$.
Nevertheless, we see that $\abrev{VGG}$ is a \textbf{realistic network able to predict ImageNet classes with reasonable accuracy}. 
We believe that the drop in accuracy is only minor because ImageNet images are centered, and there is enough information in the upper part of the image to achieve near-perfect prediction.


\subsection{Proposed datasets}
\label{sec:datasets}

\paragraph{Objective. }
To assess how much CAM-based saliency maps emphasize irrelevant areas of an image, we introduce two new datasets in which we control the positions of the image elements using two techniques: cutmix~\citep{yuncutmix2019} and generative model. 
More precisely, we produce two datasets, called \textsc{STACK-MIX} and \textsc{STACK-GEN}. 
Where each image contains two objects, one in the bottom part of the image which is the dead zone for $\abrev{VGG}$, and the second subject at the top of the image. Therefore, the subject at the center of the image will be mainly responsible for the top-1 predicted score by our masked $\abrev{VGG}$.


\begin{figure}[t!]
    \centering
    \begin{tabular}{l@{\hskip 0.5cm}llll}
\rotatebox[origin=c]{90}{\textsc{STACK-MIX}} & \includegraphics[width=.2\linewidth,valign=m]{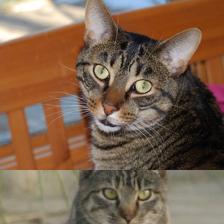} & \includegraphics[width=.2\linewidth,valign=m]{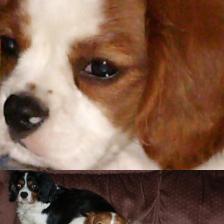} & \includegraphics[width=.2\linewidth,valign=m]{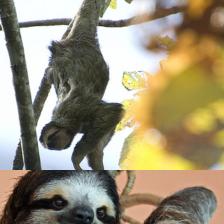}& \includegraphics[width=.2\linewidth,valign=m]{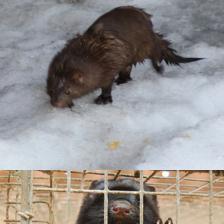}\\[1cm]
\rotatebox[origin=c]{90}{\textsc{STACK-GEN}} & \includegraphics[width=.2\linewidth,valign=m]{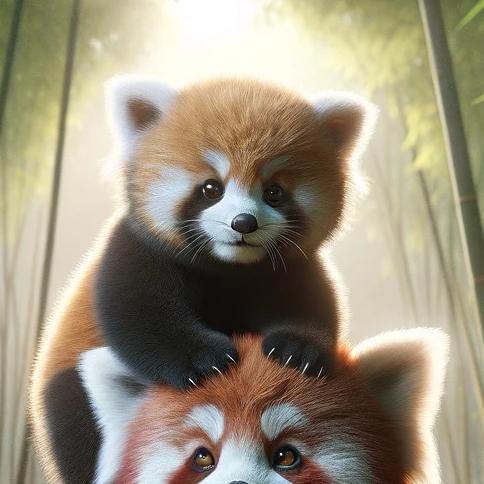} & \includegraphics[width=.2\linewidth,valign=m]{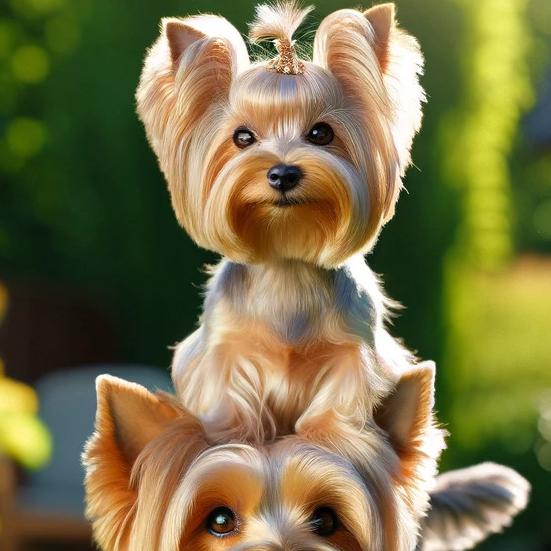} & \includegraphics[width=.2\linewidth,valign=m]{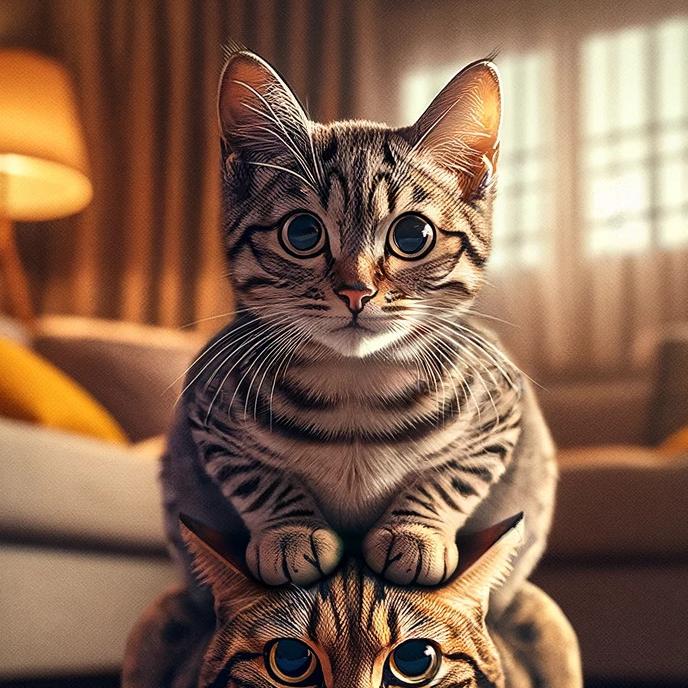}& \includegraphics[width=.2\linewidth,valign=m]{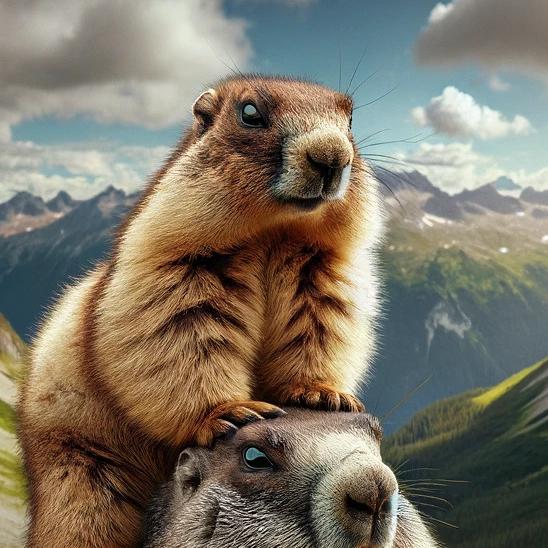}\\[1cm]
    \end{tabular}

\caption{\label{fig:sampled_imgs}Sampled images from both of our datasets, \emph{i.e.}, \textsc{STACK-MIX} and \textsc{STACK-GEN}.}
\end{figure}


\paragraph{STACK-MIX. }
We first generate labels for our datasets by ramdomly sampling $100$ classes from the $398$ first labels of Imagenet, which corresponds to animals. 
The first dataset, called \textsc{STACK-MIX}, consists of $100$ images featuring one image from each of the $100$ classes. 
Each example $\Img$ is created by mixing, in a  cutmix~\citep{yuncutmix2019} fashion, two images $(\Img_1, \Img_2)$ with same label and sampled randomly in the \textbf{validation set} of Imagenet as follows:
\begin{equation}
    \Img \defeq 
\begin{pmatrix}
(\Img_1)_{:, \, :170, \, :} \\
(\Img_2)_{:, \, 171:224, \, :}
\end{pmatrix} 
\in [0,1]^{3 \times 224 \times 224}
\, ,
\end{equation}
meaning that we create a composite image $\Img$ by superposing an upper vertical slice, taken from the top region of $\Img_1$ with size $3 \times 170 \times 224$, with a lower vertical slice, taken from the bottom region of $\Img_2$ with size $3 \times 54 \times 224$.
Finally, the quality of the generated images is verified through manual inspection.
This dataset lacks realism due to the distinct separation between the two subjects. We address this issue with the help of generative models.

\paragraph{STACK-GEN. }
The second dataset, called \textsc{STACK-GEN}, consists of $100$ images featuring one image from each of the same $100$ classes. 
It was generated using ChatGPT + DALL·E 3~\cite{brown2020language, ramesh2021zero} by sampling prompts of the following form: ``A photo of \{animal name\} stacked on top of \{same animal name\}''. 
The word ``stacked'' determines the positions of the subjects in the generated image, which proceeds as follows: first, ChatGPT refines the original prompt to enhance its suitability for DALL·E 3, then the image is generated. 
We then preprocess the generated images by selectively editing them to minimize the background and centering the focus on the two animals. This editing involves cropping the images to a 1:1 ratio, ensuring one animal is predominantly within the dead zone as defined by our $\abrev{VGG}$, while the other is positioned in the upper part of the new image. 
Figure~\ref{fig:sampled_imgs} shows examples of the created images. 
Note that both datasets are provided in the supplementary material and online.\footnote{\url{https://github.com/MagamedT/cam-can-see-through-walls}}


\subsection{Results}
\label{sec:results}

\begin{figure}[h]
    \centering
    \begin{tabular}{l@{\hskip 0.5cm}llll}
\rotatebox[origin=c]{90}{Input image} & \includegraphics[width=.2\linewidth,valign=m]{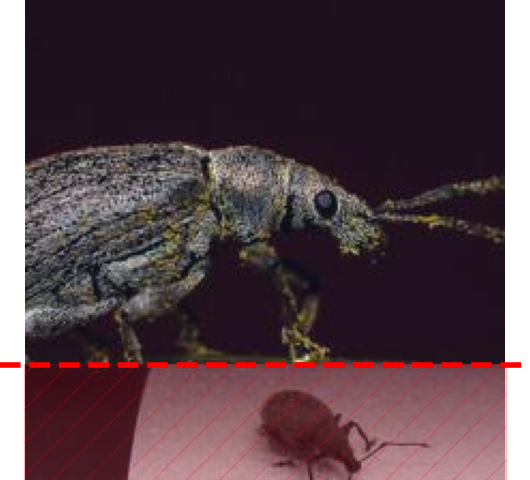}& \includegraphics[width=.2\linewidth,valign=m]{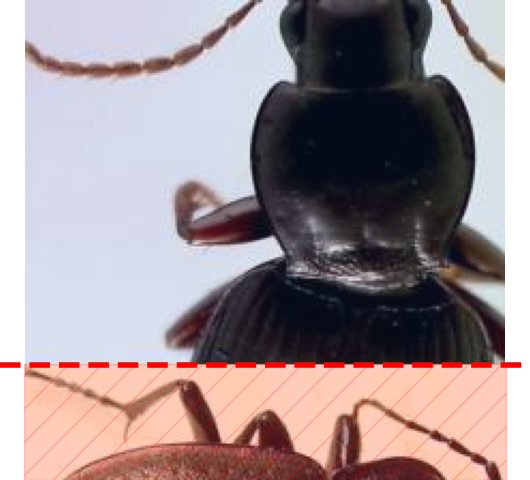} & \includegraphics[width=.2\linewidth,valign=m]{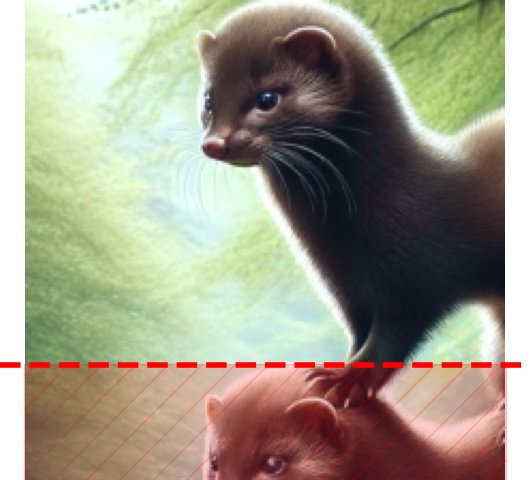} & \includegraphics[width=.2\linewidth,valign=m]{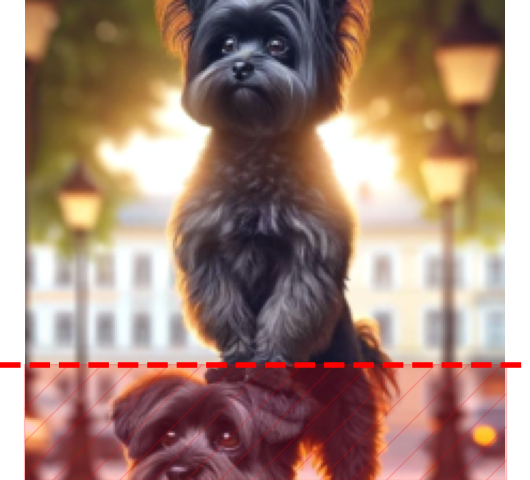} \\[1cm]
\rotatebox[origin=c]{90}{GradCAM} & \includegraphics[width=.2\linewidth,valign=m]{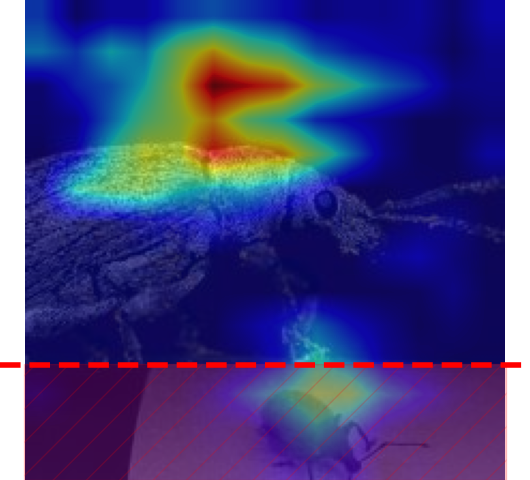}& \includegraphics[width=.2\linewidth,valign=m]{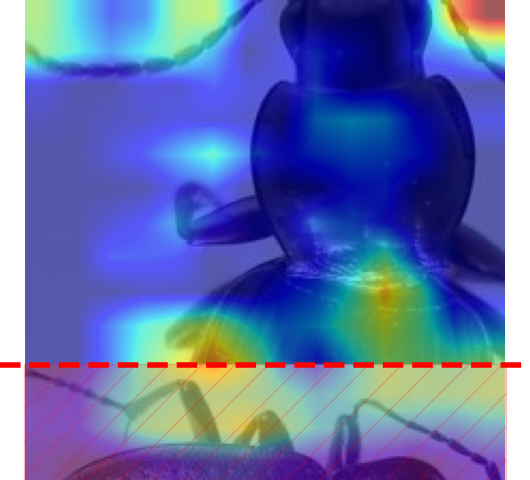}  & \includegraphics[width=.2\linewidth,valign=m]{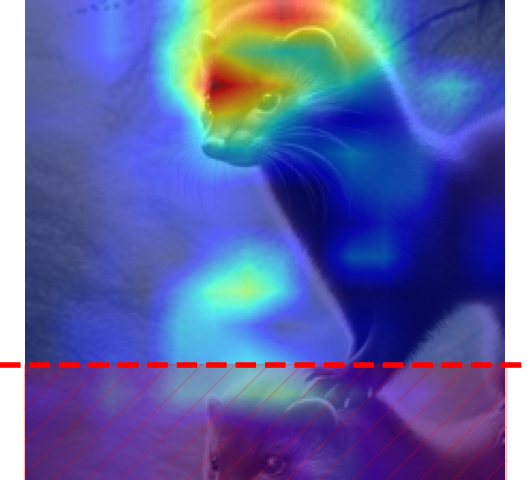} & \includegraphics[width=.2\linewidth,valign=m]{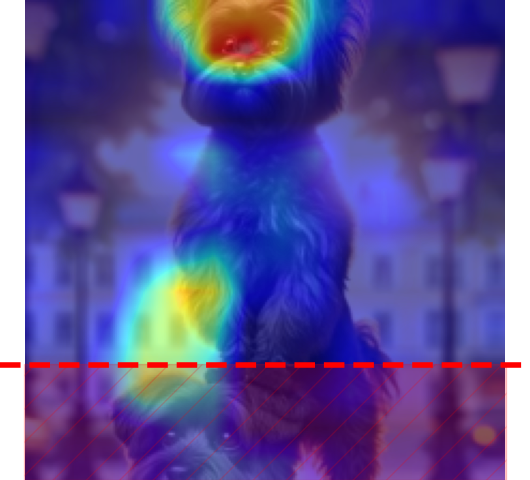} \\[1cm]
\rotatebox[origin=c]{90}{GradCAM++} & \includegraphics[width=.2\linewidth,valign=m]{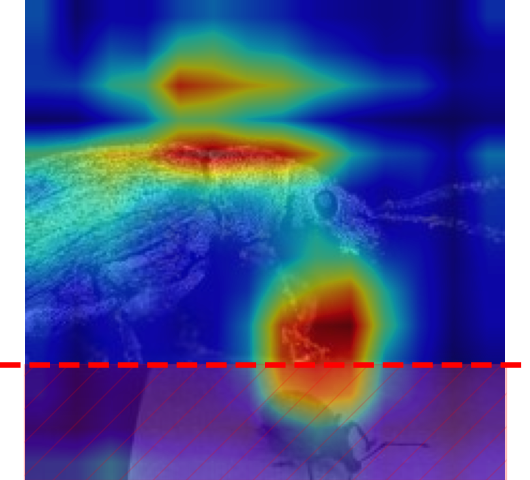}& \includegraphics[width=.2\linewidth,valign=m]{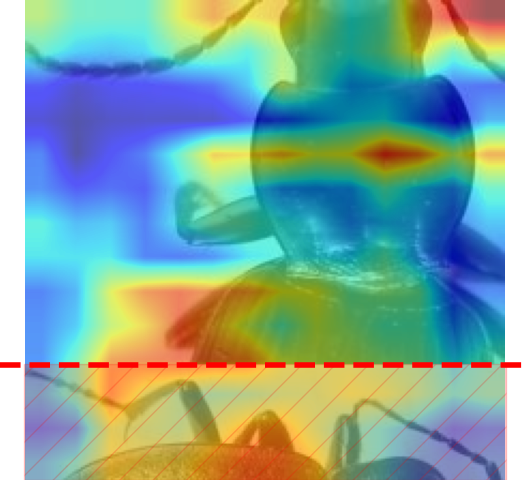} & \includegraphics[width=.2\linewidth,valign=m]{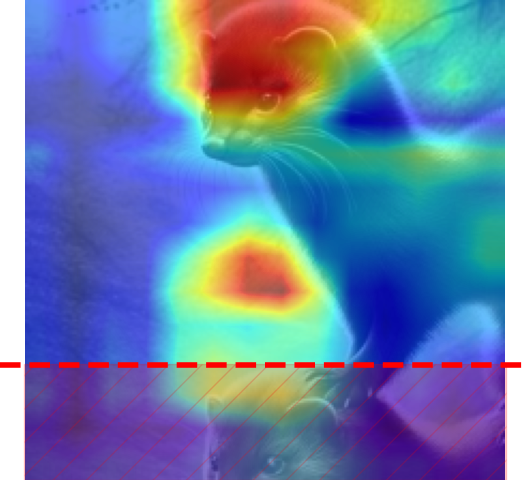} & \includegraphics[width=.2\linewidth,valign=m]{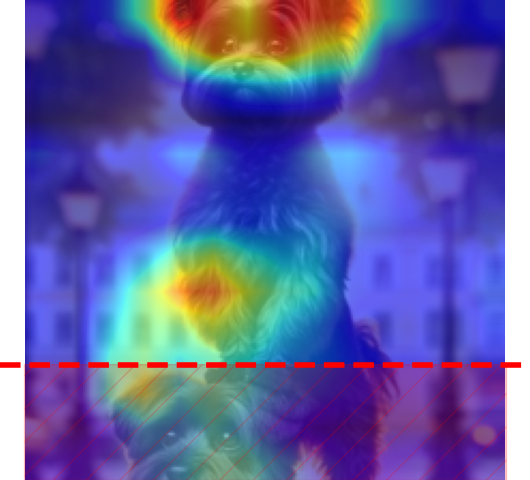} \\[1cm]
\rotatebox[origin=c]{90}{XGradCAM} & \includegraphics[width=.2\linewidth,valign=m]{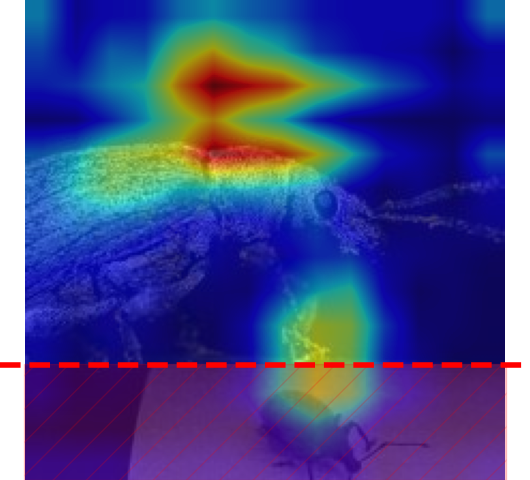}& \includegraphics[width=.2\linewidth,valign=m]{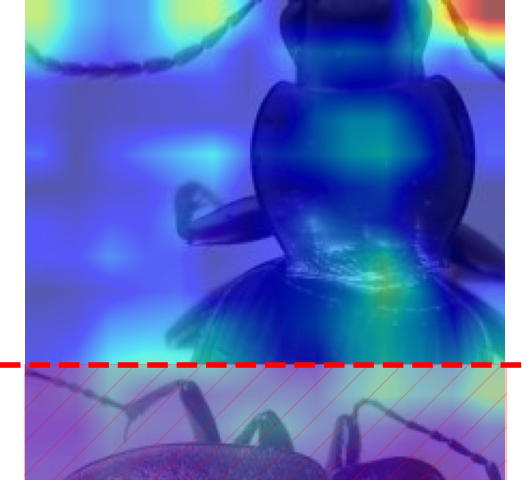} & \includegraphics[width=.2\linewidth,valign=m]{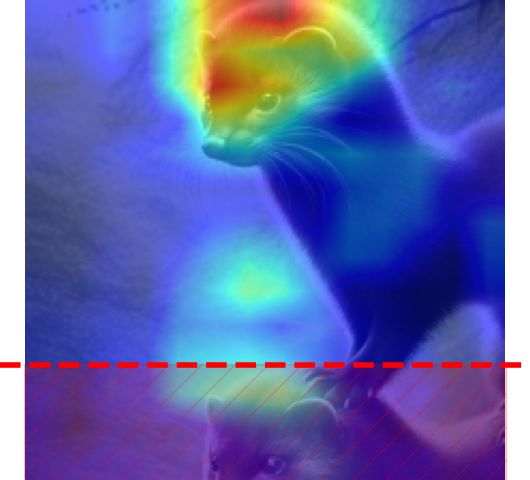} & \includegraphics[width=.2\linewidth,valign=m]{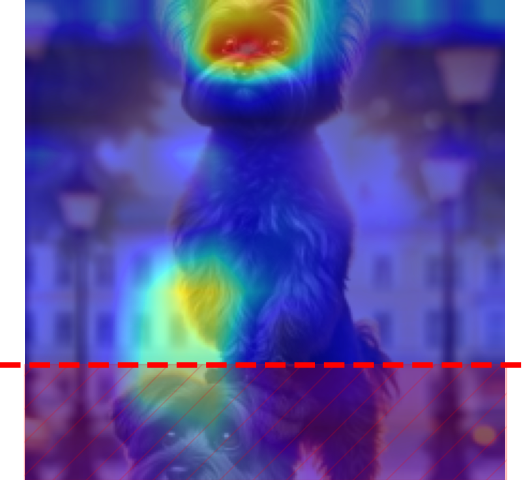} \\[1cm]
\rotatebox[origin=c]{90}{ScoreCAM}  & \includegraphics[width=.2\linewidth,valign=m]{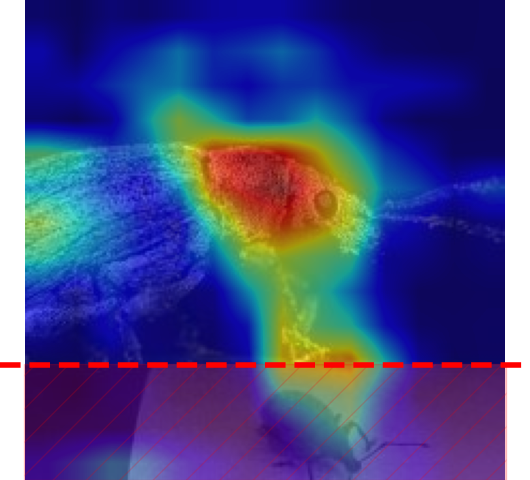}& \includegraphics[width=.2\linewidth,valign=m]{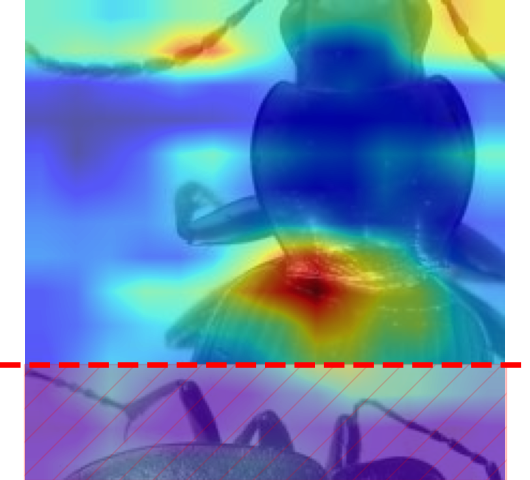} & \includegraphics[width=.2\linewidth,valign=m]{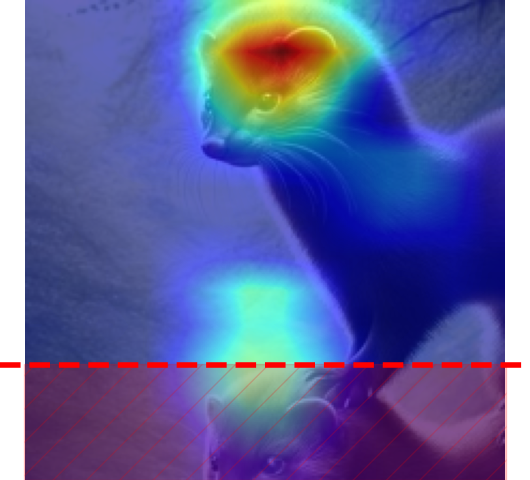} & \includegraphics[width=.2\linewidth,valign=m]{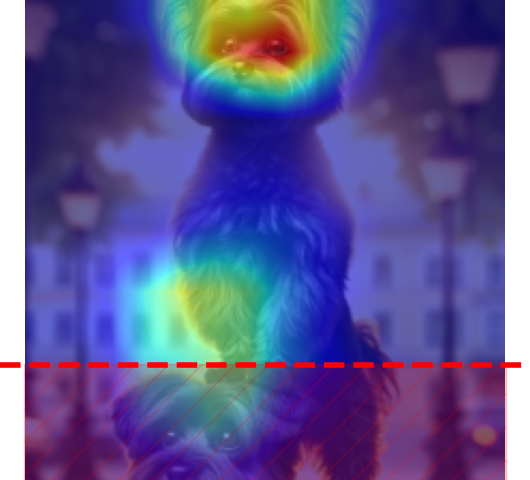}\\[1cm]
\rotatebox[origin=c]{90}{Opti-CAM}  & \includegraphics[width=.2\linewidth,valign=m]{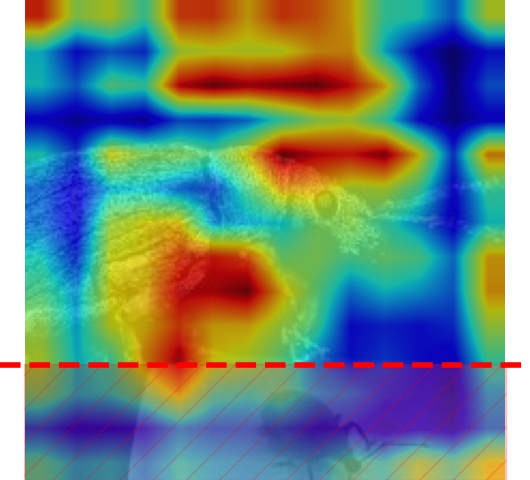}& \includegraphics[width=.2\linewidth,valign=m]{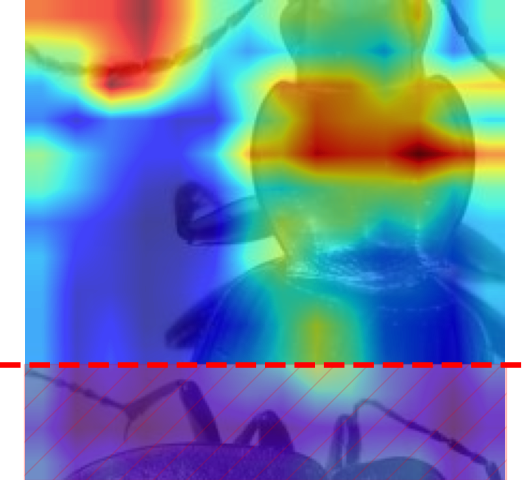} & \includegraphics[width=.2\linewidth,valign=m]{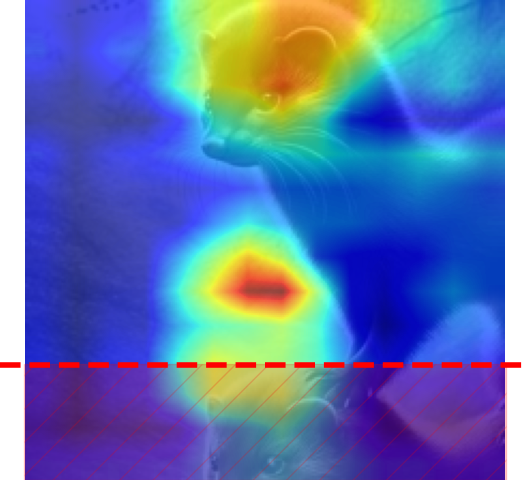} & \includegraphics[width=.2\linewidth,valign=m]{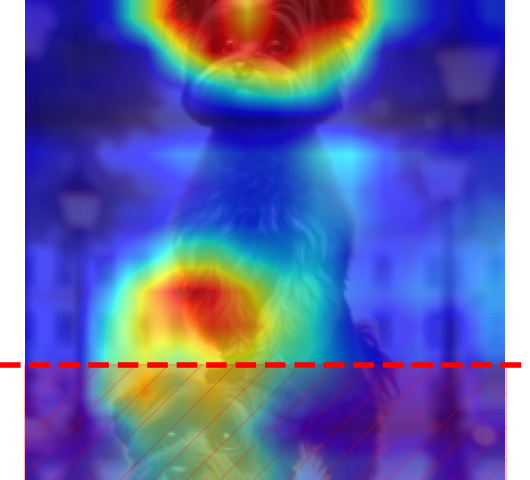}\\[1cm]
\rotatebox[origin=c]{90}{AblationCAM} & \includegraphics[width=.2\linewidth,valign=m]{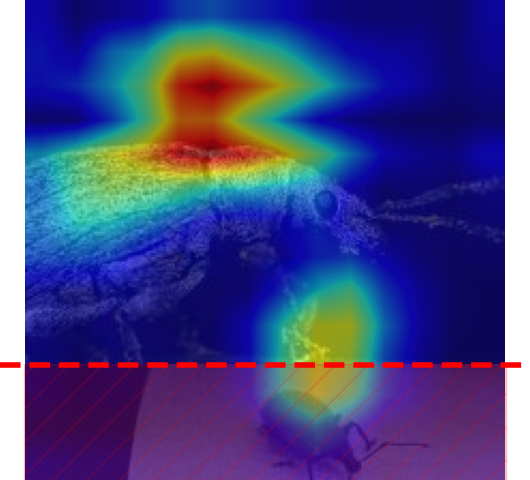}& \includegraphics[width=.2\linewidth,valign=m]{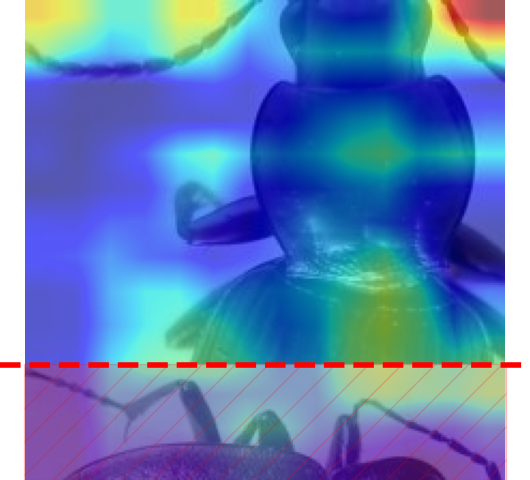} & \includegraphics[width=.2\linewidth,valign=m]{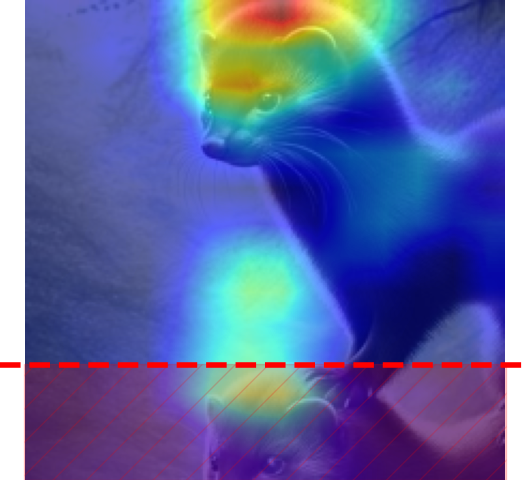} & \includegraphics[width=.2\linewidth,valign=m]{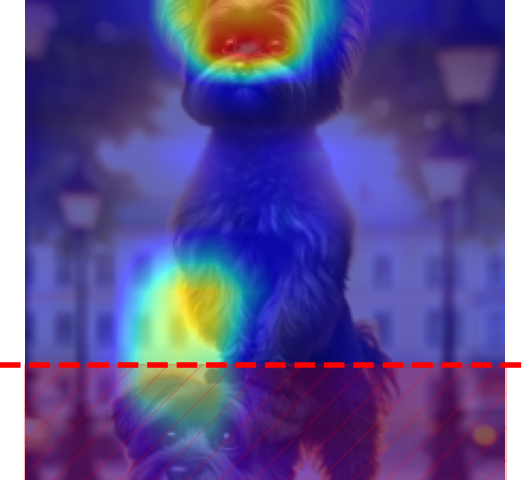} \\[1cm]
\rotatebox[origin=c]{90}{EigenCAM} & \includegraphics[width=.2\linewidth,valign=m]{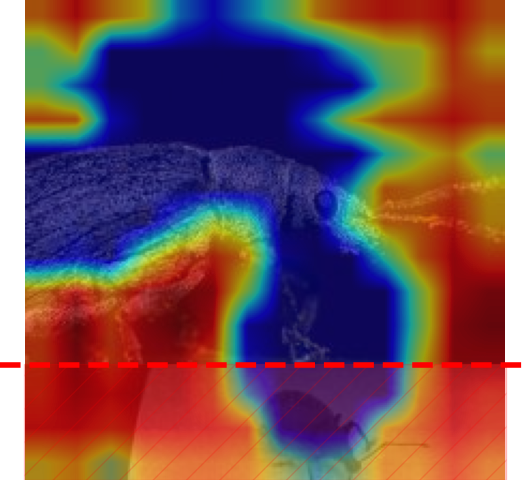}& \includegraphics[width=.2\linewidth,valign=m]{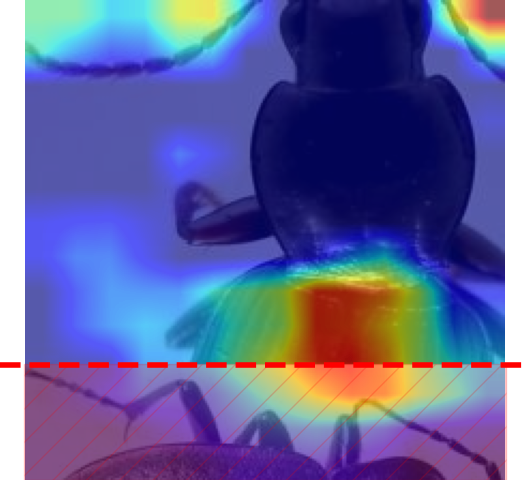} & \includegraphics[width=.2\linewidth,valign=m]{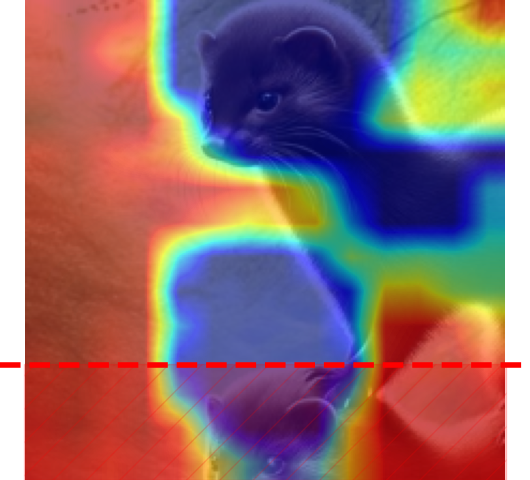} & \includegraphics[width=.2\linewidth,valign=m]{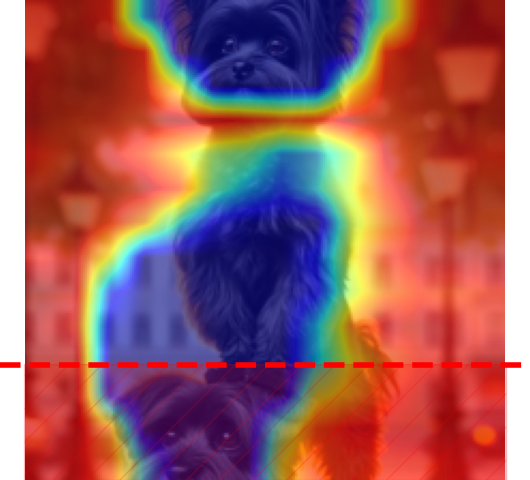} \\[1cm]
\rotatebox[origin=c]{90}{HiResCAM} & \includegraphics[width=.2\linewidth,valign=m]{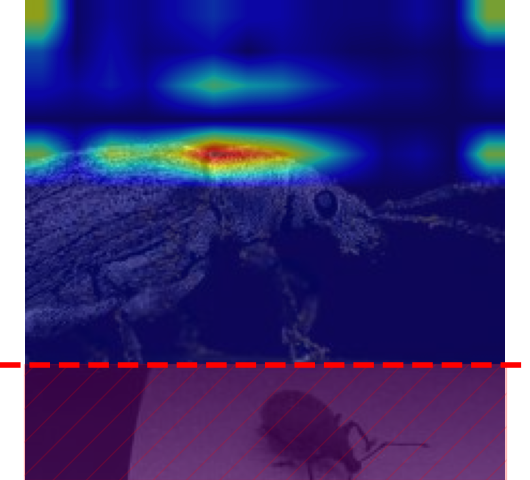}& \includegraphics[width=.2\linewidth,valign=m]{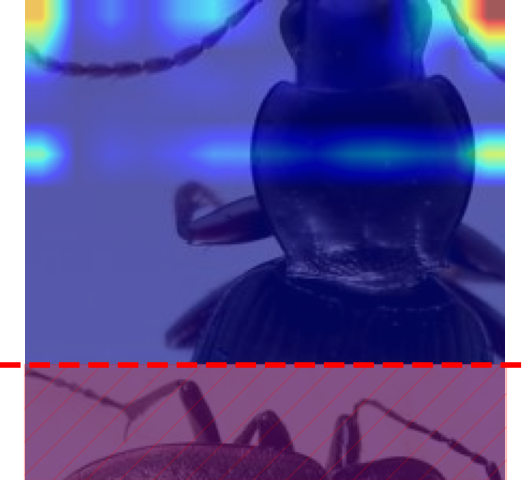}  & \includegraphics[width=.2\linewidth,valign=m]{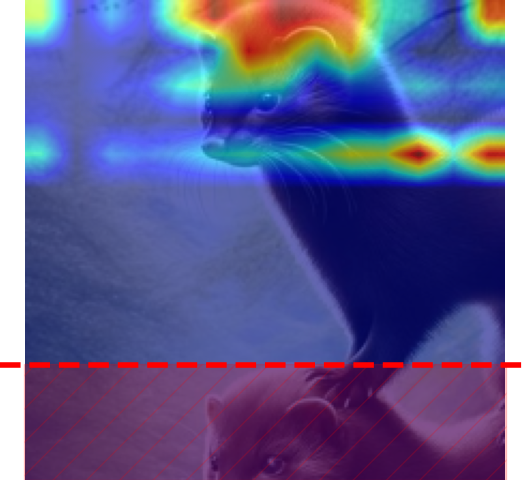} & \includegraphics[width=.2\linewidth,valign=m]{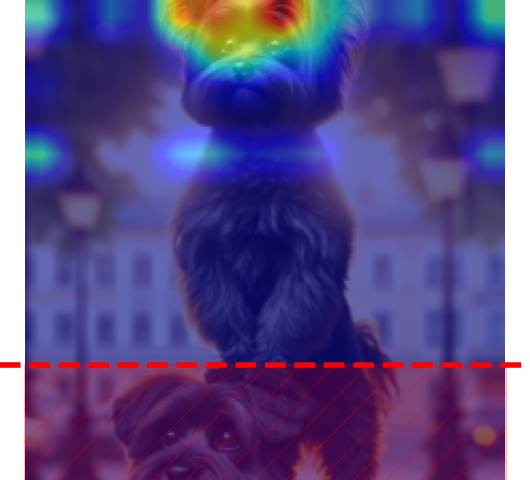} 
\end{tabular}
\caption{Saliency maps given by the considered CAM-based methods for $\abrev{VGG}$. With the notable exception of HiResCAM, all method highlight parts of images from $\textsc{STACK-GEN}$ and $\textsc{STACK-MIX}$ which are unseen by the network. The lower part in red is unseen by the model.}
\label{fig:qualitative-results}
\end{figure}

\begin{figure}[h]
    \centering
    \begin{tabular}{l@{\hskip 0.5cm}llll}
\rotatebox[origin=c]{90}{Input image} & \includegraphics[width=.2\linewidth,valign=m]{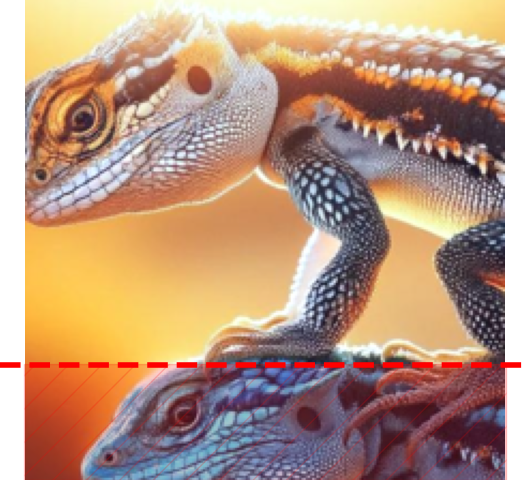} & \includegraphics[width=.2\linewidth,valign=m]{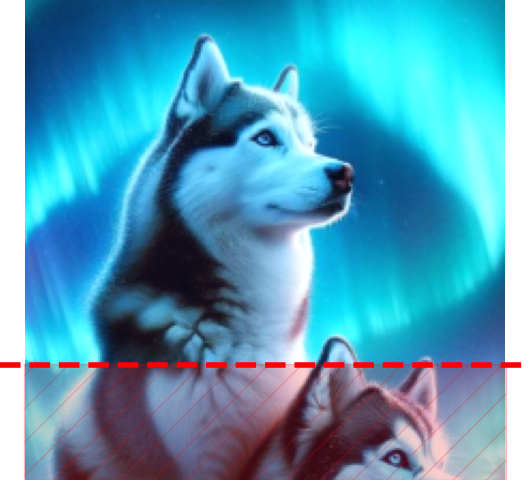} & \includegraphics[width=.2\linewidth,valign=m]{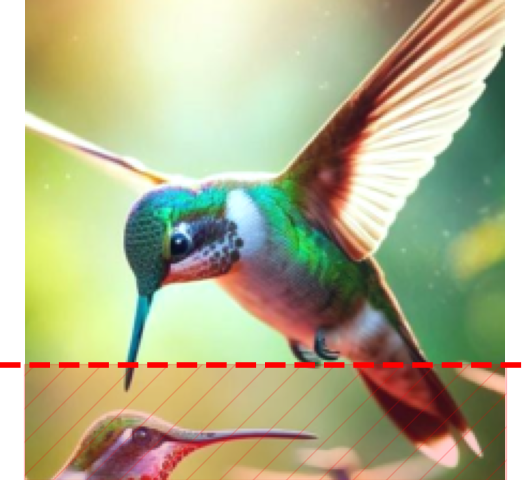}& \includegraphics[width=.2\linewidth,valign=m]{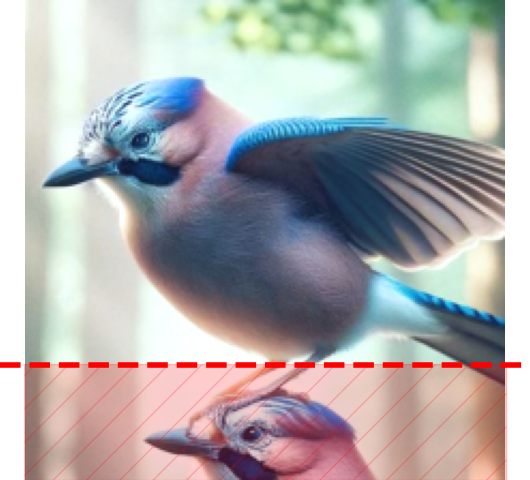}\\[1cm]
\rotatebox[origin=c]{90}{GradCAM} & \includegraphics[width=.2\linewidth,valign=m]{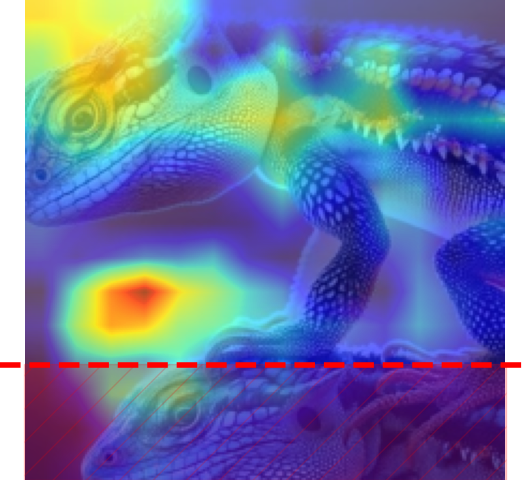} & \includegraphics[width=.2\linewidth,valign=m]{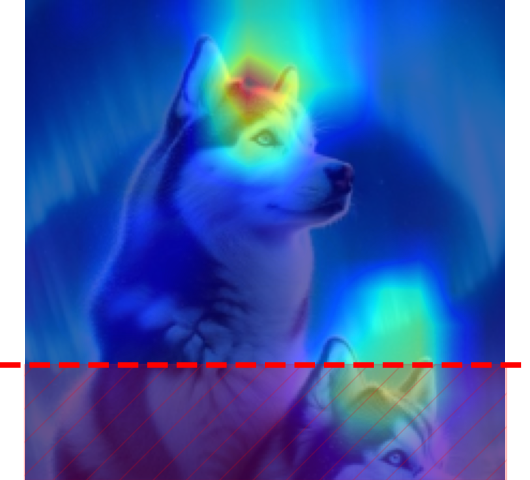} & \includegraphics[width=.2\linewidth,valign=m]{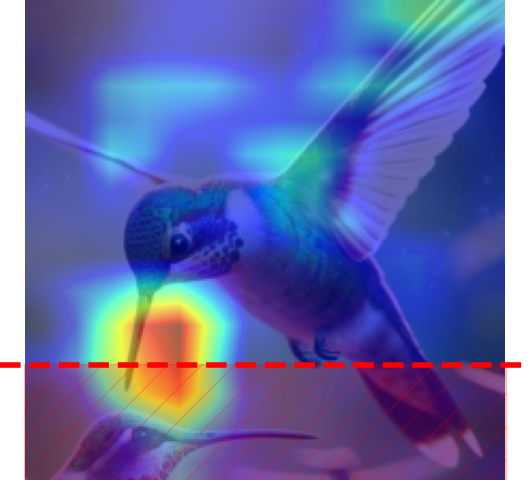}& \includegraphics[width=.2\linewidth,valign=m]{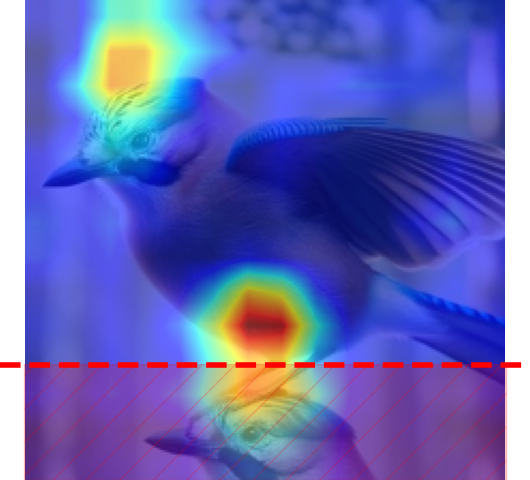}\\[1cm]
\rotatebox[origin=c]{90}{GradCAM++} & \includegraphics[width=.2\linewidth,valign=m]{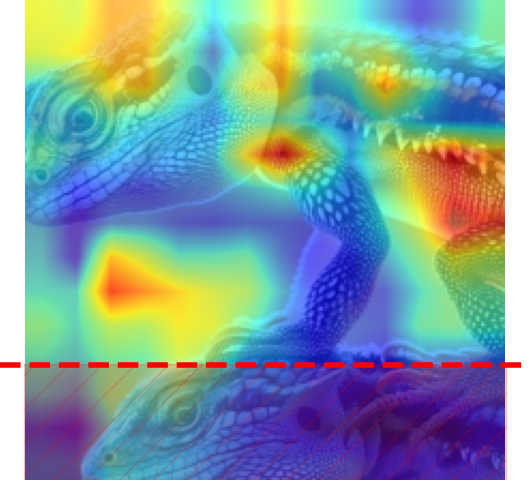} & \includegraphics[width=.2\linewidth,valign=m]{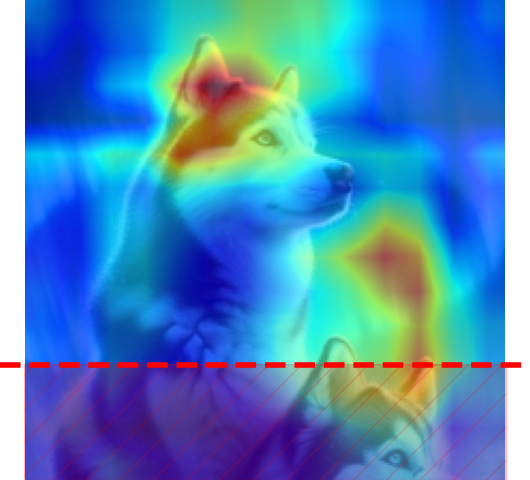} & \includegraphics[width=.2\linewidth,valign=m]{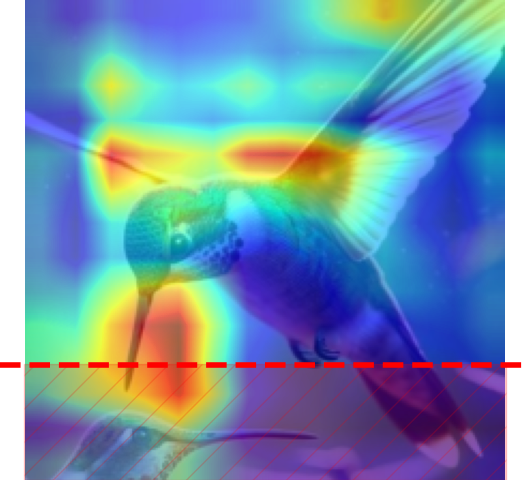}& \includegraphics[width=.2\linewidth,valign=m]{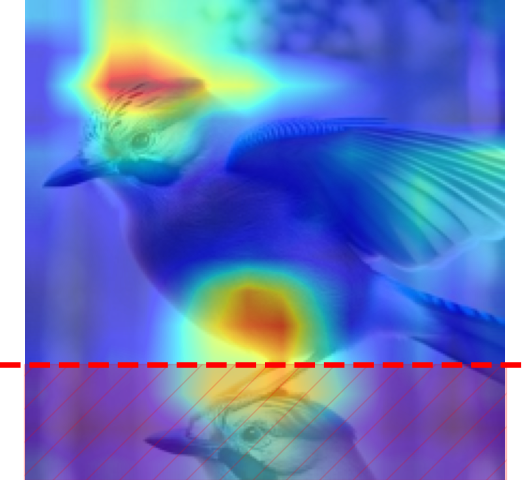}\\[1cm]
\rotatebox[origin=c]{90}{XGradCAM} & \includegraphics[width=.2\linewidth,valign=m]{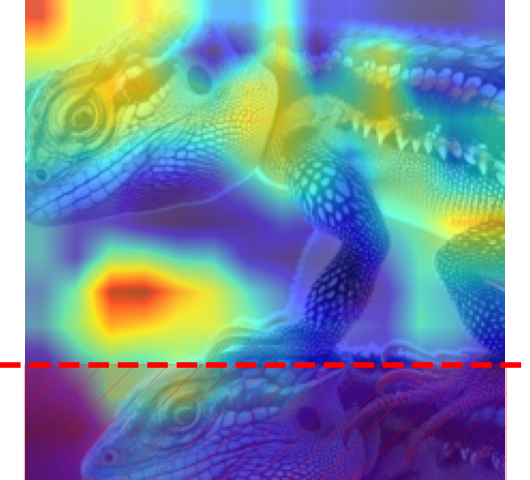} & \includegraphics[width=.2\linewidth,valign=m]{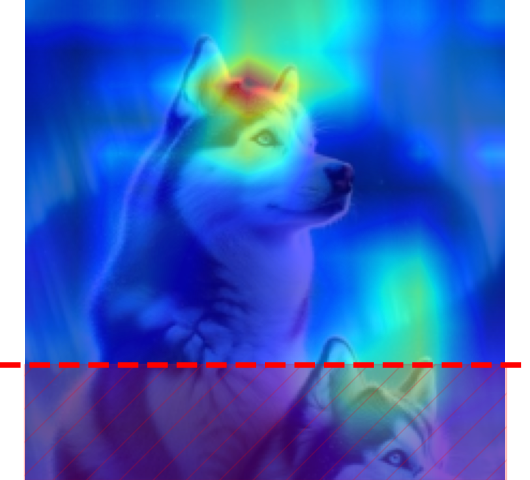} & \includegraphics[width=.2\linewidth,valign=m]{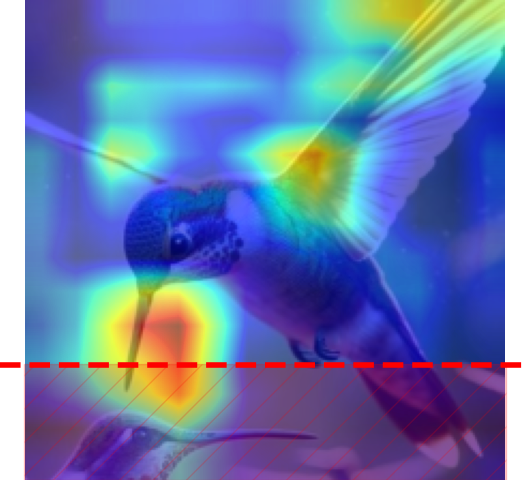}& \includegraphics[width=.2\linewidth,valign=m]{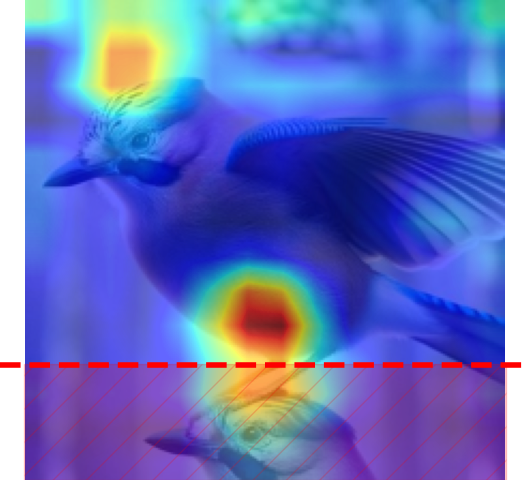}\\[1cm]
\rotatebox[origin=c]{90}{ScoreCAM} & \includegraphics[width=.2\linewidth,valign=m]{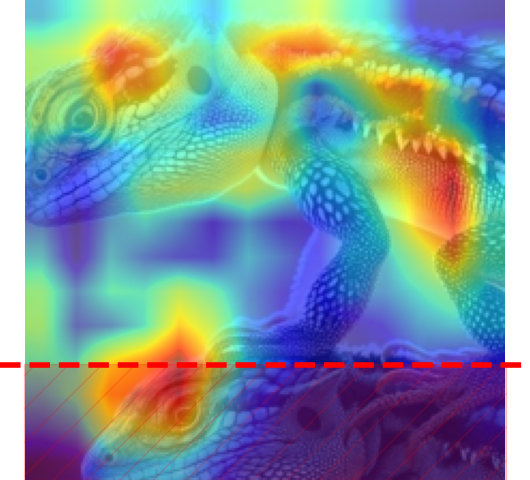} & \includegraphics[width=.2\linewidth,valign=m]{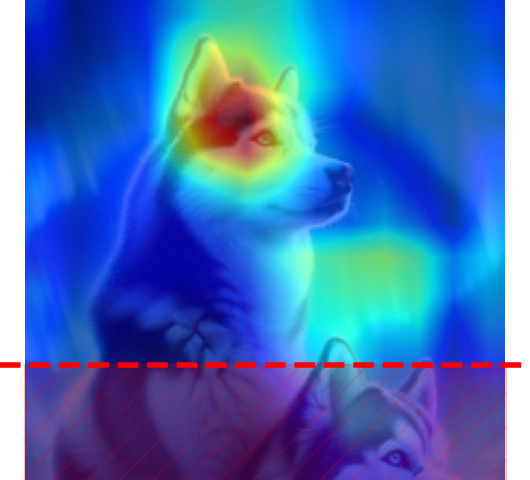} & \includegraphics[width=.2\linewidth,valign=m]{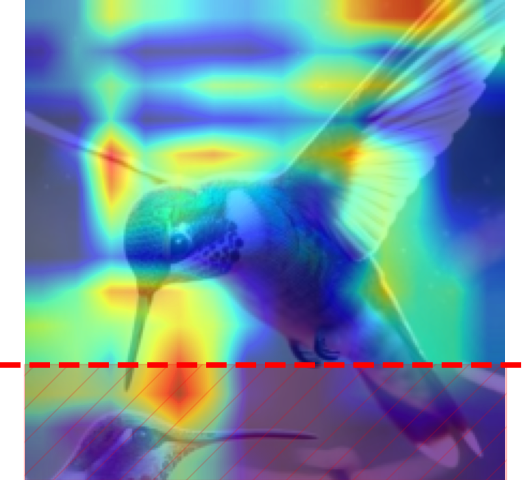}& \includegraphics[width=.2\linewidth,valign=m]{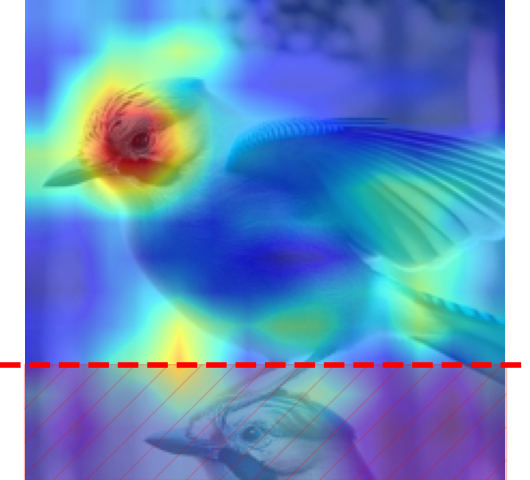}\\[1cm]
\rotatebox[origin=c]{90}{Opti-CAM} & \includegraphics[width=.2\linewidth,valign=m]{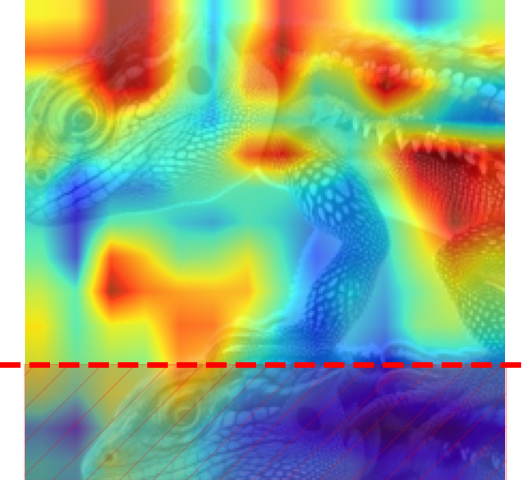} & \includegraphics[width=.2\linewidth,valign=m]{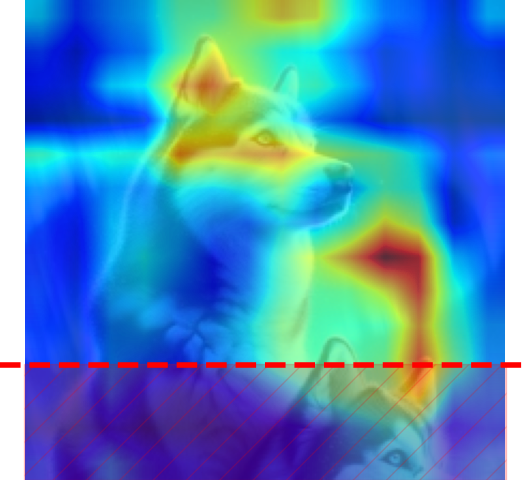} & \includegraphics[width=.2\linewidth,valign=m]{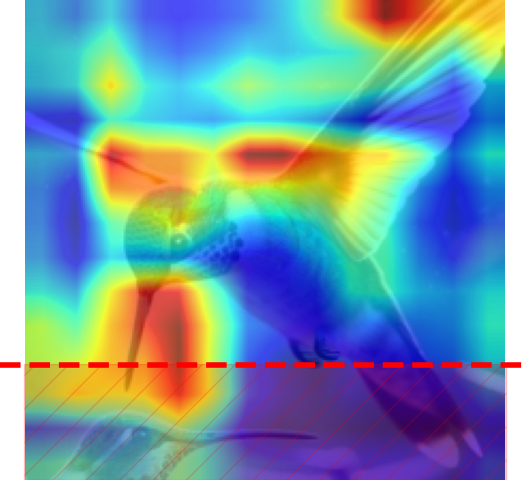}& \includegraphics[width=.2\linewidth,valign=m]{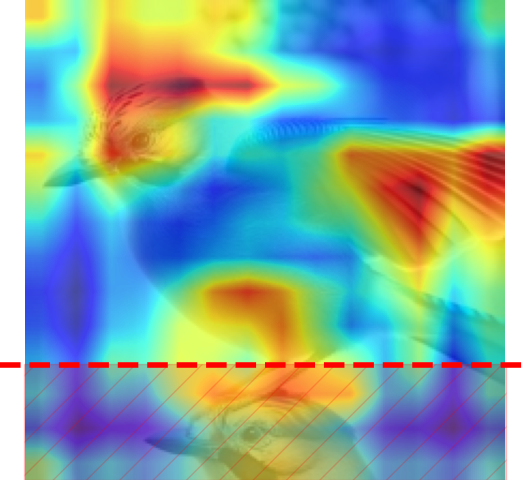}\\[1cm]
\rotatebox[origin=c]{90}{AblationCAM} & \includegraphics[width=.2\linewidth,valign=m]{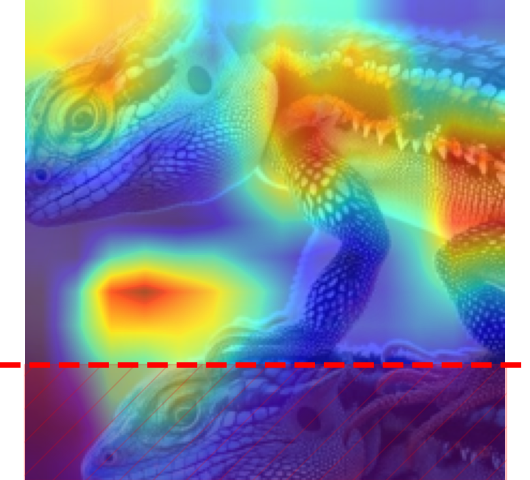} & \includegraphics[width=.2\linewidth,valign=m]{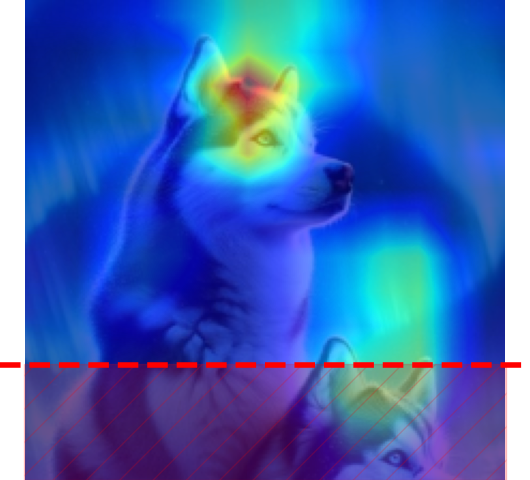} & \includegraphics[width=.2\linewidth,valign=m]{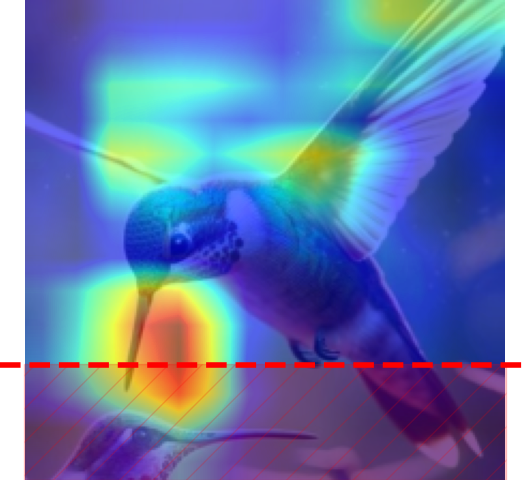}& \includegraphics[width=.2\linewidth,valign=m]{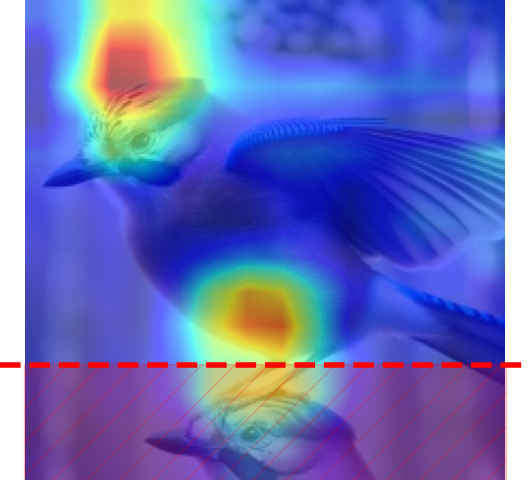}\\[1cm]
\rotatebox[origin=c]{90}{EigenCAM} & \includegraphics[width=.2\linewidth,valign=m]{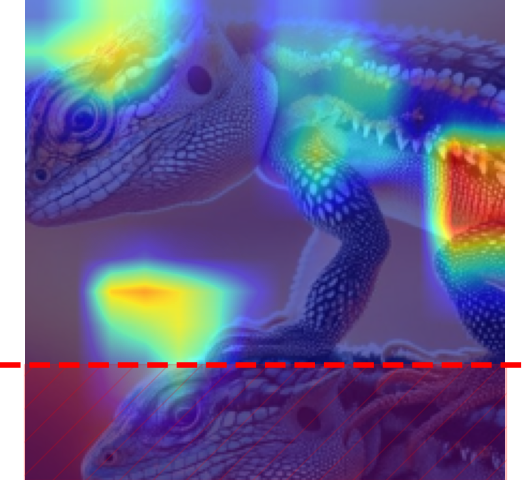} & \includegraphics[width=.2\linewidth,valign=m]{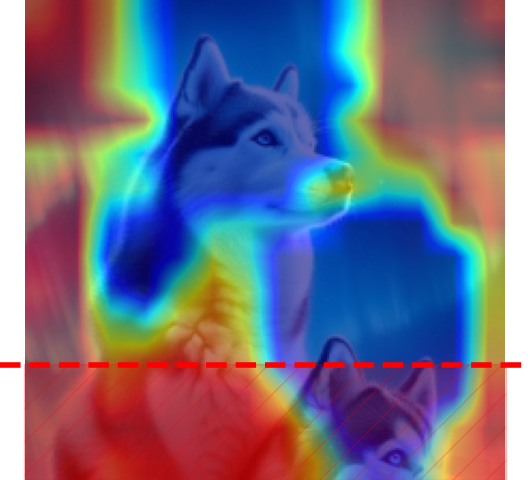} & \includegraphics[width=.2\linewidth,valign=m]{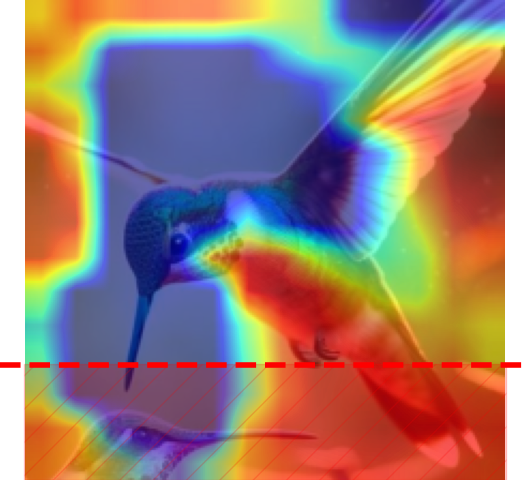}& \includegraphics[width=.2\linewidth,valign=m]{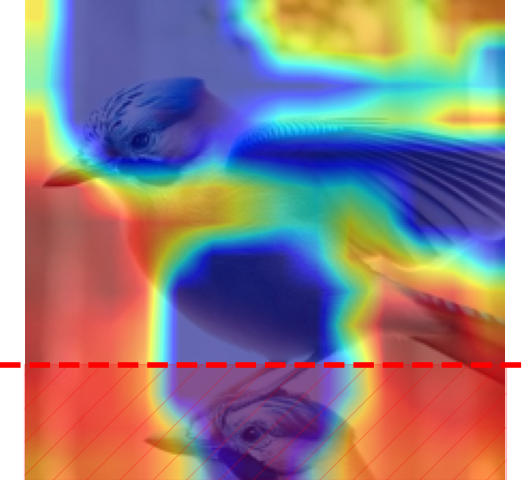}\\[1cm]
\rotatebox[origin=c]{90}{HiResCAM} & \includegraphics[width=.2\linewidth,valign=m]{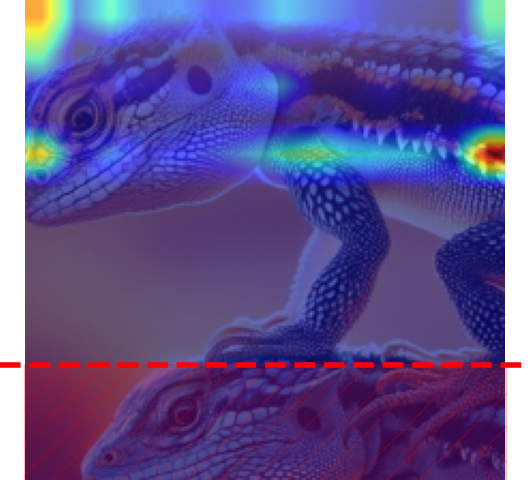} & \includegraphics[width=.2\linewidth,valign=m]{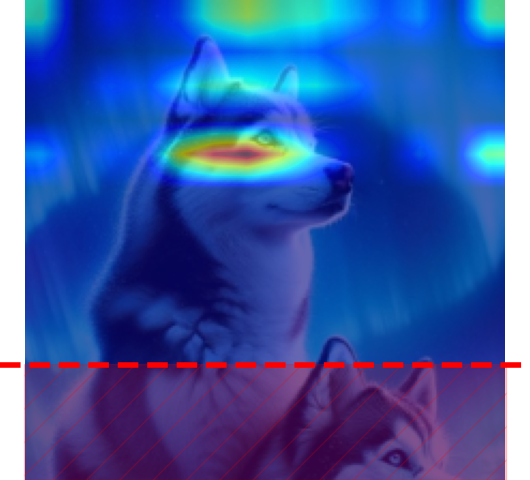} & \includegraphics[width=.2\linewidth,valign=m]{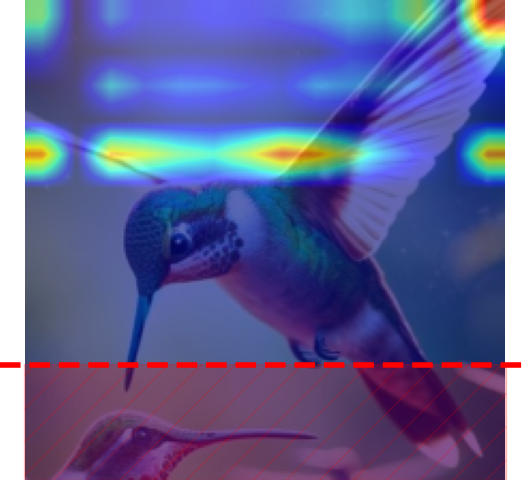}& \includegraphics[width=.2\linewidth,valign=m]{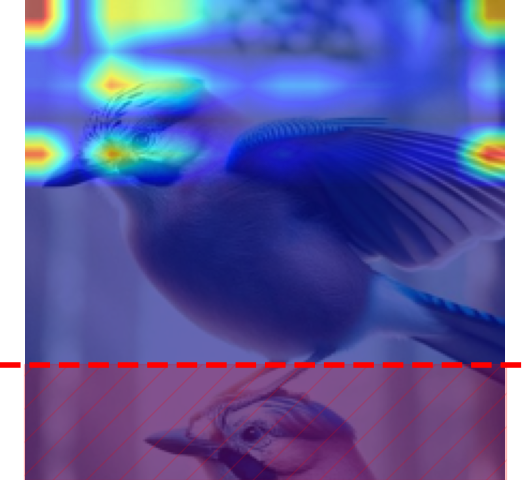}
    \end{tabular}
    \caption{Saliency maps given by the considered CAM-based methods for $\abrev{VGG}$. With the notable exception of HiresCAM, they all highlight parts of images from $\textsc{STACK-GEN}$ which are unseen by the network (this is denoted by the red, rectangular shape in the lower part of the image).}
\label{fig:qualitative-results-1}
\end{figure}

\begin{table}[t] 
\caption{\label{tab:cam-evaluation}Activity in the unseen part of the image, measured by $\mu(\cdot) \times 100$ for several CAM-based methods on both proposed datasets (only images in the validation set are considered).
}

\def\arraystretch{1.2}
\centering
\begin{tabular}{@{}l@{\hspace{0.7cm}}r@{\hspace{0.7cm}}r@{}} 
\toprule

methods  & \textsc{STACK-MIX} $\downarrow$ & \textsc{STACK-GEN} $\downarrow$ \\

\midrule

GradCAM~\citep{selvaraju_et_al_2017} & $22.7 \pm 13.4$ & $21.6 \pm 11.6$  \\

GradCAM++~\citep{ChattopadhayCam} & $28.8 \pm 8.1$ & $28.5 \pm 7.9$  \\

XGradCAM~\citep{fuXcam} & $23.8 \pm 9.0$ &  $22.8 \pm 9.0$ \\

ScoreCAM~\citep{wangScorecam} & $19.9 \pm 10.3$ &  $18.5 \pm 10.6$ \\

Opti-CAM~\citep{zhang_et_al_2023} & $32.7 \pm 7.9$ &  $32.0 \pm 7.8$ \\

AblationCAM~\citep{desaiAblationcam} & $21.0 \pm 9.9$ &  $20.8 \pm 9.6$ \\

EigenCAM~\citep{muhammadEigencam} & $51.7 \pm 19.7$ &  $55.8 \pm 21.6$ \\

HiResCAM~\citep{draelos_carin_2021} & $0.0 \pm 0.0$ &  $0.0 \pm 0.0$ \\

\bottomrule

\end{tabular}
\end{table}

For our $\abrev{VGG}$, we generate saliency maps from various CAM-based methods on our two datasets, $\textsc{STACK-MIX}$ and $\textsc{STACK-GEN}$, using the predicted category for each example.
We used publicly available implementations whenever possible.
Regarding Opti-CAM, since our model differs from the one described by \cite{zhang_et_al_2023}, we have adjusted the learning rate and number of epochs of the optimization step to achieve a low average drop, as described in the original paper.
For each method, we measure how much of the CAM-based saliency maps emphasize the unseen part, \emph{i.e.}, the dead zone. 
We use the metric $\mu(\cdot)$ defined for a upscaled saliency map $\Cam \in \Reals^{224 \times 224}_+$ as follows:
\begin{equation}
    \mu(\Cam) \defeq \frac{\norm{\Cam_{171:224, \, :}}_2}{\norm{\Cam}_2} \, ,
\end{equation}
where $\norm{\cdot}_2$ is the $\ell^2$-norm and the lower part of the image $\Img_{:, \, 171:224, \, :}$ is unseen by our $\abrev{VGG}$.
We note that for a saliency map $\Cam$, the lower $\mu(\Cam)$, the better.

The results can be found in Table~\ref{tab:cam-evaluation}, Figure~\ref{fig:qualitative-results} and \ref{fig:qualitative-results-1}. 
We observe that every CAM-based methods, except HiResCAM, highlights unseen parts of an image to some extent.
Moreover, the observation are consistent over both datasets.

We believe that HiResCAM avoids this problem because of how its weighting coefficients are computed. 
Following the notation of Section~\ref{sec:setting}, these can be written $\alpha^{(4)}_v \defeq \nabla_{\Bbf^{(v)}} f(\Bbf) \in \Reals^{14 \times 14}$,  
and are applied \emph{globally} to $\Bbf^{(v)}$ (see Definition~4 in the Appendix for more details). 
Because of the masking used in our model, the lower part of $\alpha^{(4)}_v$ is zeroed out, and therefore HiResCAM does not show activity in the lower part of the image. 

We notice that, while at first glance HiResCAM appears to perform well in our setting, it has another issue: 
since $( \alpha^{(4)}_v )_{-9:, \, :} = 0$, the upscaled HiResCAM's saliency map $\Cam \in \Reals^{224 \times 224}_+$ will be zero out in a larger area than the deadzone. 
Namely, $\Cam_{89:224, \, :} = 0$ which represents $61 \%$ of the input image area compared to the $24 \%$ of the deadzone. 
This issue can be observed in Figure~\ref{fig:qualitative-results}.


\section{Conclusion}
\label{sec:conclusion}

In this paper, we looked into several CAM-based methods, with a particular focus on GradCAM. 
We showed that they can highlight parts of the input image that are provably not used by the network. 
This was also showed theoretically, looking at the behavior of GradCAM for a simple, masked CNN at initialization: the saliency map is positive in expectation, even in areas which are unseen by the network. 
Experimentally, this phenomenon appears to remain true, even on a realistic network trained to a good accuracy on ImageNet. 

As future work, we would like to extend the theory to a ResNet-like architecture and other CAM-based methods, such as LayerCAM~\citep{jiang2021layercam}. 
We also would like to multiply the number of images in our two new datasets, with the hope that this framework can become a standard check for saliency maps explanations.

\section*{Acknowledgements}

This work was funded in part by the French Agence Nationale de la Recherche (grant number ANR-19-CE23-0009-01 and ANR-21-CE23-0005-01). 
Most of this work was realized while DG was employed at Universit\'e C\^ote d'Azur. 
We thank Jenny Benois-Pineau for her valuable insights.


\bibliographystyle{unsrtnat}
\bibliography{biblio}

\begin{thebibliography}{37}
\providecommand{\natexlab}[1]{#1}
\providecommand{\url}[1]{\texttt{#1}}
\expandafter\ifx\csname urlstyle\endcsname\relax
  \providecommand{\doi}[1]{doi: #1}\else
  \providecommand{\doi}{doi: \begingroup \urlstyle{rm}\Url}\fi

\bibitem[Ben{\'\i}tez et~al.(1997)Ben{\'\i}tez, Castro, and
  Requena]{benitez_et_al_1997}
Jos{\'e}~Manuel Ben{\'\i}tez, Juan~Luis Castro, and Ignacio Requena.
\newblock Are artificial neural networks black boxes?
\newblock \emph{IEEE Transactions on Neural Networks}, 1997.

\bibitem[Lipton(2018)]{lipton18}
Zachary~C. Lipton.
\newblock {The Mythos of Model Interpretability: In Machine Learning, the
  Concept of Interpretability is Both Important and Slippery.}
\newblock 2018.

\bibitem[Zhang et~al.(2021)Zhang, Ti{\v{n}}o, Leonardis, and
  Tang]{zhang2021survey}
Yu~Zhang, Peter Ti{\v{n}}o, Ale{\v{s}} Leonardis, and Ke~Tang.
\newblock {A Survey on Neural Network Interpretability}.
\newblock \emph{IEEE Transactions on Emerging Topics in Computational
  Intelligence}, 2021.

\bibitem[Linardatos et~al.(2021)Linardatos, Papastefanopoulos, and
  Kotsiantis]{linardatos_et_al_2021}
Pantelis Linardatos, Vasilis Papastefanopoulos, and Sotiris Kotsiantis.
\newblock {Explainable AI: A Review of Machine Learning Interpretability
  Methods}.
\newblock \emph{Entropy}, 2021.

\bibitem[Simonyan et~al.(2013)Simonyan, Vedaldi, and
  Zisserman]{simonyan_et_al_2013}
Karen Simonyan, Andrea Vedaldi, and Andrew Zisserman.
\newblock {Deep Inside Convolutional Networks: Visualising Image Classification
  Models and Saliency Maps}.
\newblock \emph{arXiv preprint arXiv:1312.6034}, 2013.

\bibitem[Fukushima(1980)]{fukushima_1980}
Kunihiko Fukushima.
\newblock {Neocognitron: A self-organizing neural network model for a mechanism
  of pattern recognition unaffected by shift in position}.
\newblock \emph{Biological Cybernetics}, 1980.

\bibitem[Zhou et~al.(2016)Zhou, Khosla, Lapedriza, Oliva, and
  Torralba]{zhou_et_al_2016}
Bolei Zhou, Aditya Khosla, Agata Lapedriza, Aude Oliva, and Antonio Torralba.
\newblock {Learning Deep Features for Discriminative Localization}.
\newblock In \emph{Proceedings of the IEEE Conference on Computer Vision and
  Pattern Recognition}, 2016.

\bibitem[Selvaraju et~al.(2017)Selvaraju, Cogswell, Das, Vedantam, Parikh, and
  Batra]{selvaraju_et_al_2017}
Ramprasaath~R. Selvaraju, Michael Cogswell, Abhishek Das, Ramakrishna Vedantam,
  Devi Parikh, and Dhruv Batra.
\newblock {Grad-CAM: Visual Explanations from Deep Networks via Gradient-Based
  Localization}.
\newblock In \emph{2017 IEEE International Conference on Computer Vision
  (ICCV)}, 2017.

\bibitem[Zhang et~al.(2023)Zhang, Torres, Sicre, Avrithis, and
  Ayache]{zhang_et_al_2023}
Hanwei Zhang, Felipe Torres, Ronan Sicre, Yannis Avrithis, and Stephane Ayache.
\newblock {Opti-CAM: Optimizing saliency maps for interpretability}.
\newblock \emph{arXiv preprint arXiv:2301.07002}, 2023.

\bibitem[Kindermans et~al.(2019)Kindermans, Hooker, Adebayo, Alber, Sch{\"u}tt,
  D{\"a}hne, Erhan, and Kim]{kindermans2019reliability}
Pieter-Jan Kindermans, Sara Hooker, Julius Adebayo, Maximilian Alber, Kristof~T
  Sch{\"u}tt, Sven D{\"a}hne, Dumitru Erhan, and Been Kim.
\newblock {The (Un)reliability of saliency methods}.
\newblock \emph{Explainable AI: Interpreting, explaining and visualizing deep
  learning}, 2019.

\bibitem[Ghorbani et~al.(2019)Ghorbani, Abid, and
  Zou]{ghorbani2019interpretation}
Amirata Ghorbani, Abubakar Abid, and James Zou.
\newblock {Interpretation of Neural Networks is Fragile}.
\newblock In \emph{Proceedings of the AAAI Conference on Artificial
  Intelligence}, 2019.

\bibitem[Adebayo et~al.(2018)Adebayo, Gilmer, Muelly, Goodfellow, Hardt, and
  Kim]{adebayo2018}
Julius Adebayo, Justin Gilmer, Michael Muelly, Ian Goodfellow, Moritz Hardt,
  and Been Kim.
\newblock {Sanity Checks for Saliency Maps}.
\newblock In \emph{Advances in Neural Information Processing Systems}, 2018.

\bibitem[Draelos and Carin(2021)]{draelos_carin_2021}
Rachel~Lea Draelos and Lawrence Carin.
\newblock {Use HiResCAM instead of Grad-CAM for faithful explanations of
  convolutional neural networks}.
\newblock \emph{arxiv preprint 2011.08891}, 2021.

\bibitem[Garreau and Mardaoui(2021)]{garreau_mardaoui_2021}
Damien Garreau and Dina Mardaoui.
\newblock {What does LIME really see in images?}
\newblock In \emph{International Conference on Machine Learning}. PMLR, 2021.

\bibitem[Ribeiro et~al.(2016)Ribeiro, Singh, and Guestrin]{ribeiro_et_al_2016}
Marco~Tulio Ribeiro, Sameer Singh, and Carlos Guestrin.
\newblock {``Why Should I Trust You?'': Explaining the Predictions of Any
  Classifier}.
\newblock In \emph{Proceedings of the 22nd ACM SIGKDD International Conference
  on Knowledge Discovery and Data mining}, 2016.

\bibitem[Heo et~al.(2019)Heo, Joo, and Moon]{heo2019}
Juyeon Heo, Sunghwan Joo, and Taesup Moon.
\newblock {Fooling Neural Network Interpretations via Adversarial Model
  Manipulation}.
\newblock In \emph{Advances in Neural Information Processing Systems}, 2019.

\bibitem[Deng et~al.(2009)Deng, Dong, Socher, Li, Li, and
  Fei-Fei]{dengimagenet2009}
Jia Deng, Wei Dong, Richard Socher, Li-Jia Li, Kai Li, and Li~Fei-Fei.
\newblock {ImageNet: A Large-Scale Hierarchical Image Database}.
\newblock In \emph{IEEE Conference on Computer Vision and Pattern Recognition},
  2009.

\bibitem[LeCun et~al.(1998)LeCun, Bottou, Bengio, and
  Haffner]{lecun_et_al_1998}
Yann LeCun, L{\'e}on Bottou, Yoshua Bengio, and Patrick Haffner.
\newblock {Gradient-Based Learning Applied to Document Recognition}.
\newblock \emph{Proceedings of the IEEE}, 1998.

\bibitem[Glorot and Bengio(2010)]{glorot_bengio_2010}
Xavier Glorot and Yoshua Bengio.
\newblock Understanding the difficulty of training deep feedforward neural
  networks.
\newblock In \emph{Proceedings of the 13th International Conference on
  Artificial Intelligence and Statistics}, 2010.

\bibitem[He et~al.(2015)He, Zhang, Ren, and Sun]{he_et_al_2015}
Kaiming He, Xiangyu Zhang, Shaoqing Ren, and Jian Sun.
\newblock {Delving Deep into Rectifiers: Surpassing Human-Level Performance on
  ImageNet Classification}.
\newblock In \emph{Proceedings of the IEEE International Conference on Computer
  Vision}, 2015.

\bibitem[Fu et~al.(2020)Fu, Hu, Dong, Guo, Gao, and Li]{fuXcam}
Ruigang Fu, Qingyong Hu, Xiaohu Dong, Yulan Guo, Yinghui Gao, and Biao Li.
\newblock {Axiom-based Grad-CAM: Towards Accurate Visualization and Explanation
  of CNNs}.
\newblock In \emph{31st British Machine Vision Conference}, 2020.

\bibitem[Chattopadhay et~al.(2018)Chattopadhay, Sarkar, Howlader, and
  Balasubramanian]{ChattopadhayCam}
Aditya Chattopadhay, Anirban Sarkar, Prantik Howlader, and Vineeth~N
  Balasubramanian.
\newblock {Grad-CAM++: Generalized Gradient-Based Visual Explanations for Deep
  Convolutional Networks}.
\newblock In \emph{IEEE Winter Conference on Applications of Computer Vision},
  2018.

\bibitem[Wang et~al.(2020)Wang, Wang, Du, Yang, Zhang, Ding, Mardziel, and
  Hu]{wangScorecam}
H.~Wang, Z.~Wang, M.~Du, F.~Yang, Z.~Zhang, S.~Ding, P.~Mardziel, and X.~Hu.
\newblock {Score-CAM: Score-Weighted Visual Explanations for Convolutional
  Neural Networks}.
\newblock In \emph{2020 IEEE/CVF Conference on Computer Vision and Pattern
  Recognition Workshops}, 2020.

\bibitem[Desai and Ramaswamy(2020)]{desaiAblationcam}
Saurabh Desai and Harish~G. Ramaswamy.
\newblock {Ablation-CAM: Visual Explanations for Deep Convolutional Network via
  Gradient-free Localization}.
\newblock In \emph{IEEE Winter Conference on Applications of Computer Vision
  (WACV)}, 2020.

\bibitem[Lee et~al.(2019)Lee, Xiao, Schoenholz, Bahri, Novak, Sohl-Dickstein,
  and Pennington]{lee_et_al_2019}
Jaehoon Lee, Lechao Xiao, Samuel Schoenholz, Yasaman Bahri, Roman Novak, Jascha
  Sohl-Dickstein, and Jeffrey Pennington.
\newblock {Wide Neural Networks of Any Depth Evolve as Linear Models Under
  Gradient Descent}.
\newblock \emph{Advances in Neural Information Processing Systems}, 2019.

\bibitem[Du et~al.(2019)Du, Lee, Li, Wang, and Zhai]{du_et_al_2019}
Simon Du, Jason Lee, Haochuan Li, Liwei Wang, and Xiyu Zhai.
\newblock {Gradient Descent Finds Global Minima of Deep Neural Networks}.
\newblock In \emph{Proceedings of the 36th International Conference on Machine
  Learning}, 2019.

\bibitem[Allen-Zhu et~al.(2019{\natexlab{a}})Allen-Zhu, Li, and
  Song]{allen_li_song_2019_b}
Zeyuan Allen-Zhu, Yuanzhi Li, and Zhao Song.
\newblock {A Convergence Theory for Deep Learning via Over-Parameterization}.
\newblock In \emph{Proceedings of the 36th International Conference on Machine
  Learning}, 2019{\natexlab{a}}.

\bibitem[Allen-Zhu et~al.(2019{\natexlab{b}})Allen-Zhu, Li, and
  Song]{allen_li_song_2019_a}
Zeyuan Allen-Zhu, Yuanzhi Li, and Zhao Song.
\newblock {On the Convergence Rate of Training Recurrent Neural Networks}.
\newblock \emph{Advances in Neural Information Processing Systems},
  2019{\natexlab{b}}.

\bibitem[Zou et~al.(2020)Zou, Cao, Zhou, and Gu]{zou_et_al_2020}
Difan Zou, Yuan Cao, Dongruo Zhou, and Quanquan Gu.
\newblock Gradient descent optimizes over-parameterized deep relu networks.
\newblock \emph{Machine Learning}, 2020.

\bibitem[He et~al.(2016)He, Zhang, Ren, and Sun]{he_et_al_2016}
Kaiming He, Xiangyu Zhang, Shaoqing Ren, and Jian Sun.
\newblock {Deep Residual Learning for Image Recognition}.
\newblock In \emph{Proceedings of the IEEE Conference on Computer Vision and
  Pattern Recognition}, 2016.

\bibitem[Simonyan and Zisserman(2015)]{SimonyanZisserman2015}
Karen Simonyan and Andrew Zisserman.
\newblock {Very Deep Convolutional Networks for Large-Scale Image Recognition}.
\newblock In \emph{ICLR}, 2015.

\bibitem[Yun et~al.(2019)Yun, Han, Chun, Oh, Yoo, and Choe]{yuncutmix2019}
S.~Yun, D.~Han, S.~Chun, S.~Oh, Y.~Yoo, and J.~Choe.
\newblock {CutMix: Regularization Strategy to Train Strong Classifiers With
  Localizable Features}.
\newblock In \emph{IEEE/CVF International Conference on Computer Vision}, 2019.

\bibitem[Brown et~al.(2020)Brown, Mann, Ryder, Subbiah, Kaplan, Dhariwal,
  Neelakantan, Shyam, Sastry, Askell, et~al.]{brown2020language}
Tom Brown, Benjamin Mann, Nick Ryder, Melanie Subbiah, Jared~D Kaplan, Prafulla
  Dhariwal, Arvind Neelakantan, Pranav Shyam, Girish Sastry, Amanda Askell,
  et~al.
\newblock {Language Models are Few-Shot Learners}.
\newblock \emph{Advances in Neural Information Processing Systems}, 2020.

\bibitem[Ramesh et~al.(2021)Ramesh, Pavlov, Goh, Gray, Voss, Radford, Chen, and
  Sutskever]{ramesh2021zero}
Aditya Ramesh, Mikhail Pavlov, Gabriel Goh, Scott Gray, Chelsea Voss, Alec
  Radford, Mark Chen, and Ilya Sutskever.
\newblock {Zero-Shot Text-to-Image Generation}.
\newblock In \emph{International Conference on Machine Learning}, 2021.

\bibitem[Bany~Muhammad and Yeasin(2021)]{muhammadEigencam}
Mohammed Bany~Muhammad and Mohammed Yeasin.
\newblock {Eigen-CAM: Visual Explanations for Deep Convolutional Neural
  Networks}.
\newblock \emph{SN Computer Science}, 2021.

\bibitem[Jiang et~al.(2021)Jiang, Zhang, Hou, Cheng, and
  Wei]{jiang2021layercam}
Peng-Tao Jiang, Chang-Bin Zhang, Qibin Hou, Ming-Ming Cheng, and Yunchao Wei.
\newblock {LayerCAM: Exploring Hierarchical Class Activation Maps for
  Localization}.
\newblock \emph{IEEE Transactions on Image Processing}, 2021.

\bibitem[Beauchamp(2018)]{beauchamp2018numerical}
Maxime Beauchamp.
\newblock {On numerical computation for the distribution of the convolution of
  N independent rectified Gaussian variables}.
\newblock \emph{Journal de la Soci{\'e}t{\'e} Fran{\c{c}}aise de Statistique},
  2018.

\end{thebibliography}


\appendix


\newpage
\section{Other definitions}
\label{app:defs}

The work of \cite{fuXcam} proposed an alternative to $\abrev{GC}$ by re-scaling the gradient in the weighting coefficient $\alpha$. 
The main objective of this alternative is to obtain a method that satisfies two axioms (sensitivity and conservation properties) defined by \cite{fuXcam}, which, according to their findings, enhances visualization performance \wrt specific metrics.

\begin{definition}[XGradCAM]
\label{def:xgradcam}
In our notation, for an input $\Img$ and model $\Model$, the XGradCAM feature scores are given by
\begin{align*}
\abrev{XC} \defeq \activ{\sum_{v = 1}^{V} \alpha_v^{(2)} \Bbf^{(v)}} \in \Reals_+^{h \times w}
\, ,
\end{align*}
where each $\alpha_v^{(2)} \defeq \gap{\frac{\Bbf^{(v)}}{\norm{\Bbf^{(v)}}_1} \odot \nabla_{\Bbf^{(v)}} f(\Bbf)}$ is the \emph{global average pooling} of the gradient of $f$ at $\Bbf^{(v)}$ rescaled by the normalized activation $\frac{\Bbf^{(v)}}{\norm{\Bbf^{(v)}}_1}$.
\end{definition}

GradCAM++~\citep{ChattopadhayCam} then modified GradCAM by computing rectified gradient together with second and third order derivative information. 
This alternative appears to produce better saliency maps in cases where the input image contains multiple subjects. 
\begin{definition}[GradCAM++]
\label{def:gradcamplusplus}
In our notation, for an input $\Img$ and model $\Model$, the GradCAM++ feature scores are given by
\begin{align*}
\abrev{C+} \defeq \activ{\sum_{v = 1}^{V} \alpha_v^{(3)} \Bbf^{(v)}} \in \Reals_+^{h \times w}
\, ,
\end{align*}
where each $\alpha_v^{(3)} \defeq \gap{ \frac{\partial_{\Bbf^{(v)}}^2 f(\Bbf)}{2 \partial_{\Bbf^{(v)}}^2 f(\Bbf) + \norm{\Bbf^{(v)}}_1 \partial_{\Bbf^{(v)}}^3 f(\Bbf)} \odot \activ{\nabla_{\Bbf^{(v)}} f(\Bbf)}}$ is the \emph{global average pooling} of the gradient of $f$ at $\Bbf^{(v)}$ rescaled by second and third order derivatives defined as 
\[
\partial_{\Bbf^{(v)}}^n f(\Bbf) \defeq \left( \partial_{\Bbf^{(v)}_{i,j}}^n f(\Bbf) \right)_{i,j \in [h] \times [w]}
\, .
\]
\end{definition}

More recently, HiResCAM~\citep{draelos_carin_2021} proposed to replace the averaging of the gradient over the map by the \emph{element-wise} multiplication between gradient and activation. 
In our notation: 

\begin{definition}[HiResCAM]
\label{def:hirescam}
For an input $\Img$ and model $\Model$, the HiResCAM feature scores are given by
\begin{align*}
\abrev{HC} \defeq \activ{\sum_{v = 1}^{V} \alpha^{(4)}_v \odot \Bbf^{(v)}} \in \Reals_+^{h \times w}
\, ,
\end{align*}
where $\alpha^{(4)}_v \defeq \nabla_{\Bbf^{(v)}} f(\Bbf) \in \Reals^{h \times w}$ and $\odot$ is the element-wise matrix product. 
\end{definition}

The work of \cite{wangScorecam} got rid of the dependency on gradients of the weighting coefficients $\alpha$. 

\begin{definition}[ScoreCAM]
\label{def:scorecam}
In our notation, for an input $\Img$ and model $\Model$, the ScoreCAM feature scores are given by
\begin{align*}
\abrev{SC} \defeq \activ{\sum_{v = 1}^{V} \alpha_v^{(5)} \Bbf^{(v)}} \in \Reals_+^{h \times w} 
\, .
\end{align*}
In the previous display, $\alpha^{(5)}=\mathrm{softmax}(\beta^{(5)})$, where $\beta_v^{(5)} \defeq \model{\Img \odot H^{(v)}} - \model{\Img_b}$, $H^{(v)} \defeq s\left(\mathrm{Up}\left(\Bbf^{(v)}\right)\right)$ the upsampled and normalized activation map $\Bbf^{(v)}$, $s(\cdot)$ a normalization function that maps matrix values into $[0,1]$, and $\mathrm{Up}(\cdot)$ an upsampling function which resize a matrix to the size of $\Img$. The baseline input image $\Img_b$ can be taken as $\Img_b \defeq \Img$.
\end{definition}

Similar to ScoreCAM, the work of \cite{zhang_et_al_2023} introduces Opti-CAM, a method that uses optimization to compute the weighting coefficient $\alpha$.

\begin{definition}[Opti-CAM]
\label{def:opticam}
Define 
\[
H_\beta\defeq s\left(\mathrm{Up}\left(\sum_{v=1}^{V} \mathrm{softmax}(\beta)_v \Bbf^{(v)}\right)\right)
\, ,
\]
where $s(\cdot)$ is a normalization function that maps matrix values into $[0,1]$, and $\mathrm{Up}(\cdot)$ an upsampling function which resizes a matrix to the size of $\Img$. 
Take $\alpha^{(6)}=\mathrm{softmax}(\beta^{\star})$, where $\beta^{\star}$ is solution to the optimization problem
\begin{align*}
\Maximize_{\beta \in \Reals^V} 
\model{\Img \odot H_\beta } 
\, .
\end{align*}
Then, for an input $\Img$, the Opti-CAM feature scores are given by
\begin{align*}
\abrev{OC} \defeq \activ{\sum_{v = 1}^{V} \alpha_v^{(6)} \Bbf^{(v)}} \in \Reals_+^{h \times w} 
\, .
\end{align*}
\end{definition}


It is evident from the definitions of ScoreCAM and OptiCAM that the ReLU function $\sigma(\cdot)$ is redundant when $\Bbf$ represents the rectified activation maps, since they are already positive.

Finally, AblationCAM~\citep{desaiAblationcam} removes each activation to observe the impact on the prediction. 
Ablations resulting in larger drops receive higher weights.
AblationCAM is similar to ScoreCAM: a sort of masking is performed to observe changes in prediction and no gradient is required. 
However, both methods require lots of forward passes through the network.

\begin{definition}[AblationCAM]
\label{def:ablationcam}
In our notation, for an input $\Img$ and model $\Model$, the AblationCAM feature scores are given by
\begin{align*}
\abrev{AC} \defeq \activ{\sum_{v = 1}^{V} \alpha_v^{(7)} \Bbf^{(v)}} \in \Reals_+^{h \times w}
\, ,
\end{align*}
where each $\alpha_v^{(7)} \defeq \frac{\ybf - \ybf_v}{\ybf}$, $\ybf \defeq \model{\Img}$ the predicted score by our model and $\ybf_v$ is the output of $\Model$ when the activation map $\Bbf^{(v)}$ is zero out.
\end{definition}

It should be clear that the method we use to compute the weighting coefficients $\alpha$ of $\abrev{GC}$, in Proposition~\ref{prop:l-hidden-layer}, yields the coefficients $\alpha^{(4)}$ for $\abrev{HC}$ and $\alpha^{(2)}$ for $\abrev{XC}$, since in proving Proposition~\ref{prop:l-hidden-layer} we have computed $\nabla_{\Bbf^{(v)}} f(\Bbf)$, which is the challenging part.






\section{Technical results}
\label{sec:technical-results}

As a consequence of Eq.~\eqref{eq:def-activation-single-conv}, $\Abf$ (seens as a vector of size $\Ashp\times \Ashpp$) is a centered Gaussian vector, with covariance matrix given by
\begin{equation}
\label{eq:covariance-post-convolution}
\forall (i,j),(i',j')\in ([\Ashp]\times [\Ashpp])^2,\,\,   
\left(\Sigma^A\right)_{(i,j),(i',j')} = \trace{\xi_{i:i+\Fshp,j:j+\Fshp}^\top \xi_{i':i'+\Fshp,j':j'+\Fshp}} 
\, .
\end{equation}
Because of this simple remark, we can describe precisely the distribution of $\Abf$ (and, further, $\Bbf$), thanks to the following lemmas. 
We denote by $\phi$ the density of the standard Gaussian distribution and $\Phi$ its cumulative distribution function.

\begin{lemma}[Expectation of rectified Gaussian] 
\label{lemma:expec-relu}
    Let $X \sim \gaussian{\mu}{\tau^2}$, we get the following expectation for the rectified Gaussian $X^+ \defeq \activ{X}$:
    \begin{align*}
        \expec{X^+} = \mu \Phi\left(\frac{\mu}{\tau}\right) + \tau \phi\left(\frac{-\mu}{\tau}\right) 
\, .
    \end{align*}
\end{lemma}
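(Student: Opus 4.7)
The plan is a direct computation of the expectation. I would first write
\[
\expec{X^+} = \int_0^\infty x \cdot \frac{1}{\tau\sqrt{2\pi}} \exp{-\frac{(x-\mu)^2}{2\tau^2}} \Diff x,
\]
and then standardize via the substitution $u = (x-\mu)/\tau$. This turns the integrand into $(\mu + \tau u)\,\phi(u)$ and replaces the lower bound $x = 0$ with $u = -\mu/\tau$. The contribution from the region $\{X < 0\}$ is automatically discarded because $X^+$ is zero there, which is the whole reason the lower bound becomes $-\mu/\tau$ rather than $-\infty$.

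Splitting the resulting integral by linearity yields two pieces to evaluate separately:
\[
\mu \int_{-\mu/\tau}^{\infty} \phi(u)\,\Diff u \quad \text{and} \quad \tau \int_{-\mu/\tau}^{\infty} u\,\phi(u)\,\Diff u.
\]
The first equals $\mu\bigl(1 - \Phi(-\mu/\tau)\bigr) = \mu\,\Phi(\mu/\tau)$ by the symmetry of the standard Gaussian cumulative distribution function. For the second, I would use the elementary identity $\phi'(u) = -u\,\phi(u)$, so that $-\phi$ is an antiderivative of $u\mapsto u\,\phi(u)$; this gives $\int_{-\mu/\tau}^{\infty} u\,\phi(u)\,\Diff u = \phi(-\mu/\tau)$, since $\phi$ vanishes at $\infty$. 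Summing the two contributions produces exactly the announced formula.

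There is no real obstacle: this is a textbook computation, and the only subtlety is keeping track of the sign conventions when converting $1 - \Phi(-\mu/\tau)$ to $\Phi(\mu/\tau)$. I include it for completeness because the resulting closed form will be plugged into later computations concerning the expected GradCAM scores in Theorem~\ref{th:expected-gradcam}.
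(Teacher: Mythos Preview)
Your computation is correct and complete. The paper itself does not prove this lemma at all: it simply cites \cite{beauchamp2018numerical} and moves on, so your direct standardization argument actually supplies more than the paper does.
\qed
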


\begin{proof}
See \cite{beauchamp2018numerical}. 
\qed 
\end{proof}

\begin{lemma}[Law of convolution] 
\label{lemma:conv-gaussian}
    Let $\mbf \in \Reals^{k \times k}$ and $\Fbf \sim \gaussian{0}{\tau^2 \Idt_k}$, then the convolution $\Fbf \star \mbf$ has distribution $\gaussian{0}{(\tau \norm{\mbf}_2)^2}$.
\end{lemma}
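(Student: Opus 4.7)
The plan is to reduce the statement to a routine fact about linear combinations of independent Gaussians. First, I would unpack what $\Fbf \star \mbf$ means when both tensors have the same shape $k \times k$: specialising the convolution formula from Eq.~\eqref{eq:def-activation-single-conv} to a single output pixel yields
\[
\Fbf \star \mbf = \sum_{p,q=1}^{k} \Fbf_{p,q}\, \mbf_{p,q},
\]
i.e.\ the Frobenius inner product of $\Fbf$ and $\mbf$. Here the convention $\Fbf \sim \gaussian{0}{\tau^2\Idt_k}$ should be read, as elsewhere in the paper, as the $k^2$ entries $\Fbf_{p,q}$ being i.i.d.\ $\gaussian{0}{\tau^2}$ (viewing $\Fbf$ as a flattened vector).

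From there, I would invoke stability of the Gaussian family under linear combinations: since the $\mbf_{p,q}$ are deterministic weights and the $\Fbf_{p,q}$ are independent centered Gaussians, the sum above is Gaussian with mean $\expec{\Fbf \star \mbf} = 0$ by linearity, and variance
\[
\var{\Fbf \star \mbf} = \sum_{p,q=1}^{k} \mbf_{p,q}^{2} \, \var{\Fbf_{p,q}} = \tau^{2} \sum_{p,q=1}^{k} \mbf_{p,q}^{2} = \tau^{2} \norm{\mbf}_2^{2},
\]
which is exactly the claimed distribution. There is no genuine obstacle: the only subtlety worth flagging is the identification of the two-dimensional convolution (with matching filter and patch sizes) with an inner product, after which the result is just the classical closure of Gaussians under linear maps.
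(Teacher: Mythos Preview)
Your proposal is correct and follows essentially the same route as the paper: identify $\Fbf \star \mbf$ with the Frobenius inner product $\sum_{p,q}\Fbf_{p,q}\mbf_{p,q}$, then compute the mean and variance using the i.i.d.\ Gaussian assumption on the entries. If anything, you are slightly more careful in explicitly invoking closure of Gaussians under linear combinations, whereas the paper's proof only records the mean and variance.
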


\begin{proof}
First let us remind that:
\begin{align*}
    \Fbf \star \mbf &= \sum_{i,j = 1}^{k} \Fbf_{i,j} \mbf_{i,j} \, .
\end{align*}
This lemma is derived from rapid calculations that proceed as follows, utilizing the fact that the elements $\left(\Fbf_{i,j}\right)_{i,j}$ are independent and identically distributed (i.i.d.):
\begin{align*}
\expec{\Fbf \star \mbf} &= 0 \, ,\\
\var{\Fbf \star \mbf} &= \tau^2 \sum_{i,j = 1}^{k} \mbf_{i,j}^2 = \tau^2 \norm{\mbf}_2^2 \, .\\
\end{align*}
\qed 
\end{proof}

Straightforward computations yields: 

\begin{lemma}[Moments of squared rectified Gaussian] 
\label{lemma:moments-squaredrelu}
Let $X \sim \gaussian{0}{\tau^2}$ of density $f(\cdot)$, we get the following two moments for the squared rectified Gaussian $(X^+)^2$:
    \begin{align*}
        \expec{(X^+)^2} = \frac{\tau^2}{2} \quad \text{and } \var{(X^+)^2} = \frac{5}{4} \tau^4 
    \, .
    \end{align*}
\end{lemma}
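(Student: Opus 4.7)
The plan is to exploit the simple observation that the squared ReLU can be rewritten as an indicator-restricted even power, namely $(X^+)^2 = X^2 \mathbf{1}_{\{X>0\}}$ and $(X^+)^4 = X^4 \mathbf{1}_{\{X>0\}}$. Because the density of $X \sim \mathcal{N}(0,\tau^2)$ is symmetric about $0$, every even function of $X$ multiplied by $\mathbf{1}_{\{X>0\}}$ integrates to exactly half of its unrestricted expectation. Hence the whole computation reduces to recalling the even moments of a centered Gaussian.

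Concretely, I would first write
\[
\mathbb{E}[(X^+)^2] \;=\; \int_0^{\infty} x^2 f(x)\,\mathrm{d}x \;=\; \tfrac{1}{2}\,\mathbb{E}[X^2] \;=\; \tfrac{\tau^2}{2},
\]
using $\mathbb{E}[X^2]=\tau^2$. This yields the first claim directly.

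For the variance, I would use $\text{Var}[(X^+)^2] = \mathbb{E}[(X^+)^4] - (\mathbb{E}[(X^+)^2])^2$. By the same symmetry trick,
\[
\mathbb{E}[(X^+)^4] \;=\; \tfrac{1}{2}\,\mathbb{E}[X^4] \;=\; \tfrac{1}{2}\cdot 3\tau^4 \;=\; \tfrac{3\tau^4}{2},
\]
invoking the standard fourth moment $\mathbb{E}[X^4]=3\tau^4$ of a centered Gaussian. Substituting the first moment $\tau^2/2$ then gives
\[
\text{Var}[(X^+)^2] \;=\; \tfrac{3\tau^4}{2} - \tfrac{\tau^4}{4} \;=\; \tfrac{5\tau^4}{4},
\]
which matches the stated bound.

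There is essentially no obstacle here: the proof is a one-line symmetry argument combined with the textbook Gaussian moment formulas, so the main care is just to bookkeep the factors of $1/2$ arising from restricting the integration to the positive half-line. No appeal to Lemma~\ref{lemma:expec-relu} or to the convolution lemma is needed since the mean is zero, which makes all odd-moment contributions vanish.
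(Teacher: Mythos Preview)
Your proposal is correct and is precisely the kind of straightforward computation the paper has in mind; the paper itself does not spell out a proof beyond stating that the lemma follows from direct calculation. Your symmetry observation $(X^+)^{2n}=X^{2n}\indic{X>0}$ together with the standard Gaussian moments $\expec{X^2}=\tau^2$ and $\expec{X^4}=3\tau^4$ gives exactly the claimed values, with the bookkeeping of the $1/2$ factors handled correctly.
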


\section{Proof of Proposition~\ref{prop:l-hidden-layer}}
\label{sec:proof-coefs}

\paragraph{Notation.}
Before starting the proof, let us recall some notation.
In Section~\ref{sec:setting}, we defined $\abf^{(u)} \defeq \preactiv{\Cbf'}{u}$ the non-rectified activation of layer $u$, $\rbf^{(u)} \defeq \activ{\abf^{(u)}}$ its rectified counterpart. 
We also defined $\Cbf$ the max pooled rectified activation $\mathcal{M} (\Bbf)$.
In this proof, we write $f_u$ the function that maps the input of the $u$-th layer $\rbf^{(u-1)}$ to its output $\rbf^{(u)}$ as follows $f_u(\rbf^{(u-1)}) \defeq \activ{\Wbf^{(u)}\rbf^{(u-1)}}$ with $u \in [L]$. 

Now let us turn to the computation of $\nabla_{\Bbf} f (\Bbf) \in \Reals^{(\Ashp \times \Ashpp) \times 1}$ with $\Bbf \in \Reals^{\Ashp \times \Ashpp}$. 
By the chain rule, we get $\nabla_{\Bbf} f (\Bbf) =~ \nabla_{\Bbf} \left(\Fnn \circ \mathcal{M} \right)(\Bbf) = \nabla_{\Bbf} \mathcal{M} \cdot  \nabla_{\mathcal{M} (\Bbf)} \Fnn$ where: 
\begin{align*}
\nabla_{\Cbf} \Fnn &= \nabla_{\Cbf} f_1 \cdot \nabla_{f_{1}(\Cbf)} \left(\Wbf^{(L)} f_{L-1} \circ \cdots \circ f_{2}\right) \\
&= \left(\nabla_{\Cbf} f_1\right) \cdot \left(\nabla_{\rbf^{(1)}} f_2 \right) \cdots \left(\nabla_{\rbf^{(L-2)}} f_{L-1}\right) \cdot \left(\Wbf^{(L)}\right)
\, .
\end{align*}
We compute $\nabla_{\rbf^{(i)}} f_{i+1} \in \Reals^{d_i \times d_{i+1}}$ as follows:
\begin{align*}
    \nabla_{\rbf^{(i)}} f_{i+1} = \nabla_{\rbf^{(i)}} \activ{\Wbf^{(i+1)} \rbf^{(i)} } = \left(\Wbf^{(i+1)}\right)^\top \nabla_{\abf^{(i+1)}} \activ{\abf^{(i+1)}}
\, .
\end{align*}
Now let us remark that, for all $x\in\Reals^d$, $\partial_i \activ{x}_j = \partial_i \activ{x_j} = \indic{x_j > 0} \indic{i=j}$, since $\Activ$ is applied element-wise.
Thus
\begin{align*}
    \nabla_{\abf^{(i+1)}} \activ{\abf^{(i+1)}} = 
    \begin{pmatrix}
    \indic{\abf^{(i+1)}_1> 0} & 0 & \cdots & 0\\
    0 & \indic{\abf^{(i+1)}_2> 0} & \cdots & 0 \\
    \vdots & \vdots & \ddots & \vdots\\
    0 & 0 & 0 & \indic{\abf^{(i+1)}_{d_{i+1}}> 0}
    \end{pmatrix}
\, .
\end{align*}
Finally, 
\begin{align*}
\nabla_{\rbf^{(i)}} f_{i+1} &= 
\begin{pmatrix}
\Wbf^{(i+1)^\top}_{1, \,:} & \cdots & \Wbf^{(i+1)^\top}_{d_{i+1}, \,:}
\end{pmatrix} 
\nabla_{\abf^{(i+1)}} \activ{\abf^{(i+1)}} \\
&= 
\begin{pmatrix}
\indic{\abf^{(i+1)}_1> 0} \Wbf^{(i+1)^\top}_{1, \,:} & \cdots & \indic{\abf^{(i+1)}_{d_{i+1}}> 0} \Wbf^{(i+1)^\top}_{d_{i+1}, \,:}
\end{pmatrix}
\, .
\end{align*}
Now to compute $\nabla_{\Cbf} \Fnn$ let us start from the end with $\nabla_{\rbf^{(L-2)}} f_{L-1} \Wbf^{(L)^\top}$:
\begin{align*}
    \nabla_{\rbf^{(L-2)}} f_{L-1} \Wbf^{(L)^\top} &= 
    \begin{pmatrix}
    \indic{\abf^{(L-1)}_1> 0} \Wbf^{(L-1)^\top}_{1, \,:} & \cdots & \indic{\abf^{(L-1)}_{d_{L-1}}> 0} \Wbf^{(L-1)^\top}_{d_{L-1}, \,:}
    \end{pmatrix} \Wbf^{(L)^\top} \\
    &=
    \begin{pmatrix}
    \sum_{i_{L-1}=1}^{d_{L-1}} \indic{\abf^{(L-1)}_{i_{L-1}}> 0} \Wbf^{(L-1)^\top}_{i_{L-1}, \,:} \Wbf^{(L)}_{1,i_{L-1}}
    \end{pmatrix}
\, .
\end{align*}

Similarly, 
\begin{align*}
\nabla_{\rbf^{(L-3)}} f_{L-2} &=
\begin{pmatrix}
\indic{\abf^{(L-2)}_1> 0} \Wbf^{(L-2)^\top}_{1, \,:} & \cdots & \indic{\abf^{(L-2)}_{d_{L-2}}> 0} \Wbf^{(L-2)^\top}_{d_{L-2}, \,:}
\end{pmatrix} \, ,
\end{align*}
so the next computation gives
\begin{align*}
&\nabla_{\rbf^{(L-3)}} f_{L-2} \nabla_{\rbf^{(L-2)}} f_{L-1} \Wbf^{(L)^\top} \\
&=\nabla_{\rbf^{(L-3)}} f_{L-2}
\begin{pmatrix}
\sum_{i_{L-1}=1}^{d_{L-1}} \indic{\abf^{(L-1)}_{i_{L-1}}> 0} \Wbf^{(L-1)^\top}_{i_{L-1}, \,:} \Wbf^{(L)}_{1,i_{L-1}}
\end{pmatrix} \\
&= 
\nabla_{\rbf^{(L-3)}} f_{L-2}
\begin{pmatrix}
\sum_{i_{L-1}=1}^{d_{L-1}} \indic{\abf^{(L-1)}_{i_{L-1}}> 0} \Wbf^{(L-1)^\top}_{i_{L-1}, \,1} \Wbf^{(L)}_{1,i_{L-1}} \\
\vdots \\
\sum_{i_{L-1}=1}^{d_{L-1}} \indic{\abf^{(L-1)}_{i_{L-1}}> 0} \Wbf^{(L-1)^\top}_{i_{L-1}, \, d_{L-2}} \Wbf^{(L)}_{1,i_{L-1}}
\end{pmatrix} \\
&=
\begin{pmatrix}
\sum_{i_{L-2},i_{L-1}=1}^{d_{L-2}, d_{L-1}} \indic{\abf^{(L-2)}_{i_{L-2}}, \, \abf^{(L-1)}_{i_{L-1}}> 0} \Wbf^{(L-2)^\top}_{i_{L-2}, \,:} \Wbf^{(L-1)^\top}_{i_{L-1}, \,i_{L-2}} \Wbf^{(L)}_{1,i_{L-1}}
\end{pmatrix} 
\, .
\end{align*}

Now we can conclude by straightforward induction:
\begin{align*}
&\nabla_{\Cbf} \Fnn 
\\ &=
\begin{pmatrix}
\sum_{i_1, \ldots, i_{L-1} = 1}^{d_1, \ldots, d_{L-1}} \indic{\abf^{(1)}_{i_1}, \ldots, \abf^{(L-1)}_{i_{L-1}}>0} (\Wbf_{i_1,:}^{(1)})^\top (\Wbf_{i_2, i_1}^{(2)})^\top \ldots (\Wbf_{i_{L-1},i_{L-2}}^{(L-1)})^\top \Wbf_{1,i_{L-1}}^{(L)}
\end{pmatrix}
\, .
\end{align*}
Now to finish we need to compute $\nabla_{\Bbf} \mathcal{M}(\Bbf)$, to do so, we suppose that the convolutional layer $\mathcal{C}$ returns an reshaped rectified activation map $\Bbf$ such that we get the following gradient:
\begin{align*}
\begin{pmatrix}
\Dbf_1 & 0 & 0\\
0 & \Dbf_2 & 0 \\
\vdots &  & \vdots \\
0 & \cdots & \Dbf_{\Mshp \times \Mshpp}
\end{pmatrix} \in \Reals^{((k')^2 \times \Mshp \times \Mshpp) \times (\Mshp \times \Mshpp)} = \Reals^{(\Ashp \times \Ashpp) \times (\Mshp \times \Mshpp)}
\end{align*}
with $\Dbf_i \in \Reals^{(k')^2 \times 1}$ being the $i$-th block of indicator functions defined as follows:
\begin{align*}
    \Dbf_i \defeq 
    \begin{pmatrix}
     \indic{\max{\mbf_i} = \mbf_{i,(1,1)}} \\
      \indic{\max{\mbf_i} = \mbf_{i,(1,2)}} \\
    \vdots \\
    \indic{\max{\mbf_i} = \mbf_{i,(2,1)}} \\
    \vdots \\
    \indic{\max{\mbf_i} = \mbf_{i,(k',k')}}
    \end{pmatrix} \, ,
\end{align*}
where $\mbf_{i, (p,q)} \defeq \left( \mbf_i \right)_{p,q}$ and $\mbf_i \defeq \Bbf_{\Pshp (i'-1)+1:\Pshp i',\Pshp (j'-1)+1:\Pshp j'} \in \Reals_+^{k' \times k'}, \, (i', j') \defeq ((i-1) / k' + 1, \, (i-1 \mod k') + 1 )$ is the $i$-th patch of $\Bbf$ (the patch are ordered from left to right, starting from the top of $\Bbf$). Also note that $\cdot / \cdot$ is the integer division.

Finally, we can compute $\nabla_{\Bbf} f (\Bbf)$ as follows: 
\begin{align}
\begin{split}
    \nabla_{\Bbf} f (\Bbf) 
    &= \nabla_{\Bbf} \mathcal{M} \cdot  \nabla_{\mathcal{M} (\Bbf)} \Fnn \\
    &=
    \begin{pmatrix}
    \Dbf_1 \rho_1 \\
    \vdots \\
    \Dbf_{\Mshp \times \Mshpp} \rho_{h' \times w'}
    \end{pmatrix} \\
    &\in \Reals^{(\Ashp \times \Ashpp) \times 1}
\end{split}
\end{align}
where $\rho_b \in \Reals$, $b \in [h' \times w']$ is defined as follows:
\begin{align*}
    \rho_b \defeq \sum_{i_1, \ldots, i_{L-1} = 1}^{d_1, \ldots, d_{L-1}} \indic{\abf^{(1)}_{i_1}, \ldots, \abf^{(L-1)}_{i_{L-1}}>0} (\Wbf_{i_1, b}^{(1)})^\top (\Wbf_{i_2, i_1}^{(2)})^\top \ldots (\Wbf_{i_{L-1},i_{L-2}}^{(L-1)})^\top \Wbf_{1,i_{L-1}}^{(L)} \, .
\end{align*}

To compute $\alpha$, we simply take the average of the components of the previous display, $\nabla_{\Bbf} f (\Bbf)$, which yields
\begin{align*}
    \alpha &= \frac{1}{\Ashp \Ashpp} \sum_{k = 1}^{(k')^2} \sum_{j=1}^{\Mshp \times \Mshpp} \Dbf_{j,k} \rho_j \\
    &= \frac{1}{\Ashp \Ashpp} \sum_{j=1}^{\Mshp \times \Mshpp} \left(\sum_{k = 1}^{(k')^2} \Dbf_{j,k} \right) \rho_j \\
    &= \frac{1}{\Ashp \Ashpp} \sum_{j=1}^{\Mshp \times \Mshpp} \rho_j \, ,
\end{align*}
since $\sum_{k = 1}^{(k')^2} \Dbf_{j,k} = 1$ with fixed $j$ (it is also the case in practice when using automatic differentiation) and $\Dbf_{j,k} \defeq \left(\Dbf_{j}\right)_k$.
\qed 


\section{Proof of Theorem~\ref{th:expected-gradcam}}
\label{sec:proof-main}


Before jumping into the proof, let us remark that the left-hand side of Eq.~\eqref{eq:gradcam-lower-bound} is non-negative, thus there is nothing to prove if $\norm{\mbf}_2=0$. 
Thus we will assume that $\norm{\mbf}_2>0$ from now on. 

%
\paragraph{Separating the randomness. }
There are two sources of randomness: the weights of the filters $\Fbf = (\Fbf^{(1)},\ldots,\Fbf^{(V)})$ and the coefficients of the linear layer $\Wbf$. 
We start this proof by dissociating these two sources of randomness. 
More precisely, following Definition~\ref{def:gradcam}, and using the law of total expectation, we compute the expected GradCAM heatmap at coordinates $(i,j)$ as 
\begin{equation}
\label{eq:cond-expec}
\expec{\activ{\sum_{q=1}^{V} \alpha_q \Bbf_{i,j}^{(q)}}} = \expec{\condexpec{\activ{\sum_{q=1}^{V} \alpha_q \Bbf_{i,j}^{(q)}}}{\Fbf}}
\, .
\end{equation}
Using the computed GradCAM coefficient in Proposition~\ref{prop:l-hidden-layer} with $L=1$, we get $\alpha_q = \frac{1}{hw} \sum_{p=1}^{h' w'} \Wbf_p^{(q)}$. 
Since we assume that the weights of the linear layer are i.i.d. $\gaussian{0}{\tau^2}$, 
for the upper part, and $0$ for the lower part,
$\alpha_q$ is a centered Gaussian with variance $(\frac{\tau}{h w})^2 \frac{h' w'}{2}$.
Now, we recall that $\Bbf$ does not depend on $\Wbf$. 
Therefore, conditionally to $\Fbf^{(q)}$, $\alpha_q \Bbf_{i,j}^{(q)}$ is a centered Gaussian with variance 
\[
\left(\Bbf_{i,j}^{(q)}\right)^2 \left(\frac{\tau}{h w}\right)^2 \frac{h' w'}{2}
\, .
\]
We deduce that 
\begin{equation}
\label{eq:sum-distribution}
\sum_{q=1}^{V} \alpha_q \Bbf_{i,j}^{(q)} | \Fbf \sim \gaussian{0}{\sum_{q=1}^{V} \left(\Bbf_{i,j}^{(q)}\right)^2 \left(\frac{\tau}{h w}\right)^2 \frac{h' w'}{2}}
\, .
\end{equation}

\paragraph{Computing expectations. }
Since, conditionally to $\Fbf$, $\sum_q \alpha_q \Bbf^{(q)}$ is a a centered Gaussian with variance given by Eq.~\eqref{eq:sum-distribution}, we can use Lemma~\ref{lemma:expec-relu} to compute 
\begin{align*}
\condexpec{\activ{\sum_{q=1}^{V} \alpha_q \Bbf_{i,j}^{(q)}}}{\Fbf} &= \sqrt{\left(\frac{\tau}{h w}\right)^2 \frac{h' w'}{2} \sum_{q=1}^{V} \left(\Bbf_{i,j}^{(q)}\right)^2} \phi(0) \\
&= \frac{\tau}{2 h w \sqrt{\pi} } \sqrt{h' w'\sum_{q=1}^{V} \left(\Bbf_{i,j}^{(q)}\right)^2} 
\, .
\end{align*}
Coming back to Eq.~\eqref{eq:cond-expec}, we deduce that 
\begin{equation}
\label{eq:key-computation}
\expec{\activ{\sum_{q=1}^{V} \alpha_q \Bbf_{i,j}^{(q)}}} = \frac{\tau}{2 h w \sqrt{\pi}} \sqrt{h' w'} \expec{\sqrt{\sum_{q=1}^{V} \left(\Bbf_{i,j}^{(q)}\right)^2}} 
\, .
\end{equation}

\paragraph{Lower bound. }
Up to the best of our knowledge, there is no closed-form expression for Eq.~\eqref{eq:key-computation}, and we now proceed to find a lower bound to this expression. 
Let $Y \defeq  \sum_{q=1}^{V} \left(\Bbf_{i,j}^{(q)}\right)^2$.
By Lemma~\ref{lemma:conv-gaussian} and~\ref{lemma:moments-squaredrelu}, $\expec{Y} = V \frac{(\tau \norm{\mbf}_2)^2}{2}$ and $\var{Y} = \frac{5}{4} V (\tau \norm{\mbf}_2)^4$ with $\mbf \defeq \Img_{i:i+k-1, j:j+k-1}$ a patch at index $(i,j)$ in the image $\Img$.
By Chebyshev's inequality, for any $t>0$,
\begin{equation}
\label{eq:chebyshev}
\proba{Y \geq \expec{Y} - t} \geq 1 - \frac{\var{Y}}{t^2}
\, .
\end{equation}
Let us set $t = \frac{\expec{Y}}{2}>0$ in the previous display, in this way 
\[
\frac{\var{Y}}{t^2}= \frac{5}{4}V\tau^4\norm{m}^4 \cdot \left(\frac{V^2 \tau^4\norm{m}^4}{2^2\cdot 2^2}\right)^{-1} = \frac{20}{V}
\, .
\]
Since $\norm{\mbf}_2 > 0$, 
\[
\proba{Y \geq \frac{\expec{Y}}{2}} \geq 1 - \frac{20}{V}
\, .
\]
We now conclude the proof writing
\begin{align*}
    &\expec{\activ{\sum_{q=1}^{V} \alpha_q \Bbf_{i,j}^{(q)}}} \\
    &= \frac{\tau}{2 h w \sqrt{\pi}} \sqrt{h' w'} \left( \condexpec{\sqrt{Y}}{Y \geq \frac{\expec{Y}}{2}} \proba{Y \geq \frac{\expec{Y}}{2}} \right. \\
    &\qquad \left. + \condexpec{\sqrt{Y}}{Y < \frac{\expec{Y}}{2}} \proba{Y < \frac{\expec{Y}}{2}}  \right) \\
    &\geq \frac{\tau}{2 h w \sqrt{\pi}} \sqrt{h' w'} \condexpec{\sqrt{Y}}{Y \geq \frac{\expec{Y}}{2}} \proba{Y \geq \frac{\expec{Y}}{2}} \qquad \text{(since $Y>0$)} \\
    &\geq \frac{\tau}{2 h w \sqrt{\pi}} \sqrt{h' w'} \expec{\sqrt{\frac{\expec{Y}}{2}}} \left(1 -\frac{20}{V}\right) \\
    &= \frac{\sqrt{V h' w'}}{4 \sqrt{\pi}} \frac{\tau^2}{h w} \left(1 -\frac{20}{V}\right) \norm{m}_2 
\,  .
\end{align*}
\qed
\section{Additional Experiments}
In this section, we present an additional set of experiments on STACK-MIX (Figure~\ref{fig:qualitative-results-2}).

\begin{figure}[h]
    \centering
    \begin{tabular}{l@{\hskip 0.5cm}llll}
\rotatebox[origin=c]{90}{Input image} & \includegraphics[width=.2\linewidth,valign=m]{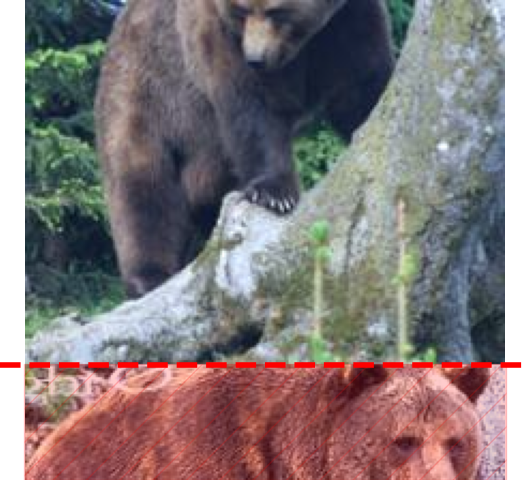} & \includegraphics[width=.2\linewidth,valign=m]{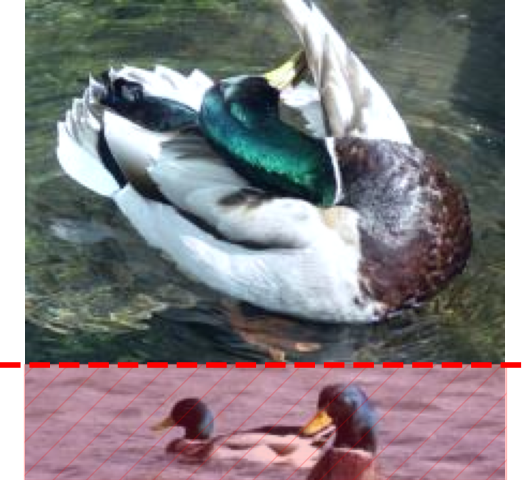} & \includegraphics[width=.2\linewidth,valign=m]{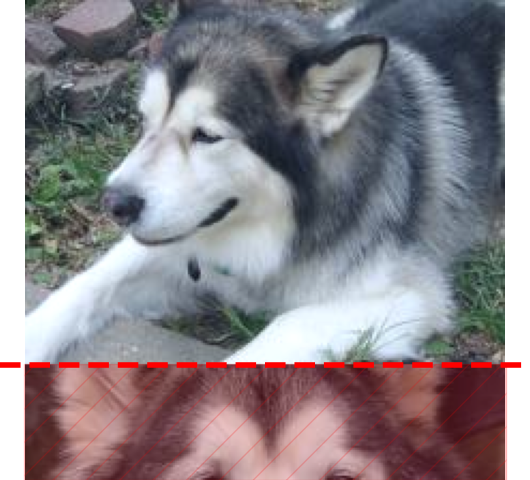}& \includegraphics[width=.2\linewidth,valign=m]{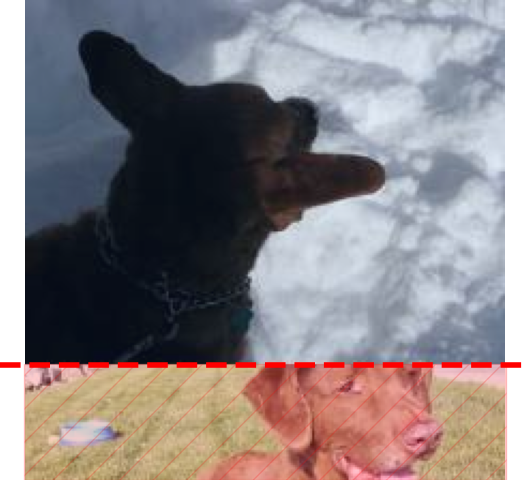}\\[1cm]
\rotatebox[origin=c]{90}{GradCAM} & \includegraphics[width=.2\linewidth,valign=m]{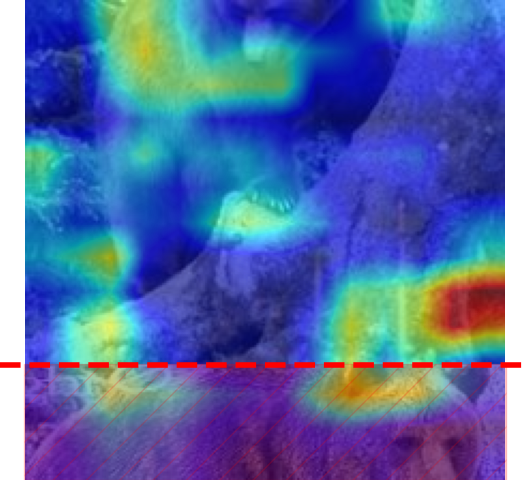} & \includegraphics[width=.2\linewidth,valign=m]{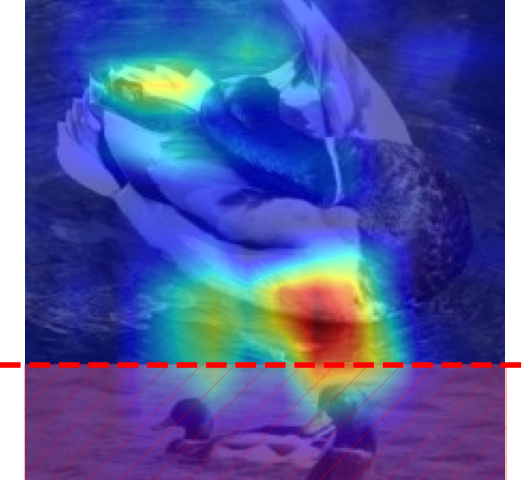} & \includegraphics[width=.2\linewidth,valign=m]{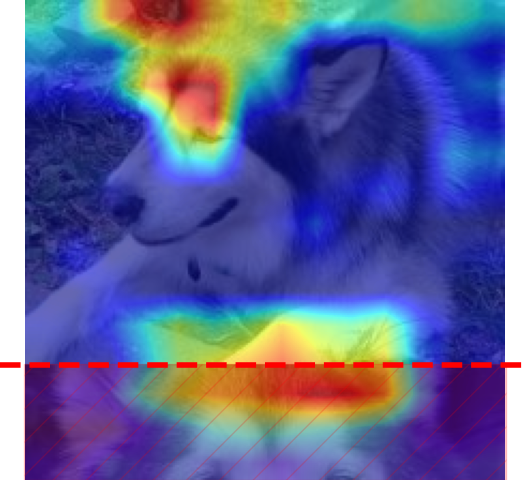}& \includegraphics[width=.2\linewidth,valign=m]{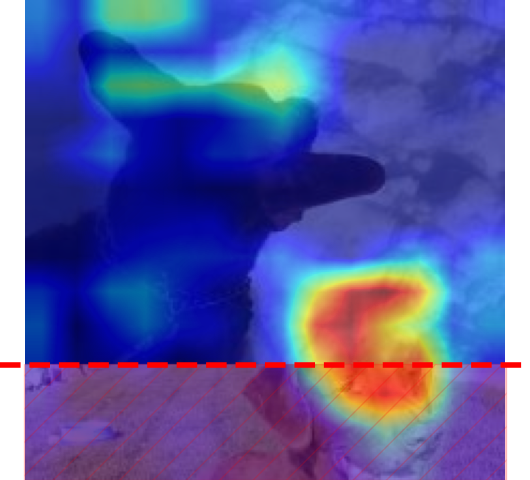}\\[1cm]
\rotatebox[origin=c]{90}{GradCAM++} & \includegraphics[width=.2\linewidth,valign=m]{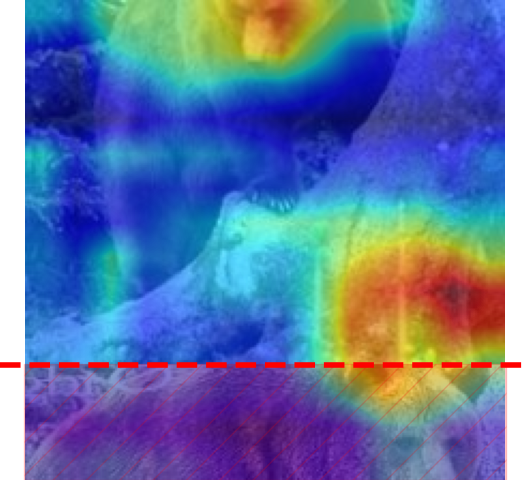} & \includegraphics[width=.2\linewidth,valign=m]{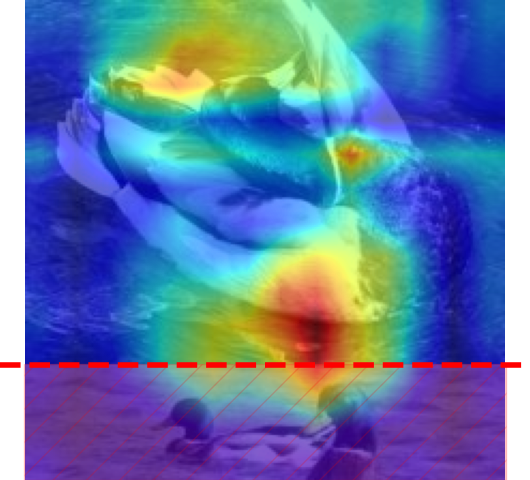} & \includegraphics[width=.2\linewidth,valign=m]{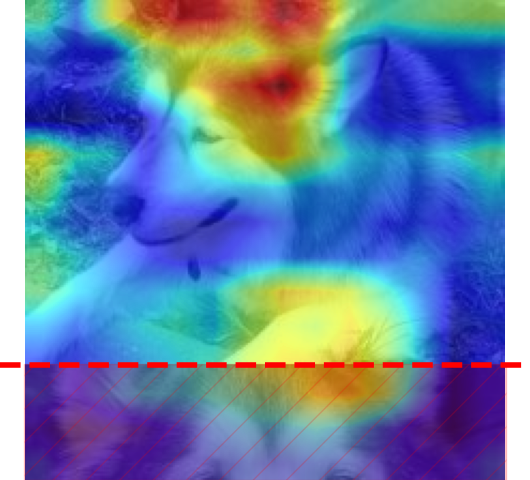}& \includegraphics[width=.2\linewidth,valign=m]{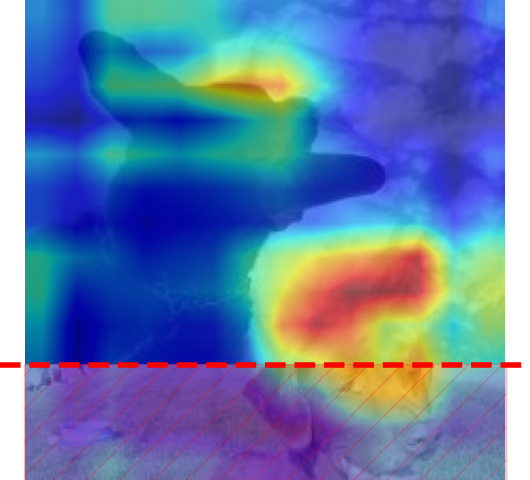}\\[1cm]
\rotatebox[origin=c]{90}{XGradCAM} & \includegraphics[width=.2\linewidth,valign=m]{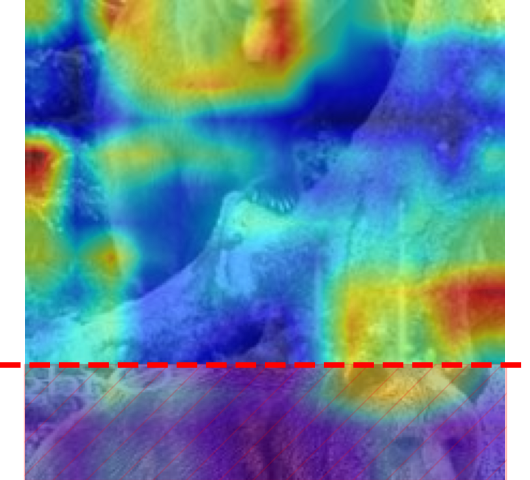} & \includegraphics[width=.2\linewidth,valign=m]{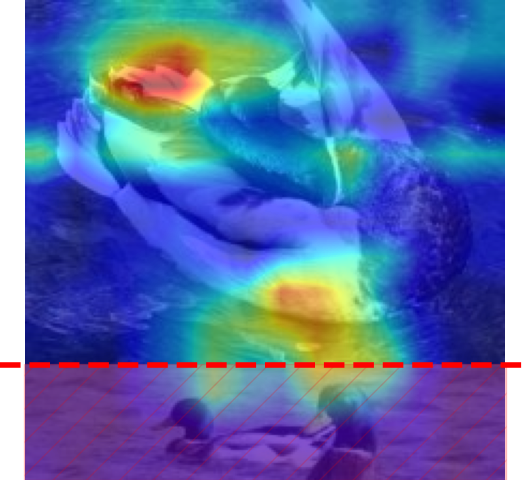} & \includegraphics[width=.2\linewidth,valign=m]{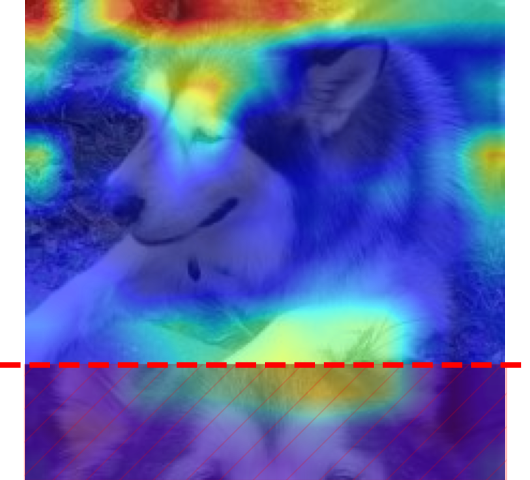}& \includegraphics[width=.2\linewidth,valign=m]{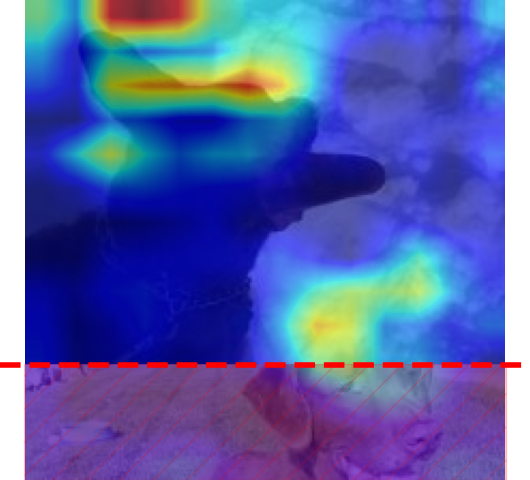}\\[1cm]
\rotatebox[origin=c]{90}{ScoreCAM} & \includegraphics[width=.2\linewidth,valign=m]{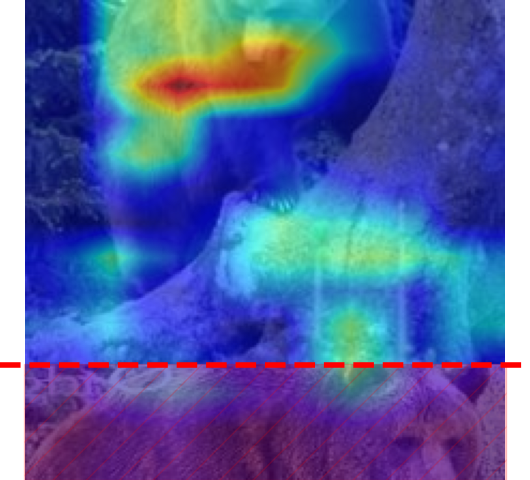} & \includegraphics[width=.2\linewidth,valign=m]{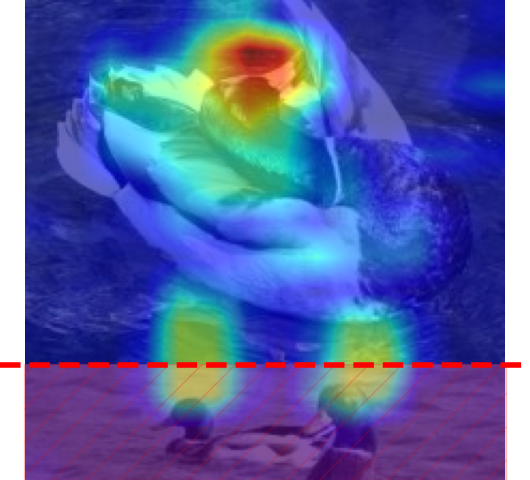} & \includegraphics[width=.2\linewidth,valign=m]{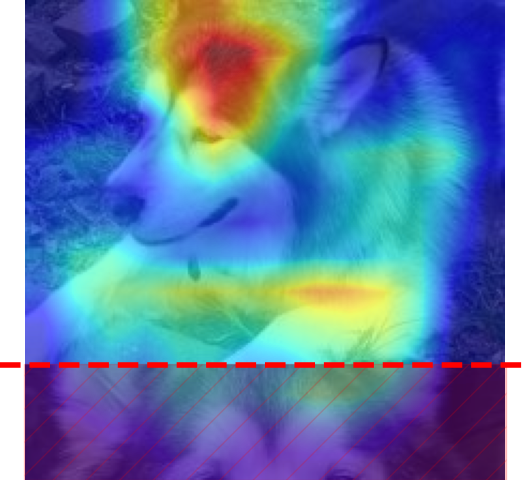}& \includegraphics[width=.2\linewidth,valign=m]{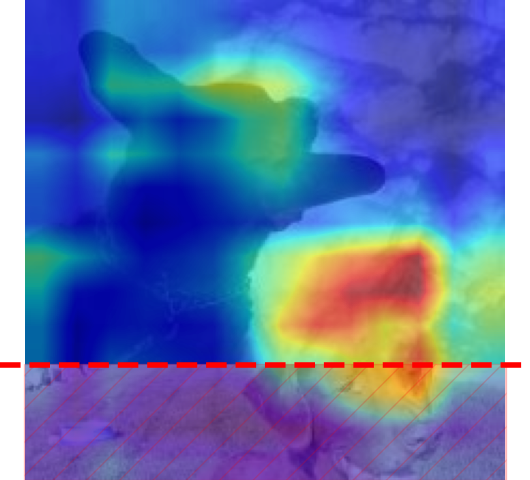}\\[1cm]
\rotatebox[origin=c]{90}{Opti-CAM} & \includegraphics[width=.2\linewidth,valign=m]{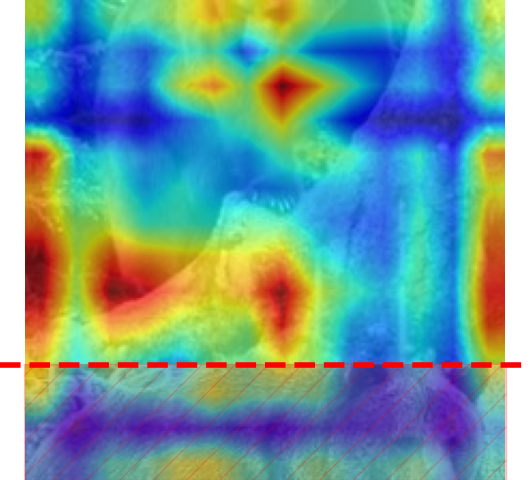} & \includegraphics[width=.2\linewidth,valign=m]{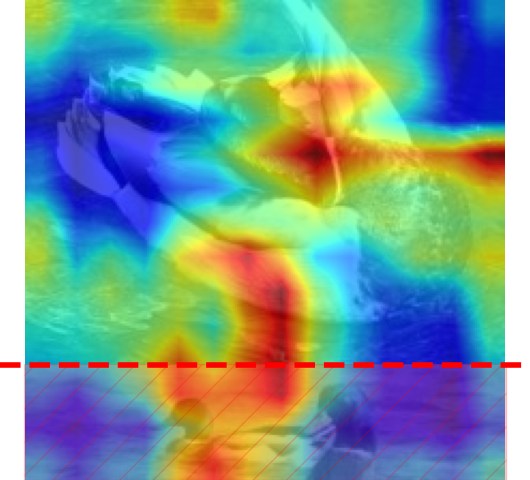} & \includegraphics[width=.2\linewidth,valign=m]{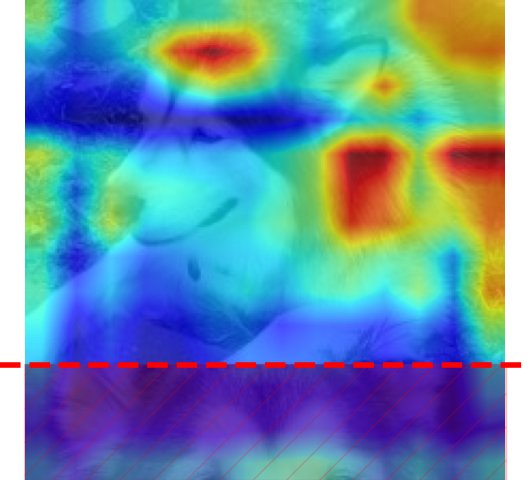}& \includegraphics[width=.2\linewidth,valign=m]{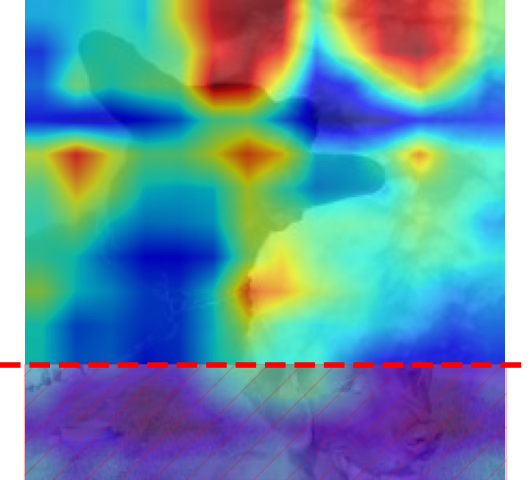}\\[1cm]
\rotatebox[origin=c]{90}{AblationCAM} & \includegraphics[width=.2\linewidth,valign=m]{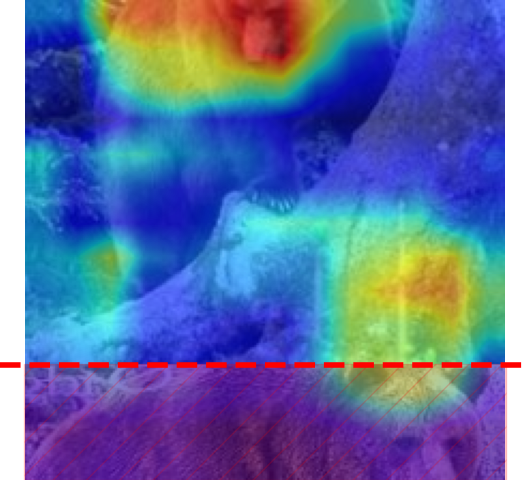} & \includegraphics[width=.2\linewidth,valign=m]{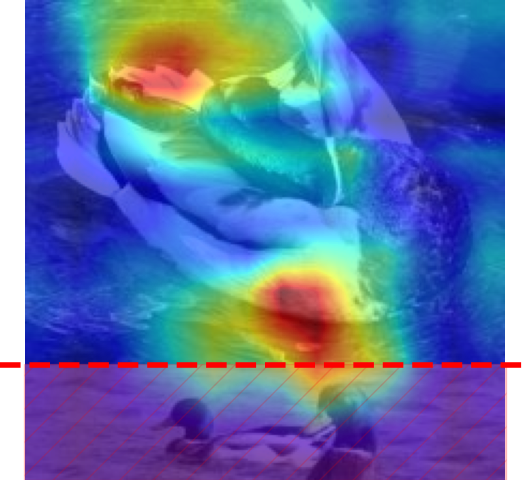} & \includegraphics[width=.2\linewidth,valign=m]{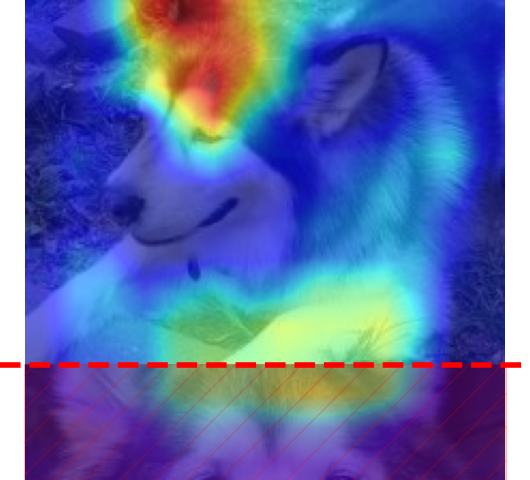}& \includegraphics[width=.2\linewidth,valign=m]{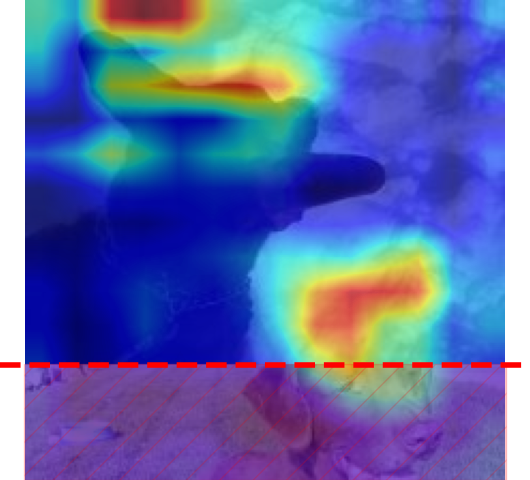}\\[1cm]
\rotatebox[origin=c]{90}{EigenCAM} & \includegraphics[width=.2\linewidth,valign=m]{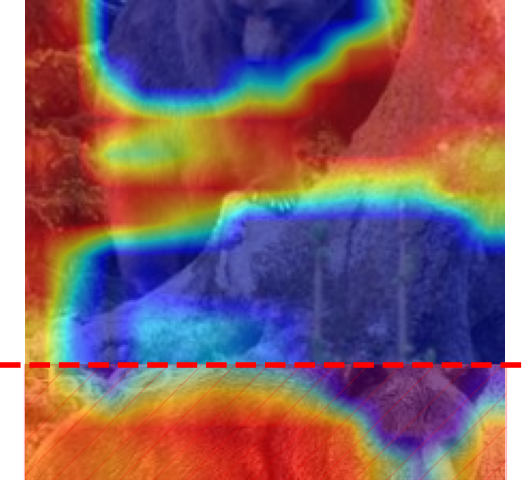} & \includegraphics[width=.2\linewidth,valign=m]{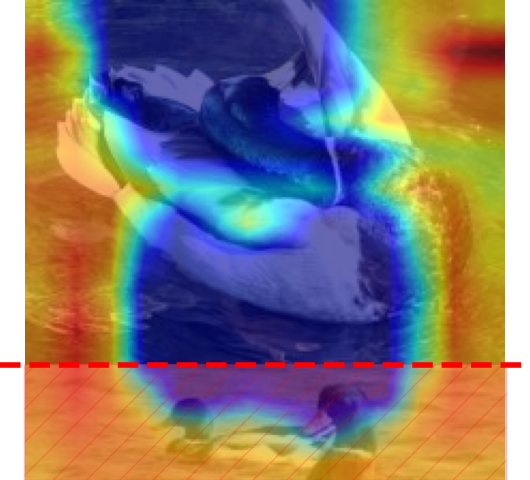} & \includegraphics[width=.2\linewidth,valign=m]{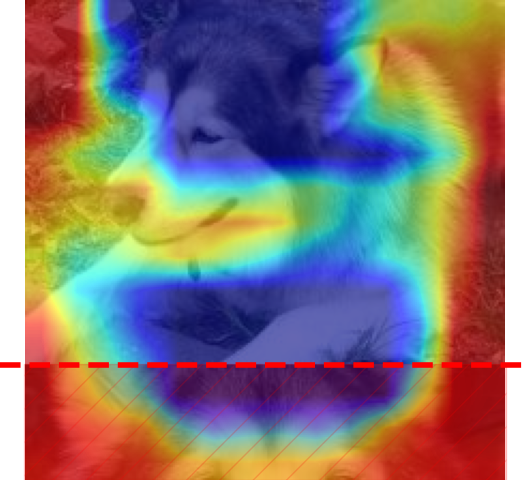}& \includegraphics[width=.2\linewidth,valign=m]{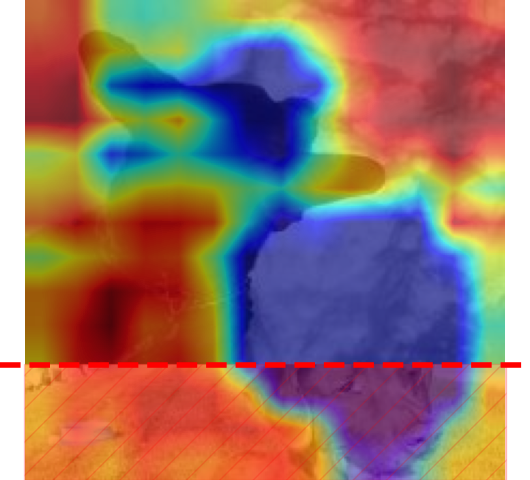}\\[1cm]
\rotatebox[origin=c]{90}{HiResCAM} & \includegraphics[width=.2\linewidth,valign=m]{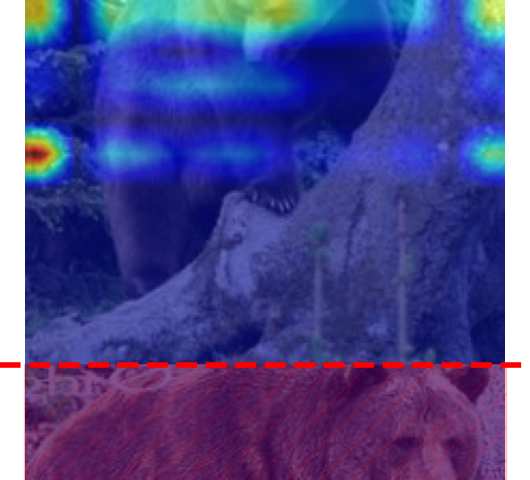} & \includegraphics[width=.2\linewidth,valign=m]{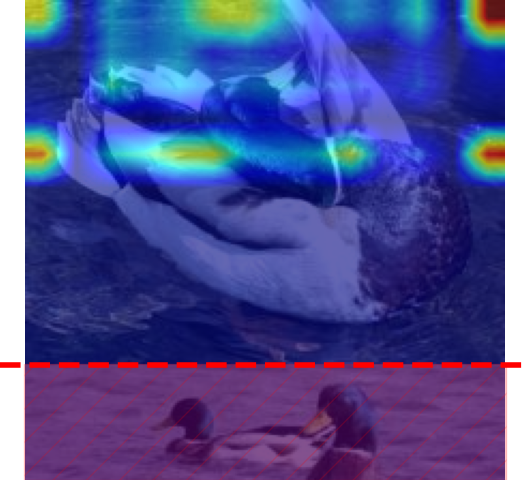} & \includegraphics[width=.2\linewidth,valign=m]{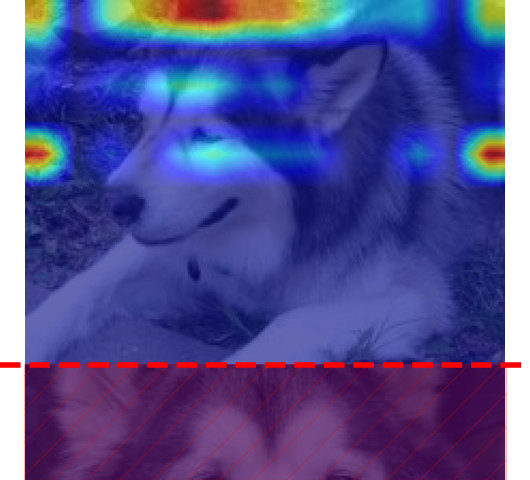}& \includegraphics[width=.2\linewidth,valign=m]{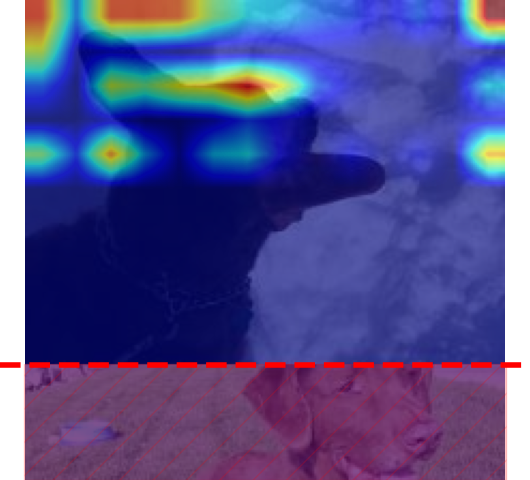}
    \end{tabular}

    \caption{Saliency maps given by the considered CAM-based methods for $\abrev{VGG}$. With the notable exception of HiresCAM, they all highlight parts of images from $\textsc{STACK-MIX}$ which are unseen by the network (this is denoted by the red, rectangular shape in the lower part of the image).}
\label{fig:qualitative-results-2}
\end{figure}

\end{document}